\documentclass{article}

\PassOptionsToPackage{square}{natbib}



     \usepackage[final]{neurips_2023}


\usepackage[utf8]{inputenc} 
\usepackage[T1]{fontenc}    
\usepackage{hyperref}       
\usepackage{url}            
\usepackage{booktabs}       
\usepackage{amsfonts}       
\usepackage{nicefrac}       
\usepackage{microtype}      
\usepackage{xcolor}         
\usepackage{amsmath,amsthm}
\usepackage{epsfig,subfigure}
\usepackage{algorithm,algorithmic}
\usepackage{framed}
\usepackage{setspace}
\usepackage[title]{appendix}
\usepackage{cleveref}
\crefname{section}{§}{§§}

\newtheorem{theorem}{Theorem}
\newtheorem{lemma}{Lemma}
\newtheorem{assump}{Assumption} 
\newtheorem{remark}{Remark}
\newtheorem{fact}{Fact}
\newtheorem{example}{Example}
\newtheorem{defin}{Definition}

\renewcommand{\appendixname}

\def\bsm#1{\boldsymbol{#1}}
\def\ca#1{\mathcal{#1}}
\def\h#1{\widehat{#1}}
\def\b#1{\mathbb{#1}}

\def\o#1{\overline{#1}}

\DeclareMathOperator*{\argmin}{arg\,min}

\title{Zero-Regret Performative Prediction Under Inequality Constraints}

%

\author{%
  Wenjing Yan \quad Xuanyu Cao \\
  Department of Electronic and Computer Engineering\\
  The Hong Kong University of Science and Technology\\
  \texttt{wj.yan@connect.ust.hk}, \texttt{eexcao@ust.hk}\\
}

\begin{document}

\maketitle

\begin{abstract}
  Performative prediction is a recently proposed framework where predictions guide decision-making and hence influence future data distributions. Such performative phenomena are ubiquitous in various areas, such as transportation, finance, public policy, and recommendation systems. To date, work on performative prediction has only focused on unconstrained scenarios, neglecting the fact that many real-world learning problems are subject to constraints. This paper bridges this gap by studying performative prediction under inequality constraints. Unlike most existing work that provides only performative stable points, we aim to find the optimal solutions. Anticipating performative gradients is a challenging task, due to the agnostic performative effect on data distributions. To address this issue, we first develop a robust primal-dual framework that requires only approximate gradients up to a certain accuracy, yet delivers the same order of performance as the stochastic primal-dual algorithm without performativity. Based on this framework, we then propose an adaptive primal-dual algorithm for location families. Our analysis demonstrates that the proposed adaptive primal-dual algorithm attains $\ca{O}(\sqrt{T})$ regret and constraint violations, using only $\sqrt{T} + 2T$ samples, where $T$ is the time horizon. To our best knowledge, this is the first study and analysis on the optimality of the performative prediction problem under inequality constraints. Finally, we validate the effectiveness of our algorithm and theoretical results through numerical simulations.
\end{abstract}

\section{Introduction}
Stochastic optimization plays a critical role in statistical sciences and data-driven computing, where the goal is to learn decision rules (e.g., classifiers) based on limited samples that generalize well to the entire population. Most prior studies on stochastic optimization \citep{heyman2004stochastic,karimi2019non,powell2019unified} rely on the assumption that the data of the entire population follows a static distribution. This assumption, however, does not hold in applications where the data distributions change dynamically in response to decision-maker's actions \citep{hardt2016strategic,dong2018strategic}. For instance, in transportation, travel time estimates \citep{mori2015review} influence routing decisions, resulting in realized travel times; in banking, credit evaluation criteria \citep{abdou2011credit} guide borrowers' behaviors and subsequently their credit scores; and in advertising, recommendations \citep{garcia2020social} shape customer preferences, leading to consumption patterns. Such interplay between decision-making and data distribution arises widely in various areas, such as transportation, finance, public policy, and recommendation systems.

The seminal work \citep{perdomo2020performative} formalized the phenomenon as \emph{performative prediction}, which represents the strategic responses of data distributions to the taken decisions via decision-dependent distribution maps \citep{quinonero2008dataset}. Since then,  an increasing body of research has been dedicated to performative prediction problems. Most existing studies are focused on identifying \emph{performative stable points} \citep{li2022state,qiangmulti,drusvyatskiy2022stochastic,brown2022performative,mendler2020stochastic,wood2021online,ray2022decision}, given the complexities of the decision-induced distribution shifts and the agnosticism of the decision-dependent distributions. The proposed algorithms are typically to iteratively retrain the deployed models until convergence. However, performative stability generally does not imply performative optimality. Aiming to achieve the optimal performance, a few recent works designed effective algorithms by leveraging rich performative feedback \citep{jagadeesan2022regret}, or by exploiting structural properties of the underlying distribution maps, such as linear structure of location family \citep{miller2021outside} or exponential structure of exponential family \citep{izzo2021learn}. 

All the aforementioned work on performative prediction is focused on unconstrained learning problems. However, in the real world, many performative prediction applications are subject to constraints \citep{detassis2021teaching,wood2022online}. Constraints can be used to ensure the satisfaction of desired properties, such as fairness, safety, and diversity. Examples include safety and efficiency constraints in transportation \citep{metz2021time}, relevance and diversity constraints in advertising \citep{khamis2020branding}, and risk tolerance and portfolio constraints in financial trading \citep{follmer2002convex}, etc. In addition, constraints can serve as side information to enhance the learning outcomes by narrowing the scope of exploration or by incorporating prior knowledge \cite{serafini2016logic,wu2018knowledge}. As performative shifts can rarely be analyzed offline, incorporating constraints on what constitutes safe exploration \citep{turchetta2019safe} or facilitates optimization \citep{wood2022stochastic} is of crucial importance.

Despite its importance, research on performative prediction under constraints has been neglected so far. Although some work \citep{izzo2021learn,piliouras2022multi} restricted decision variables to certain regions, this feasible set restriction was simply handled by projections. This paper bridges this gap by studying the performative prediction problem under inequality constraints, for which simple projection is inadequate to handle. Unlike most existing work that provides only performative stable points, we aim to find the optimal solutions. As aforementioned, finding performative optima is a challenging task because we now need to anticipate performative effect actively, rather than simply retrain models in a myopic manner. 

Tracking performative gradients is difficult, due to the agnostic performative effect on data distributions. To solve this problem, we develop a robust primal-dual framework that admits inexact gradients. We ask the following questions: \emph{How does the gradient approximation error affect the performance of the primal-dual framework? Under what accuracy, can the approximate gradients maintain the performance order of the stochastic primal-dual algorithm without performativity?  How to construct effective gradient approximations that attain the desired accuracy?} We answer the above thoroughly. Our idea hinges on enhancing gradient approximation with the structural knowledge of distribution maps. In particular, we follow existing studies \citep{miller2021outside,jagadeesan2022regret} and focus on the family of location maps. Location families exhibit a favorable linear structure for algorithm development while maintaining broad generality to model many real-world applications. Distribution maps of this type are ubiquitous throughout the performative prediction literature, such as strategic classification \citep{hardt2016strategic,perdomo2020performative}, linear regression \citep{miller2021outside}, email spam classification \citep{qiangmulti}, ride-share \citep{narang2022learning}, among others. Nevertheless, we emphasize that our robust primal-dual framework is applicable to other forms of distributions with effective gradient approximation methods.

To our best knowledge, this paper provides the first study and analysis on the optimality of performative prediction problems under inequality constraints. We highlight the following key contributions:
\begin{itemize}
  \item We develop a robust primal-dual framework that requires only approximate gradients up to an accuracy of $\ca{O}(\sqrt{T})$, yet delivers the same order of performance as the stochastic primal-dual algorithm without performativity, where $T$ is the time horizon. Notably, the robust primal-dual framework does not restrict the approximate gradients to be unbiased and hence offers more flexibility to the design of gradient approximation. 
  \item Based on this framework, we propose an adaptive primal-dual algorithm for location families, which consists of an online stochastic approximation and an offline parameter estimation for the performative gradient approximation. Our analysis demonstrates that the proposed algorithm achieves $\ca{O}(\sqrt{T})$ regret and constraint violations, using only $\sqrt{T} + 2T$ samples.  
\end{itemize}
Finally, we conduct experiments on two examples, namely multi-task linear regression and multi-asset portfolio. The numerical results validate the effectiveness of our algorithm and theoretical analysis.

\subsection{Related Work}

The study on performative prediction was initiated in \citep{perdomo2020performative}, where the authors defined the notion of performative stability and demonstrated how performative stable points can be found through repeated risk minimization and stochastic gradient methods. Since then, substantial efforts have been dedicated to identifying performative stable points in various settings, such as single-agent \citep{mendler2020stochastic,drusvyatskiy2022stochastic,brown2022performative}, multi-agent \citep{qiangmulti,piliouras2022multi}, games \citep{narang2022learning}, reinforcement learning \citep{mandal2023performative}, and online learning \citep{wood2021online,wood2022online}.

A few recent works aimed to achieve performative optimality, a more stringent solution concept than performative stability. In \citep{miller2021outside}, the authors evaluated the conditions under which the performative problem is convex and proposed a two-stage algorithm to find the performative optima for distribution maps in location families. Another paper on performative optimality is \citep{izzo2021learn}, which proposed a PerfGD algorithm by exploiting the exponential structure of the underlying distribution maps. Both works took advantage of parametric assumptions on the distribution maps. Alternatively, \citep{jagadeesan2022regret} proposed a performative confidence bounds algorithm by leveraging rich performative feedback, where the key idea is to exhaustively explore the feasible region with an efficient discarding mechanism. 

A closely related work is \citep{wood2022stochastic}, which studied stochastic saddle-point problems with decision-dependent distributions. This paper focused on performative stable points (equilibrium points), whereas we aim at the performative optima, which is more challenging. Another difference is that \citep{wood2022stochastic} only demonstrated the convergence of the proposed primal-dual algorithm in the limit, without providing an explicit finite-time convergence rate. In contrast, we provide $\ca{O}(\sqrt{T})$ regret and $\ca{O}(\sqrt{T})$ constraint violation bounds for the proposed algorithm in this paper.

\section{Problem Setup}

We study a performative prediction problem with loss function $\ell\left(\bsm{\theta}; Z\right)$, where $\bsm{\theta} \in \bsm{\Theta}$ is the decision variable, $Z\in \b{R}^{k}$ is an instance, and $\bsm{\Theta} \in \b{R}^{d}$ is the set of available decisions. Unlike in stationary stochastic optimization where distributions of instances are fixed, in performative prediction, the distribution of $Z$ varies with the decision variable $\bsm{\theta}$, i.e., $Z\sim \ca{D}(\bsm{\theta})$. In this paper, we consider that the decision variable $\bsm{\theta}$ is subject to a constraint $\mathbf{g}\left(\bsm{\theta}\right) \preceq \mathbf{0}$, where $\mathbf{g}: \bsm{\Theta} \rightarrow \b{R}^{m} $. The constraint $\mathbf{g}(\cdot)$ is imposed on $\bsm{\theta}$ to ensure certain properties, such as fairness, safety, and diversity, or to incorporate prior knowledge. We assume that $\mathbf{g}(\cdot)$ is available at the decision-maker in advance of the optimization. Ideally, the goal of the decision-maker is to solve the following stochastic problem:
\begin{align}
  \textstyle\min_{\bsm{\theta} \in \bsm{\Theta}} \quad \b{E}_{Z\sim \ca{D}(\bsm{\theta})} \ell\left(\bsm{\theta}; Z\right)  \quad
    {\rm s.t.}  \quad \mathbf{g}\left(\bsm{\theta}\right) \preceq \mathbf{0},  \label{Pro_def}
\end{align}
where $ \b{E}_{Z\sim \ca{D}(\bsm{\theta})} \ell\left(\bsm{\theta}; Z\right)$ is referred to as \emph{performative risk}, denoted by ${\rm PR}(\bsm{\theta})$.

Problem \eqref{Pro_def} is, however, impossible to be solved offline, because the distribution map $\ca{D}(\bsm{\theta})$ is unknown. Instead, the decision-maker needs to interact with the environment by making decisions to explore the underlying distributions. Given the online nature of this task, we measure the loss of a sequence of chosen decisions $\bsm{\theta}_1,\cdots,\bsm{\theta}_T$ by \emph{performative regret}, defined as
\begin{align*}
  {\rm Reg}(T) := \textstyle\sum_{t=1}^T \left(\b{E}{\rm PR}(\bsm{\theta}_t) - {\rm PR}(\bsm{\theta}_{\rm PO})\right),
\end{align*} 
where the expectation is taken over the possible randomness in the choice of $\{\bsm{\theta}_t\}_{t=1}^T$, and $\bsm{\theta}_{\rm PO}$ is the performative optimum, defined as 
\begin{align*}
  \bsm{\theta}_{\rm PO} \in \textstyle\argmin_{\bsm{\theta} \in \bsm{\Theta}}\quad \b{E}_{Z\sim \ca{D}(\bsm{\theta})} \ell\left(\bsm{\theta}; Z\right) \quad 
    {\rm s.t.}  \quad \mathbf{g}\left(\bsm{\theta}\right) \preceq \mathbf{0}.
\end{align*}
 
Performative regret measures the suboptimality of the chosen decisions relative to the performative optima. Another performance metric for problem \eqref{Pro_def} on evaluating the decision sequence $\{\bsm{\theta}_t\}_{t=1}^T$ is \emph{constraint violation}, given by 
\begin{align*}
  {\rm Vio}_{i}(T):= \textstyle\sum_{t=1}^T g_{i}\left(\bsm{\theta}_t\right),\forall i\in[m],
\end{align*}
where we use the symbol $[m]$ to represent the integer set $\{1,\cdots,m\}$ throughout this paper.

Applications pertaining to problem \eqref{Pro_def} are ubiquitous. Next is an example.
\begin{example}[\textbf{Multi-Asset Portfolio}] \label{examp}
  Consider a scenario where an investor wants to allocate his/her investment across a set of $l$ assets, such as stocks, bonds, and commodities. The objective is to maximize the expected return subject to certain constraints, including liquidity, diversity, and risk tolerance. Let $z_i$ denote the rate of return of the $i$th asset and $\theta_i$ denote its weight of allocation, $\forall i\in[l]$. The investment can affect the future rates of return of the assets and, consequently, the overall expected return of the portfolio. For example, excessive investment in a particular asset can lead to a declination of the rate of return of other assets. Let $\mathbf{z} = [z_1,\cdots,z_l]^{\top}$ and $\bsm{\theta} = [\theta_1,\cdots, \theta_l]^{\top}$. Then, the expected return of the portfolio is $\b{E}[r_p] := \b{E}_{\mathbf{z}\sim\ca{D}(\bsm{\theta})} \mathbf{z}^{\top}\bsm{\theta}$. Typically, the risk of the portfolio is measured by the variance of its returns, given by $\bsm{\theta}^{\rm T}\bsm{\Psi}\bsm{\theta}$, where $\bsm{\Psi}$ is the covariance matrix of $\mathbf{z}$. To model liquidity, one common approach is to use the bid-ask spread, which measures the gap between the highest price a buyer is willing to pay (the bid) and the lowest price a seller is willing to accept (the ask) for a particular asset. Denote the vector of the bid-ask spread of the $l$ assets by $\mathbf{s}=[s_1,\cdots,s_l]^{\top}$. Then, a liquidity constraint on the portfolio can be defined as $\mathbf{s}^{\top} \bsm{\theta} \leq S$, where $S$ is the maximum allowable bid-ask spread. The multi-asset portfolio problem can be formulated as: 
  \begin{align*}
    \textstyle\min_{\bsm{\theta}} ~ -\b{E}_{\mathbf{z}\sim\ca{D}(\bsm{\theta})} \mathbf{z}^{\top}\bsm{\theta} \quad
    {\rm s.t.}~ \textstyle\sum_{i=1}^l \theta_i \leq 1,~   \mathbf{0} \preceq  \bsm{\theta} \preceq \epsilon \cdot \mathbf{1},~ \mathbf{s}^{\top} \bsm{\theta} \leq S, ~ \text{and}~ \bsm{\theta}^{\rm T}\bsm{\Psi}\bsm{\theta} \leq \rho,
  \end{align*}
   where $\epsilon$ restricts the maximum amount of investment to one asset, and $\rho$ is the risk tolerance threshold.
\end{example}

In this paper, our goal is to design an online algorithm that achieves both sublinear regret and sublinear constraint violations with respect to the time horizon $T$, i.e., ${\rm Reg}(T)\leq o(T)$ and ${\rm Vio}_{i}(T)\leq o(T)$, for all $i\in[m]$. Then, the time-average regret satisfies ${\rm Reg}(T)/T\leq o(1)$, and the time-average constraint violations satisfy ${\rm Vio}_{i}(T)/T\leq o(1)$, for all $i\in[m]$. Both asymptotically go to zero as $T$ goes to infinity. Therefore, the performance of the decision sequence $\{\bsm{\theta}_t\}_{t=1}^T$ generated by the algorithm approaches that of the performative optimum $\bsm{\theta}_{\rm PO}$ as $T$ goes to infinity.

\section{Adaptive Primal-Dual Algorithm}
\subsection{Robust Primal-Dual Framework}
\label{Sec_robust}

In this subsection, we develop a robust primal-dual framework for the performative prediction problem under inequality constraints. Our approach involves finding a saddle point for the regularized Lagrangian of problem \eqref{Pro_def}. The Lagrangian, denoted by $\ca{L}(\bsm{\theta}, \bsm{\lambda})$, is defined as
\begin{align}
	\ca{L}(\bsm{\theta}, \bsm{\lambda}) & :={\rm PR}(\bsm{\theta}) + \bsm{\lambda}^{\top}\mathbf{g}(\bsm{\theta}) - \textstyle\frac{\delta \eta}{2}\|\bsm{\lambda}\|^{2},  \label{Equ_Lag}
\end{align}
where $\bsm{\theta}$ is the primal variable (decision), $\bsm{\lambda}$ is the dual variable (multiplier), $\eta>0$ is the stepsize of the algorithm, and $\delta>0$ is a control parameter. In \eqref{Equ_Lag}, we add the regularizer $-\frac{\delta\eta}{2}\|\bsm{\lambda}\|^2$ to suppress the growth of the multiplier $\bsm{\lambda}$, so as to improve the stability of the algorithm. 

To find the saddle point of the Lagrangian $\ca{L}(\bsm{\theta}, \bsm{\lambda})$, we utilize alternating gradient update on the primal variable $\bsm{\theta}$ and the dual variable $\bsm{\lambda}$. The gradients of $\ca{L}(\bsm{\theta}, \bsm{\lambda})$ with respect to $\bsm{\theta}$ and $\bsm{\lambda}$ are respectively given by
\begin{align}
  \nabla_{\bsm{\theta}} \ca{L}(\bsm{\theta}, \bsm{\lambda})=& \nabla_{\bsm{\theta}} {\rm PR}(\bsm{\theta}) + \bsm{\lambda}^{\top}\nabla_{\bsm{\theta}}\mathbf{g}(\bsm{\theta}), \label{primal_grad} \\
	\nabla_{\bsm{\lambda}} \ca{L}(\bsm{\theta}, \bsm{\lambda}) =& \mathbf{g}(\bsm{\theta}) - \delta \eta \bsm{\lambda}. \notag
\end{align}
In \eqref{primal_grad}, $\nabla_{\bsm{\theta}} {\rm PR}(\bsm{\theta})$ is the gradient of the performative risk ${\rm PR}(\bsm{\theta})$, given by 
\begin{align}
  \nabla_{\bsm{\theta}} {\rm PR}(\bsm{\theta}) = \b{E}_{Z\sim \ca{D}(\bsm{\theta})} \nabla_{\bsm{\theta}}  \ell\left(\bsm{\theta}; Z\right)
  + \b{E}_{Z\sim \ca{D}(\bsm{\theta})} \ell\left(\bsm{\theta}; Z\right) \nabla_{\bsm{\theta}} \log p_{\bsm{\theta}}(Z),  \label{PR_grad}
\end{align}
where $p_{\bsm{\theta}}(Z)$ is the density of $\ca{D}(\bsm{\theta})$. 

Since the data distribution $\ca{D}(\bsm{\theta})$ is unknown, the exact gradient of the performative risk ${\rm PR}(\bsm{\theta})$ is unavailable, posing a significant challenge to the algorithm design. 
In this paper, we tackle this issue using a robust primal-dual framework. The main idea is to construct gradient approximations from data and then perform alternating gradient updates based on the inexact gradients. Denote by $\nabla_{\bsm{\theta}}\h{\rm PR}_t(\bsm{\theta})$ the approximation of the gradient $\nabla_{\bsm{\theta}} {\rm PR}(\bsm{\theta})$ at the $t$th iteration. Correspondingly, an approximation for the Lagrangian gradient $\nabla_{\bsm{\theta}} \ca{L}(\bsm{\theta}, \bsm{\lambda}) $ is given by
\begin{align*}
  \nabla_{\bsm{\theta}} \h{\ca{L}}_t(\bsm{\theta}, \bsm{\lambda}) :=& \nabla_{\bsm{\theta}}\h{\rm PR}_t(\bsm{\theta}) + \bsm{\lambda}^{\top}\nabla_{\bsm{\theta}}\mathbf{g}(\bsm{\theta}), \forall t\in[T].
\end{align*}
 The robust alternating gradient update is then performed as 
\begin{align} 
  \bsm{\theta}_{t+1} &= \Pi_{\mathbf{\Theta}}\left(\bsm{\theta}_t - \eta \nabla_{\bsm{\theta}} \h{\ca{L}}_t(\bsm{\theta}_t, \bsm{\lambda}_t) \right),\label{Eq_primal_grad}\\
  \bsm{\lambda}_{t+1} & = \left[\bsm{\lambda}_{t} + \eta \nabla_{\bsm{\lambda}} \ca{L}_t(\bsm{\theta}_t, \bsm{\lambda}_t)\right]^{+}, \forall t\in[T].\label{Eq_dual_grad}
\end{align}
Then, the next question is how to construct effective gradient approximations that achieve satisfactory performance.

By \eqref{PR_grad}, the expectation over $\ca{D}(\bsm{\theta})$ in the gradient $\nabla_{\bsm{\theta}} {\rm PR}(\bsm{\theta})$ can be approximated by samples, while the unknown probability density $p_{\bsm{\theta}}(Z)$ presents the main challenge. Most existing research circumvented this problem by omitting the second term in $\nabla_{\bsm{\theta}} {\rm PR}(\bsm{\theta})$. This essentially gives a performative stable point. However, as pointed out in \citep{miller2021outside}, performative stable points can be arbitrarily sub-optimal, leading to vacuous solutions. Nevertheless, if we have further knowledge about the parametric structure of $p_{\bsm{\theta}}(Z)$, the complexity of gradient approximation can be greatly reduced. In this regard, \citep{miller2021outside} and \citep{jagadeesan2022regret} exploited the linear structure of location families, and \citep{izzo2021learn} considered distribution maps with exponential structure. Following \citep{miller2021outside,jagadeesan2022regret}, we focus on the family of location maps in this paper because it exhibits a favorable linear structure for algorithm development while maintaining broad generality in various applications. Next, we develop an adaptive algorithm for problem \eqref{Pro_def} with location family distribution maps based on the above robust primal-dual framework.

\subsection{Algorithm Design for Location Families}

In the setting of location families, the distribution map depends on $\bsm{\theta}$ via a linear shift, i.e.
\begin{align}
    Z \sim \ca{D}(\bsm{\theta}) \Leftrightarrow Z \overset{d}{=} Z_0 + \mathbf{A}\bsm{\theta},    \label{Eq_dis}
\end{align}
where $Z_0 \sim \ca{D}_0$ is a base component representing the data without performativity, $\mathbf{A} \in\b{R}^{k\times d}$ captures the performative effect of decisions, and $\overset{d}{=}$ means equal in distribution. Denote by $\bsm{\Sigma}$ the covariance matrix of the base distribution $\ca{D}_0$.
Note that $\ca{D}_0$ is still unknown. Plugging the distribution definition \eqref{Eq_dis} into \eqref{PR_grad}, we obtain a more explicit expression for $\nabla_{\bsm{\theta}}{\rm PR}(\bsm{\theta})$ as
\begin{align*} 
	\nabla_{\bsm{\theta}}{\rm PR}(\bsm{\theta})  = \b{E}_{Z_0\sim \ca{D}_0} \left[\nabla_{\bsm{\theta}}\ell\left(\bsm{\theta}; Z_0 + \mathbf{A}\bsm{\theta}\right) + \mathbf{A}^{\top}\nabla_{Z}\ell\left(\bsm{\theta}; Z_0 + \mathbf{A}\bsm{\theta}\right) \right]. 
\end{align*}
To compute $\nabla_{\bsm{\theta}}{\rm PR}(\bsm{\theta}) $, we still need to address two challenges, i.e., the unknown base distribution $\ca{D}_0$ and the unknown performative parameter $\mathbf{A}$. We tackle them as follows.

\textbf{Offline Stochastic Approximation:} We approximate the base distribution $\ca{D}_0$ offline by sample average approximation \citep{kleywegt2002sample}. Specifically, before the start of the alternating gradient update, we first draw $n$ samples $\{Z_{0, i}\}_{i=1}^n$ from $\ca{D}(\mathbf{0})$. These samples are used to approximate the expectation over $Z_{0}$ throughout the algorithm iteration. Hence, the sample complexity from this expectation approximation is fixed at $n$.

\begin{algorithm}[t]
  \caption{Adaptive Primal-Dual Algorithm}
  \label{alg_ASPA}
  \begin{algorithmic}[1] 
  \STATE Take decision $\bsm{\theta} = \mathbf{0}$ and observe $n$ samples $Z_{0,i}\sim\ca{D}_0$, $\forall i\in[n]$.
  \STATE Initialize $\bsm{\theta}_1 \in \mathbf{\Theta}$ arbitrarily. Set $\bsm{\lambda}_{1} = \mathbf{0}$ and $\h{\mathbf{A}}_{0} = \mathbf{0}$.
  \FOR{$t=1$ to $T$} 
  \STATE Take decision $\bsm{\theta}_{t}$ and observe $Z_{t} \sim \ca{D}\left(\bsm{\theta}_{t}\right)$. 
  \STATE Generate noise $\mathbf{u}_{t}$. 
  \STATE Take decision $\bsm{\theta}_{t} + \mathbf{u}_{t}$ and observe $Z^{\prime}_{t} \sim \ca{D}\left(\bsm{\theta}_{t}+\mathbf{u}_{t}\right)$.
  \STATE Update parameter estimate by $\h{\mathbf{A}}_{t} = \h{\mathbf{A}}_{t-1} - \zeta_t \left( Z^{\prime}_t - Z_t  - \h{\mathbf{A}}_{t-1}\mathbf{u}_t \right)\mathbf{u}_t^{\top}$.
  \STATE Update gradient approximation $\nabla_{\bsm{\theta}}\h{\rm PR}_t(\bsm{\theta}_t)$ by \eqref{Eq_grad_approx}.
  \STATE Compute $\nabla_{\bsm{\theta}} \h{\ca{L}}_t(\bsm{\theta}_t, \bsm{\lambda}_t) = \nabla_{\bsm{\theta}}\h{\rm PR}_t(\bsm{\theta}_t) + \bsm{\lambda}_t^\top \nabla_{\bsm{\theta}}\mathbf{g}(\bsm{\theta}_t) $.  
  \STATE Update the primal variable by $\bsm{\theta}_{t+1} = \Pi_{\mathbf{\Theta}}\left(\bsm{\theta}_t - \eta \nabla_{\bsm{\theta}} \h{\ca{L}}_t(\bsm{\theta}_t, \bsm{\lambda}_t) \right)$.
  \STATE Compute $\nabla_{\bsm{\lambda}} \ca{L}(\bsm{\theta}_t, \bsm{\lambda}_t) = \mathbf{g}(\bsm{\theta}_t) - \delta \eta \bsm{\lambda}_t$.
  \STATE Update the dual variable by $\bsm{\lambda}_{t+1} = \left[\bsm{\lambda}_{t} + \eta \nabla_{\bsm{\lambda}} \ca{L}(\bsm{\theta}_t, \bsm{\lambda}_t)\right]^{+}$.
  \ENDFOR 
  \end{algorithmic}
\end{algorithm}

\textbf{Online Parameter Estimation:} We estimate the parameter $\mathbf{A}$ via online least squares. In each round of the alternating gradient update, we first take the current decision $\bsm{\theta}_t$ and its perturbed point $\bsm{\theta}_t + \mathbf{u}_t$ to observe samples $Z_{t} \sim \ca{D}\left(\bsm{\theta}_{t}\right)$ and $Z^{\prime}_{t} \sim \ca{D}\left(\bsm{\theta}_{t}+\mathbf{u}_{t} \right)$, respectively, where $\mathbf{u}_t$ is an injected noise specified by the decision-maker. We have $\b{E}[Z_t - Z_t^{\prime}|\mathbf{u}_t] = \mathbf{A}\mathbf{u}_t$. Then, the least-square problem at the $t$th iteration is designed as
\begin{align*}
  \textstyle\min _{\mathbf{A}} \textstyle\frac{1}{2}\left\|Z_t^{\prime} - Z_t - \mathbf{A}\mathbf{u}_t\right\|_2^{2}.
\end{align*}
Let $\h{\mathbf{A}}_t$ be the estimate of $\mathbf{A}$ at the $t$th iteration. Based on it, we construct a new estimate $\h{\mathbf{A}}_{t+1}$  for $\mathbf{A}$ by using gradient descent on the above least-square objective. This gives us the update
\begin{align*}
  \h{\mathbf{A}}_{t+1} = \h{\mathbf{A}}_t + \zeta_{t} \left(Z_t^{\prime} - Z_t - \h{\mathbf{A}}_t\mathbf{u}_t\right)\mathbf{u}_t^{\top},
\end{align*}
where $ \zeta_{t}$ is the stepsize of the online least squares at the $t$th iteration.

\textbf{Adaptive Primal-Dual Algorithm:}
With the above preparation, we obtain an approximation for the gradient $\nabla_{\bsm{\theta}}{\rm PR}(\bsm{\theta}_t)$ at the $t$th iteration as
\begin{align}
  \textstyle
  \nabla_{\bsm{\theta}}\h{\rm PR}_t(\bsm{\theta}_t)  := \frac{1}{n} \sum\nolimits_{i=1}^n \left[\nabla_{\bsm{\theta}}\ell\left(\bsm{\theta}_t; Z_{0,i} +\h{\mathbf{A}}_t\bsm{\theta}_t\right) + \h{\mathbf{A}}_t^{\top}\nabla_{Z}\ell\left(\bsm{\theta}_t; Z_{0,i} + \h{\mathbf{A}}_t\bsm{\theta}_t\right) \right].  \label{Eq_grad_approx}
\end{align}
Given $\nabla_{\bsm{\theta}}\h{\rm PR}_t(\bsm{\theta}_t)$, we develop an adaptive primal-dual algorithm for the constrained performative prediction problem \eqref{Pro_def} based on the robust primal-dual framework in \cref{Sec_robust}, which is presented in Algorithm \ref{alg_ASPA}. In Algorithm \ref{alg_ASPA}, the initial decision is randomly chosen from the admissible set $\bsm{\Theta}$. Both the dual variable and the parameter estimate $\h{\mathbf{A}}_{0}$ are initialized to be zero. The algorithm maintains two sequences. One is the estimate $\h{\mathbf{A}}_t$, which is updated based on the newly queried results from the performative risk ${\rm PR}(\bsm{\theta})$, as given in Step 7. The other is the alternating gradient update on the primal and dual variables, which are respectively given in Step 10 and Step 12. The overall adaptive primal-dual procedures require a total of $n + 2T$ samples.

\begin{remark}
  While this paper considers the distribution maps of the location family, we emphasize that the proposed robust primal-dual framework does not restrict to any form of distribution. For instance, the exponential family considered in \citep{izzo2021learn} with their gradient approximation method can be directly applied to our robust primal-dual framework.
\end{remark}

\section{Convergence Analysis}
\label{Sec_analysis}

In this section, we analyze the convergence performance of the proposed adaptive primal-dual algorithm. We first provide the convergence result of the robust primal-dual framework. Then, we bound the error of gradient approximation by our adaptive algorithm for the location family. With these results, the convergence bounds of the adaptive primal-dual algorithm are derived. Our analysis is based on the following assumptions.

\begin{assump}[\textbf{Properties of $\ell\left(\bsm{\theta}; Z\right)$}]  \label{assump_f} 
  The loss function $\ell\left(\bsm{\theta}; Z\right)$ is $\beta$-smooth, $L_{\bsm{\theta}}$-Lipschitz continuous in $\bsm{\theta}$, $L_{Z}$-Lipschitz continuous in $Z$, $\gamma_{\bsm{\theta}}$-strongly convex in $\bsm{\theta}$, and $\gamma_{Z}$-strongly convex in $Z$. Moreover, we have $\gamma_{\bsm{\theta}} -\beta^{2} / \gamma_{Z} > 0$. 
\end{assump}
\begin{assump}[\textbf{Compactness and Boundedness of $\bsm{\Theta}$}]  \label{assump_set}
  The set of admissible decisions $\bsm{\Theta}$ is closed, convex, and bounded, i.e., there exists a constant $R > 0$ such that $\|\bsm{\theta}\|_2 \leq R$, $\forall \bsm{\theta}\in \bsm{\Theta}$. 
\end{assump}
\begin{assump}[\textbf{Properties of $\mathbf{g}(\bsm{\theta})$}] 
	The constraint function $\mathbf{g}(\bsm{\theta})$ is convex, $L_{\mathbf{g}}$-Lipschitz continuous, and bounded, i.e., there exists a constant $C$ such that $\|\mathbf{g}(\bsm{\theta})\|_2 \leq C$, $\forall \bsm{\theta} \in \bsm{\Theta}$. \label{assump_g} 
\end{assump}
\begin{assump}[\textbf{Bounded Stochastic Gradient Variance}]\label{assump_sample}  
  For any $i\in[n]$ and $\bsm{\theta} \in \bsm{\Theta}$, there exists $\sigma \geq 0$ such that
  \begin{align*}
    \b{E}_{Z_{0,i}\sim \ca{D}_0}\left\|\nabla_{\bsm{\theta}}\ell\left(\bsm{\theta}; Z_{0,i} + \mathbf{A}\bsm{\theta}\right) + \mathbf{A}^{\top}\nabla_{Z}\ell\left(\bsm{\theta}; Z_{0,i} + \mathbf{A}\bsm{\theta}\right) - \nabla_{\bsm{\theta}} {\rm PR}(\bsm{\theta}) \right\|_2^2 \leq  \sigma^2.
  \end{align*}
\end{assump}
Assumption \ref{assump_f} is standard in the literature of performative prediction. Assumptions \ref{assump_set} and \ref{assump_g} are widely used in the analysis of constrained optimization problems \citep{tan2018stochastic,yan2019stochastic,cao2020decentralized}, even with perfect knowledge of objectives. Assumption \ref{assump_sample} bounds the variance of the stochastic gradient of ${\rm PR}(\bsm{\theta})$. Additionally, to ensure a sufficient exploration of the parameter space, we make the following assumption on the injected noises $\{\mathbf{u}_t\}_{t=1}^T$.
\begin{assump}[\textbf{Injected Noise}] \label{assump_noise} 
   The injected noises $\{\mathbf{u}_t\}_{t=1}^T$ are independent and identically distributed. Moreover, there exist positive constants $\kappa_1$, $\kappa_2$, and $\kappa_3$ such that for any $t \in[T]$, the random noise $\mathbf{u}_t$ satisfies
\begin{align*}
  \mathbf{0} \prec \kappa_1 \cdot \mathbf{I} \preceq \b{E}\left[\mathbf{u}_t \mathbf{u}_t^{\top}\right], \quad \b{E}\left\|\mathbf{u}_t\right\|_2^{2} \leq \kappa_2, \quad \text {and}\quad \b{E}\left[\left\|\mathbf{u}_t\right\|_2^{2} \mathbf{u}_t \mathbf{u}_t^{\top}\right] \preceq \kappa_3 \b{E}\left[\mathbf{u}_t \mathbf{u}_t^{\top}\right].
\end{align*}
\end{assump}
Consider a Gaussian noise that $\mathbf{u}_t \sim \mathcal{N}\left(0, \mathbf{I}\right)$, $\forall t\in[T]$, we have 
$\kappa_1=1$, $\kappa_2=d$, and $\kappa_3=3d$.

With the above assumptions, we provide some supporting lemmas below. First, we show $\varepsilon$-sensitivity of the location family given in \eqref{Eq_dis}.
\begin{lemma}[\textbf{$\varepsilon$-Sensitivity of $\ca{D}(\bsm{\theta})$}] \label{lem_sens} 
  Define $\sigma_{\max}(\mathbf{A}):=\max_{\|\bsm{\theta}\|_{2}=1}\|\mathbf{A} \bsm{\theta}\|_{2}$. The location family given in \eqref{Eq_dis} is $\varepsilon$-sensitive with parameter $\varepsilon \leq \sigma_{\max}(\mathbf{A})$. That is, for any $\bsm{\theta},\bsm{\theta}^{\prime} \in \bsm{\Theta}$, we have $\ca{W}_{1}\left(\ca{D}(\bsm{\theta}), \ca{D}\left(\bsm{\theta}^{\prime}\right)\right) \leq \varepsilon\left\|\bsm{\theta} - \bsm{\theta}^{\prime}\right\|_2$, where $ \ca{W}_{1}\left(\ca{D}, \ca{D}^{\prime}\right)$ denotes the Wasserstein-1 distance. 
\end{lemma}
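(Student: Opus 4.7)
The plan is to prove the Wasserstein bound by exhibiting an explicit coupling between $\ca{D}(\bsm{\theta})$ and $\ca{D}(\bsm{\theta}^\prime)$ that synchronizes their base components, and then invoke the Kantorovich duality / infimum definition of $\ca{W}_1$.

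First I would recall that, by definition,
\[
\ca{W}_1(\mu,\nu) \;=\; \inf_{\gamma\in\Gamma(\mu,\nu)} \b{E}_{(X,Y)\sim\gamma}\|X-Y\|_2,
\]
where $\Gamma(\mu,\nu)$ is the set of couplings with the prescribed marginals $\mu$ and $\nu$. Any particular coupling yields an upper bound on $\ca{W}_1$, so it suffices to construct a good one.

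Next, I would exploit the location-family structure in \eqref{Eq_dis}. Draw a single $Z_0\sim\ca{D}_0$ and define $X := Z_0 + \mathbf{A}\bsm{\theta}$ and $Y := Z_0 + \mathbf{A}\bsm{\theta}^\prime$. By \eqref{Eq_dis}, $X\sim\ca{D}(\bsm{\theta})$ and $Y\sim\ca{D}(\bsm{\theta}^\prime)$, so the joint law of $(X,Y)$ is a valid coupling. Under this coupling, the randomness cancels pointwise and
\[
\|X - Y\|_2 \;=\; \|\mathbf{A}(\bsm{\theta}-\bsm{\theta}^\prime)\|_2 \;\leq\; \sigma_{\max}(\mathbf{A}) \, \|\bsm{\theta}-\bsm{\theta}^\prime\|_2,
\]
where the inequality is the standard operator-norm bound that follows directly from the definition $\sigma_{\max}(\mathbf{A})=\max_{\|\bsm{u}\|_2=1}\|\mathbf{A}\bsm{u}\|_2$ applied to $\bsm{u}=(\bsm{\theta}-\bsm{\theta}^\prime)/\|\bsm{\theta}-\bsm{\theta}^\prime\|_2$ (with the trivial case $\bsm{\theta}=\bsm{\theta}^\prime$ handled separately).

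Finally, taking expectation on both sides and passing to the infimum over couplings yields $\ca{W}_1(\ca{D}(\bsm{\theta}),\ca{D}(\bsm{\theta}^\prime)) \leq \sigma_{\max}(\mathbf{A})\|\bsm{\theta}-\bsm{\theta}^\prime\|_2$, which is precisely the claimed $\varepsilon$-sensitivity with $\varepsilon\leq \sigma_{\max}(\mathbf{A})$. There is essentially no technical obstacle here; the proof is a one-line consequence once the synchronous coupling is identified, and the key conceptual observation is simply that the linear shift structure of a location family makes the optimal-transport cost trivial to bound via the shared base draw.
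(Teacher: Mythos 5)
Your proof is correct and follows essentially the same route as the paper's: exhibit the synchronous coupling in which both distributions share a single base draw $Z_0$, so the transport cost collapses to $\|\mathbf{A}(\bsm{\theta}-\bsm{\theta}^{\prime})\|_2 \leq \sigma_{\max}(\mathbf{A})\|\bsm{\theta}-\bsm{\theta}^{\prime}\|_2$, and then bound $\ca{W}_1$ by this coupling. If anything, your write-up is slightly more careful than the paper's, which merely says ``set $Z_{\bsm{\theta}}\sim\ca{D}(\bsm{\theta})$ and $Z_{\bsm{\theta}^{\prime}}\sim\ca{D}(\bsm{\theta}^{\prime})$'' and leaves the shared-base-draw structure of the coupling implicit, whereas you make it explicit.
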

See \cref{Append_sens} of the supplementary file for the proof. Building upon Lemma \ref{lem_sens}, we have the following Lemma \ref{lem_loss} about the performative risk ${\rm PR}(\bsm{\theta})$. 
\begin{lemma}[\textbf{Lipschitz Continuity and Convexity of ${\rm PR}(\bsm{\theta})$}] \label{lem_loss}
  Consider the location family given in \eqref{Eq_dis}. With Assumption \ref{assump_f} and Lemma \ref{lem_sens}, we have that: 1) the performative risk ${\rm PR}(\bsm{\theta})$ is $L$-Lipschitz continuous for $L \leq L_{\bsm{\theta}} + L_Z\sigma_{\max}(\mathbf{A})$; 2) the performative risk ${\rm PR}(\bsm{\theta})$ is $\gamma$-strongly convex for 
  \begin{align*}
    \gamma \geq \max \left\{\gamma_{\bsm{\theta}} -\beta^{2} / \gamma_{Z}, \gamma_{\bsm{\theta}}-2 \varepsilon \beta+\gamma_{Z}\sigma_{\min }^{2}(\mathbf{A})\right\},
  \end{align*} 
  where $\sigma_{\min}(\mathbf{A}):=\min_{\|\bsm{\theta}\|_{2}=1}\|\mathbf{A} \bsm{\theta}\|_{2}$. 
\end{lemma}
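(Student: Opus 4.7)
The plan is to push the analysis inside the expectation, since $\mathrm{PR}(\bsm{\theta}) = \b{E}_{Z_0 \sim \ca{D}_0}\, \ell(\bsm{\theta}; Z_0 + \mathbf{A}\bsm{\theta})$ for the location family, and then use Assumption \ref{assump_f} on the integrand together with the singular-value bounds on $\mathbf{A}$ coming from Lemma \ref{lem_sens}. The main work is a careful chain-rule computation of the gradient and Hessian of $\bsm{\theta} \mapsto \ell(\bsm{\theta}; Z_0 + \mathbf{A}\bsm{\theta})$, after which each claim reduces to a quadratic-form inequality. I expect part~1 to be routine and part~2 to be the only real obstacle, because the cross term $\mathbf{A}^\top \nabla^2_{Z\bsm{\theta}}\ell$ in the Hessian can have either sign, and the two bounds correspond to two different ways of controlling it.

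For the Lipschitz claim, I would fix $\bsm{\theta}, \bsm{\theta}' \in \bsm{\Theta}$ and any $Z_0$, then telescope:
\begin{align*}
\bigl|\ell(\bsm{\theta}; Z_0 + \mathbf{A}\bsm{\theta}) - \ell(\bsm{\theta}'; Z_0 + \mathbf{A}\bsm{\theta}')\bigr|
\;\leq\; L_{\bsm{\theta}} \|\bsm{\theta}-\bsm{\theta}'\|_2 + L_Z \|\mathbf{A}(\bsm{\theta}-\bsm{\theta}')\|_2,
\end{align*}
using $L_{\bsm{\theta}}$-Lipschitzness in the first argument and $L_Z$-Lipschitzness in the second. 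Bounding $\|\mathbf{A}(\bsm{\theta}-\bsm{\theta}')\|_2 \leq \sigma_{\max}(\mathbf{A}) \|\bsm{\theta}-\bsm{\theta}'\|_2$ and taking expectation over $Z_0$ gives $L \leq L_{\bsm{\theta}} + L_Z \sigma_{\max}(\mathbf{A})$.

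For strong convexity, I would differentiate under the expectation (justified by smoothness and compactness of $\bsm{\Theta}$) and compute
\begin{align*}
\nabla^2_{\bsm{\theta}}\, \ell(\bsm{\theta}; Z_0 + \mathbf{A}\bsm{\theta}) = H_{\bsm{\theta}\bsm{\theta}} + H_{\bsm{\theta}Z}\mathbf{A} + \mathbf{A}^\top H_{Z\bsm{\theta}} + \mathbf{A}^\top H_{ZZ}\mathbf{A},
\end{align*}
where $H_{\bsm{\theta}\bsm{\theta}} \succeq \gamma_{\bsm{\theta}} \mathbf{I}$, $H_{ZZ} \succeq \gamma_Z \mathbf{I}$, and $\|H_{\bsm{\theta}Z}\|_2 \leq \beta$ by Assumption \ref{assump_f}. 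For any unit vector $v$, the quadratic form is
\begin{align*}
v^\top \nabla^2_{\bsm{\theta}}\ell \cdot v \;\geq\; \gamma_{\bsm{\theta}}\|v\|_2^2 + 2\langle H_{\bsm{\theta}Z}^\top v,\; \mathbf{A} v\rangle + \gamma_Z \|\mathbf{A} v\|_2^2.
\end{align*}
For the first lower bound, I would apply Young's inequality $2\langle a,b\rangle \geq -\tfrac{1}{\gamma_Z}\|a\|_2^2 - \gamma_Z \|b\|_2^2$ with $a = H_{\bsm{\theta}Z}^\top v$, $b = \mathbf{A} v$; the $\gamma_Z\|\mathbf{A} v\|_2^2$ terms cancel and $\|a\|_2^2 \leq \beta^2$ yields $\gamma_{\bsm{\theta}} - \beta^2/\gamma_Z$. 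For the second lower bound, I would instead bound the cross term by Cauchy--Schwarz as $2\langle H_{\bsm{\theta}Z}^\top v, \mathbf{A} v\rangle \geq -2\beta \sigma_{\max}(\mathbf{A}) \|v\|_2^2 \geq -2\beta\varepsilon \|v\|_2^2$ using Lemma \ref{lem_sens}, and keep $\gamma_Z\|\mathbf{A} v\|_2^2 \geq \gamma_Z \sigma_{\min}^2(\mathbf{A})\|v\|_2^2$, giving $\gamma_{\bsm{\theta}} - 2\varepsilon\beta + \gamma_Z \sigma_{\min}^2(\mathbf{A})$. Taking $\gamma$ to be the maximum of the two bounds and passing to expectation over $Z_0$ preserves the positive-definite lower bound, establishing $\gamma$-strong convexity of $\mathrm{PR}(\bsm{\theta})$.
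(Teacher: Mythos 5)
Your part 1 is correct and is essentially the paper's own argument (telescope the joint Lipschitz bound, then $\|\mathbf{A}(\bsm{\theta}-\bsm{\theta}')\|_2 \leq \sigma_{\max}(\mathbf{A})\|\bsm{\theta}-\bsm{\theta}'\|_2$, then take expectation). Part 2, however, has a genuine gap at the last step of your second bound. You control the cross term by Cauchy--Schwarz, getting $2\langle H_{\bsm{\theta}Z}^{\top}v, \mathbf{A}v\rangle \geq -2\beta\sigma_{\max}(\mathbf{A})\|v\|_2^2$, and then write $-2\beta\sigma_{\max}(\mathbf{A})\|v\|_2^2 \geq -2\beta\varepsilon\|v\|_2^2$ ``using Lemma \ref{lem_sens}.'' But Lemma \ref{lem_sens} says $\varepsilon \leq \sigma_{\max}(\mathbf{A})$, hence $-\sigma_{\max}(\mathbf{A}) \leq -\varepsilon$: your inequality points the wrong way. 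What you have actually established is the weaker modulus $\gamma_{\bsm{\theta}} - 2\beta\sigma_{\max}(\mathbf{A}) + \gamma_Z\sigma_{\min}^2(\mathbf{A})$, not the claimed $\gamma_{\bsm{\theta}} - 2\varepsilon\beta + \gamma_Z\sigma_{\min}^2(\mathbf{A})$. The fix inside your framework is to observe that for a location family the Wasserstein-1 distance is not merely bounded above by but \emph{equal to} the shift norm: for any coupling $(X,Y)$ of $\ca{D}(\bsm{\theta})$ and $\ca{D}(\bsm{\theta}')$, Jensen gives $\b{E}\|X-Y\|_2 \geq \|\b{E}X - \b{E}Y\|_2 = \|\mathbf{A}(\bsm{\theta}-\bsm{\theta}')\|_2$. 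Consequently $\varepsilon$-sensitivity forces $\|\mathbf{A}(\bsm{\theta}-\bsm{\theta}')\|_2 \leq \varepsilon\|\bsm{\theta}-\bsm{\theta}'\|_2$ for all $\bsm{\theta},\bsm{\theta}'\in\bsm{\Theta}$; restricting your unit vectors $v$ to directions $(\bsm{\theta}-\bsm{\theta}')/\|\bsm{\theta}-\bsm{\theta}'\|_2$ (which is all that strong convexity on the convex set $\bsm{\Theta}$ requires) then yields $\|\mathbf{A}v\|_2 \leq \varepsilon$ and the desired $-2\beta\varepsilon$ cross term.

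A secondary issue: your route computes block Hessians $H_{\bsm{\theta}\bsm{\theta}}$, $H_{\bsm{\theta}Z}$, $H_{ZZ}$, which presumes $\ell$ is twice differentiable; Assumption \ref{assump_f} gives only $\beta$-smoothness (Lipschitz gradients), strong convexity, and Lipschitz continuity, none of which guarantees a classical Hessian. The paper's proof avoids both problems by staying at first order: it uses $\gamma_Z$-strong convexity in $Z$ to show that $\bsm{\theta}\mapsto \b{E}_{Z\sim\ca{D}(\bsm{\theta})}\ell(\bsm{\theta}_0;Z) - \frac{\gamma_Z}{2}\|\mathbf{A}\bsm{\theta}\|_2^2$ is convex, combines the resulting first-order inequality with $\gamma_{\bsm{\theta}}$-strong convexity in $\bsm{\theta}$, and bounds the cross term directly via $\varepsilon$-sensitivity plus $\beta$-smoothness, namely $\|\b{E}_{Z\sim\ca{D}(\bsm{\theta}')}\nabla_{\bsm{\theta}}\ell(\bsm{\theta};Z) - \b{E}_{Z\sim\ca{D}(\bsm{\theta})}\nabla_{\bsm{\theta}}\ell(\bsm{\theta};Z)\|_2 \leq \beta\varepsilon\|\bsm{\theta}-\bsm{\theta}'\|_2$, which produces the $-2\varepsilon\beta$ term with the right constant and no detour through $\sigma_{\max}(\mathbf{A})$. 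Your Hessian picture is arguably more transparent when $\ell$ is $C^2$, but as written it proves a weaker statement than the lemma claims.
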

See \cref{Append_loss} of the supplementary file for the proof. Based on the Lipschitz continuity and convexity of ${\rm PR}(\bsm{\theta})$, we provide the convergence result of the robust primal-dual framework below.
\begin{lemma}[\textbf{Convergence Result of Robust Primal-Dual Framework}]  \label{lem_framew} 
  Set $\eta=\frac{1}{\sqrt{T}}$. Then, there exists a $\delta \in\left[\frac{1-\sqrt{1 - 32\eta^2L_{\mathbf{g}}^2}}{4 \eta^{2}}, \frac{1+\sqrt{1- 32\eta^2L_{\mathbf{g}}^2}}{4 \eta^{2}}\right]$ such that under Assumptions \ref{assump_f}-\ref{assump_g}, for $T\geq 32L_{\mathbf{g}}^2$, the regret satisfies:
  \begin{align*}
    \textstyle \sum_{t=1}^T \left(\b{E}{\rm PR}(\bsm{\theta}_t) - {\rm PR}\left(\bsm{\theta}_{\rm PO}\right) \right) 
  \leq& \textstyle\frac{\gamma \sqrt{T}}{\gamma-a} \left(2R^2 + C^2 + 2L^2 \right) \\
  & +  \textstyle\frac{\gamma}{\gamma-a}\left(\frac{1}{2a} + \frac{1}{\sqrt{T}}\right)\sum_{t=1}^T \b{E}\left\|\nabla_{\bsm{\theta}}\h{\rm PR}_t(\bsm{\theta}_t) - \nabla_{\bsm{\theta}}{\rm PR}(\bsm{\theta}_t)\right\|_2^2,
  \end{align*}
  where $a \in (0,\gamma)$ is a constant. Further, for any $i \in[m]$, the constraint violation satisfies:
  \begin{align*}
    \textstyle\b{E}\left[\sum_{t=1}^{T} g_i\left(\bsm{\theta}_t\right)\right]  \leq& \sqrt{1+\delta}\left(2R + \sqrt{2}C  + 2L\right)\sqrt{T} \\
    & +  \textstyle \sqrt{1+\delta} \left(\frac{T^{\frac{1}{4}}}{\sqrt{a}} + \sqrt{2}\right)\left(\sum_{t=1}^T \b{E}\left\|\nabla_{\bsm{\theta}}\h{\rm PR}_t(\bsm{\theta}_t) - \nabla_{\bsm{\theta}}{\rm PR}(\bsm{\theta}_t)\right\|_2^2 \right)^{\frac{1}{2}}.  
  \end{align*}  
\end{lemma}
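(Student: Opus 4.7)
The plan is to analyze the robust primal-dual iterations by exploiting the saddle-point structure of the regularized Lagrangian $\ca{L}$ and then transferring the inexact-gradient error into the stated residual terms. By Lemma \ref{lem_loss}, ${\rm PR}(\bsm{\theta})$ is $\gamma$-strongly convex and $L$-Lipschitz; together with convexity of $\mathbf{g}$, this makes $\ca{L}(\cdot,\bsm{\lambda})$ $\gamma$-strongly convex for every $\bsm{\lambda}\succeq\mathbf{0}$, while $\ca{L}(\bsm{\theta},\cdot)$ is $\delta\eta$-strongly concave by construction. Hence for arbitrary $\bsm{\theta}\in\bsm{\Theta}$ and $\bsm{\lambda}\succeq\mathbf{0}$ the following master inequality will drive the proof:
\begin{align*}
\ca{L}(\bsm{\theta}_t,\bsm{\lambda})-\ca{L}(\bsm{\theta},\bsm{\lambda}_t)\leq{}&\langle\nabla_{\bsm{\theta}}\ca{L}(\bsm{\theta}_t,\bsm{\lambda}_t),\bsm{\theta}_t-\bsm{\theta}\rangle+\langle\nabla_{\bsm{\lambda}}\ca{L}(\bsm{\theta}_t,\bsm{\lambda}_t),\bsm{\lambda}-\bsm{\lambda}_t\rangle\\
&\textstyle-\frac{\gamma}{2}\|\bsm{\theta}_t-\bsm{\theta}\|^2-\frac{\delta\eta}{2}\|\bsm{\lambda}-\bsm{\lambda}_t\|^2.
\end{align*}

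Next I would use the non-expansiveness of $\Pi_{\bsm{\Theta}}$ and $[\cdot]^{+}$ to obtain the standard one-step identity $\|\bsm{\theta}_{t+1}-\bsm{\theta}\|^2\leq\|\bsm{\theta}_t-\bsm{\theta}\|^2-2\eta\langle\nabla_{\bsm{\theta}}\h{\ca{L}}_t(\bsm{\theta}_t,\bsm{\lambda}_t),\bsm{\theta}_t-\bsm{\theta}\rangle+\eta^2\|\nabla_{\bsm{\theta}}\h{\ca{L}}_t(\bsm{\theta}_t,\bsm{\lambda}_t)\|^2$ and its dual counterpart. Splitting $\nabla_{\bsm{\theta}}\h{\ca{L}}_t=\nabla_{\bsm{\theta}}\ca{L}+(\nabla_{\bsm{\theta}}\h{\rm PR}_t(\bsm{\theta}_t)-\nabla_{\bsm{\theta}}{\rm PR}(\bsm{\theta}_t))$ and applying Young's inequality at scale $a\in(0,\gamma)$ bounds the error inner product by $\frac{a}{2}\|\bsm{\theta}_t-\bsm{\theta}\|^2+\frac{1}{2a}\|\nabla_{\bsm{\theta}}\h{\rm PR}_t(\bsm{\theta}_t)-\nabla_{\bsm{\theta}}{\rm PR}(\bsm{\theta}_t)\|^2$: the first summand is absorbed by the $-\gamma/2$ term of the master inequality, leaving a net coefficient $-(\gamma-a)/2$, while the second is exactly the residual appearing in the lemma.

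Then I would control the gradient norms by $\|\nabla_{\bsm{\theta}}\h{\ca{L}}_t\|^2\leq\ca{O}(L^2)+\ca{O}(\|\nabla_{\bsm{\theta}}\h{\rm PR}_t(\bsm{\theta}_t)-\nabla_{\bsm{\theta}}{\rm PR}(\bsm{\theta}_t)\|^2)+\ca{O}(L_{\mathbf{g}}^2\|\bsm{\lambda}_t\|^2)$ and $\|\nabla_{\bsm{\lambda}}\ca{L}\|^2\leq\ca{O}(C^2)+\ca{O}(\delta^2\eta^2\|\bsm{\lambda}_t\|^2)$. Summing the per-step inequality over $t=1,\dots,T$, distance terms telescope; the accumulated coefficient on $\|\bsm{\lambda}_t\|^2$ is of order $\eta L_{\mathbf{g}}^2+\delta^2\eta^3$, while the strong-concavity contribution at $\bsm{\lambda}=\mathbf{0}$ contributes $-\frac{\delta\eta}{2}\sum_t\|\bsm{\lambda}_t\|^2$. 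Demanding the latter to dominate yields a quadratic condition in $\delta$ whose two real roots are precisely the endpoints of the stated interval, and whose real-rootedness requires $32\eta^2L_{\mathbf{g}}^2\leq 1$, i.e., $T\geq 32L_{\mathbf{g}}^2$. With this $\delta$, setting $\bsm{\theta}=\bsm{\theta}_{\rm PO}$ and $\bsm{\lambda}=\mathbf{0}$, using $\bsm{\lambda}_t^{\top}\mathbf{g}(\bsm{\theta}_{\rm PO})\leq 0$ and the initialization $\bsm{\lambda}_1=\mathbf{0}$, taking expectations, and rescaling by $\gamma/(\gamma-a)$ produce the regret bound after inserting $\eta=1/\sqrt{T}$.

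For the constraint-violation bound I would instead pick $\bsm{\lambda}=\alpha\mathbf{e}_i$ in the master inequality, lower-bound $\sum_t({\rm PR}(\bsm{\theta}_t)-{\rm PR}(\bsm{\theta}_{\rm PO}))\geq -2LRT$ via Lipschitzness and Assumption \ref{assump_set}, and arrive at an inequality of the form $\alpha\,\b{E}\sum_t g_i(\bsm{\theta}_t)\leq A_1+A_2\alpha^2$, where $A_1$ gathers the constant-in-$\alpha$ contributions (of order $\sqrt{T}$ plus the approximation residual) and $A_2$ is controlled by the $\delta\eta T$ factor that survives the same absorption argument. Maximizing the left-hand side over $\alpha>0$ returns $\b{E}\sum_t g_i(\bsm{\theta}_t)\leq 2\sqrt{A_1 A_2}$, and it is this square-root extraction that simultaneously produces the $\sqrt{1+\delta}\sqrt{T}$ leading term and the $T^{1/4}/\sqrt{a}$ coefficient multiplying $(\sum_t\b{E}\|\nabla_{\bsm{\theta}}\h{\rm PR}_t(\bsm{\theta}_t)-\nabla_{\bsm{\theta}}{\rm PR}(\bsm{\theta}_t)\|^2)^{1/2}$. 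The main obstacle is the careful bookkeeping required to make the dual regularizer $-\frac{\delta\eta}{2}\|\bsm{\lambda}\|^2$ absorb all $\|\bsm{\lambda}_t\|^2$-contributions coming simultaneously from $\|\nabla_{\bsm{\lambda}}\ca{L}\|^2$ and from the $\bsm{\lambda}_t$-dependent part of $\|\nabla_{\bsm{\theta}}\h{\ca{L}}_t\|^2$; this absorption is exactly what forces the specific quadratic feasibility condition on $\delta$ stated in the lemma.
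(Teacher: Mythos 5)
Your regret half is, in essence, the paper's own argument: the same one-step projection inequalities, the same Young split at scale $a\in(0,\gamma)$ applied to $\langle\bsm{\theta}_t-\bsm{\theta},\nabla_{\bsm{\theta}}\ca{L}(\bsm{\theta}_t,\bsm{\lambda}_t)-\nabla_{\bsm{\theta}}\h{\ca{L}}_t(\bsm{\theta}_t,\bsm{\lambda}_t)\rangle$, the same gradient-norm bounds $\|\nabla_{\bsm{\lambda}}\ca{L}(\bsm{\theta}_t,\bsm{\lambda}_t)\|_2^2\leq 2C^2+2\delta^2\eta^2\|\bsm{\lambda}_t\|_2^2$ and $\|\nabla_{\bsm{\theta}}\h{\ca{L}}_t(\bsm{\theta}_t,\bsm{\lambda}_t)\|_2^2\leq 4L^2+4L_{\mathbf{g}}^2\|\bsm{\lambda}_t\|_2^2+2\|\nabla_{\bsm{\theta}}\h{\rm PR}_t(\bsm{\theta}_t)-\nabla_{\bsm{\theta}}{\rm PR}(\bsm{\theta}_t)\|_2^2$, and the same absorption of $\sum_t\|\bsm{\lambda}_t\|_2^2$ yielding $2\delta^2\eta^2-\delta+4L_{\mathbf{g}}^2\leq 0$, hence the stated interval for $\delta$ and the requirement $T\geq 32L_{\mathbf{g}}^2$. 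Placing strong convexity directly inside the master inequality is a harmless variant: the paper instead uses plain convexity there and converts $\frac{a}{2}\sum_t\|\bsm{\theta}_t-\bsm{\theta}_{\rm PO}\|_2^2$ into $\frac{a}{\gamma}\sum_t\left(\b{E}{\rm PR}(\bsm{\theta}_t)-{\rm PR}(\bsm{\theta}_{\rm PO})\right)$ via strong convexity plus first-order optimality, which is precisely where the factor $\gamma/(\gamma-a)$ originates; your version gives a slightly tighter constant and weakening recovers the statement. Your choice $\bsm{\lambda}=\alpha\mathbf{e}_i$ followed by optimization over $\alpha$ is also equivalent to the paper's explicit maximizing choice $\bsm{\lambda}=\left[\b{E}\left[\sum_{t=1}^T\mathbf{g}(\bsm{\theta}_t)\right]\right]^+\!/\left(\frac{1}{\eta}+\delta\eta T\right)$.

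The genuine gap is in the constraint-violation half, in how you remove the regret term before optimizing over $\alpha$. You lower-bound $\sum_t\left({\rm PR}(\bsm{\theta}_t)-{\rm PR}(\bsm{\theta}_{\rm PO})\right)\geq-2LRT$ and fold it into $A_1$, yet simultaneously claim $A_1$ is of order $\sqrt{T}$ plus the residual. These are inconsistent: with the Lipschitz lower bound, $A_1$ contains $+2LRT=\Theta(T)$, and since $A_2=\frac{(1+\delta)\sqrt{T}}{2}$, maximizing $\alpha\,\b{E}\sum_t g_i(\bsm{\theta}_t)-A_2\alpha^2\leq A_1$ over $\alpha>0$ gives only $\b{E}\sum_t g_i(\bsm{\theta}_t)\leq 2\sqrt{A_1A_2}=\ca{O}(T^{3/4})$, strictly weaker than the stated leading term $\sqrt{1+\delta}\left(2R+\sqrt{2}C+2L\right)\sqrt{T}$. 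Retaining the unabsorbed strong-convexity slack does not repair this: per step, $L\|\bsm{\theta}_t-\bsm{\theta}_{\rm PO}\|_2-\frac{\gamma-a}{2}\|\bsm{\theta}_t-\bsm{\theta}_{\rm PO}\|_2^2\leq\frac{L^2}{2(\gamma-a)}$, which still accumulates to $\Theta(T)$. The paper takes a different step here: in \eqref{Eq_t_4} it simply \emph{omits} the term $\left(1-\frac{a}{\gamma}\right)\sum_t\left(\b{E}{\rm PR}(\bsm{\theta}_t)-{\rm PR}(\bsm{\theta}_{\rm PO})\right)$, asserting it is non-negative, and only then extracts the square root via $\sqrt{a+b+c}\leq\sqrt{a}+\sqrt{b}+\sqrt{c}$; that assertion is exactly what buys the $\ca{O}(\sqrt{T})$ violation bound. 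To complete your proof of the lemma as stated, you must replace the $-2LRT$ step by this non-negativity claim (or an equivalent justification). Be aware the claim itself is delicate—the iterates $\bsm{\theta}_t$ need not satisfy $\mathbf{g}(\bsm{\theta}_t)\preceq\mathbf{0}$, so optimality of $\bsm{\theta}_{\rm PO}$ over the feasible set does not by itself make the regret non-negative—so your instinct to avoid it was reasonable, but with the conservative $-2LRT$ bound the stated $\ca{O}(\sqrt{T})$ leading term is out of reach by this route.
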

\begin{remark}
  Lemma \ref{lem_framew} reveals the impact of gradient approximation error on the convergence performance of the robust primal-dual framework. By Lemma \ref{lem_framew}, if the accumulated gradient approximation error is less than $\ca{O}(\sqrt{T})$, both the regret and the constraint violations are bounded by $\ca{O}(\sqrt{T})$. Although stochastic primal-dual methods for constrained problems without performativity also use approximate (stochastic) gradients, they generally require unbiased gradient approximation \citep{tan2018stochastic,yan2019stochastic,cao2022distributed}. This requirement, however, is difficult to satisfy in performative prediction, since the unknown performative effect of decisions changes the data distributions. In contrast, the robust primal-dual framework does not restrict the approximate gradients to be unbiased and hence offers more flexibility to the design of gradient approximation.
\end{remark}

Proof of Lemma \ref{lem_framew} is provided in \cref{Append_framew} of the supplementary file. In the next lemma, we bound the gradient approximation error of the adaptive primal-dual algorithm.
\begin{lemma} [\textbf{Gradient Approximation Error}]
  Let $\zeta_t = \frac{2}{\kappa_1t + 2\kappa_3}$, $\forall t\in[T]$. Then, under Assumptions \ref{assump_sample} and \ref{assump_noise}, the accumulated gradient approximation error is upper bounded by:
  \begin{align*} 
    \textstyle \sum_{t=1}^T \b{E}\left\|\nabla_{\bsm{\theta}}\h{\rm PR}_t(\bsm{\theta}_t) - \nabla_{\bsm{\theta}}{\rm PR}(\bsm{\theta}_t) \right\|_2^2  
    \leq \frac{2T\sigma^2}{n} + \frac{4}{n}\left(2L_Z^2 + \beta^2 R^2\left(1 + 2\sigma_{\max}(\mathbf{A}) \right) \right)\o{\alpha}\ln(T),
  \end{align*}
  where  $\o{\alpha}:=\max\left\{\left(1+\frac{2\kappa_3}{\kappa_1} \right)\|\h{\mathbf{A}}_{1} - \mathbf{A}\|_{\rm F}^{2}, \frac{8\kappa_{2}\operatorname{tr}(\bsm{\Sigma})}{\kappa_1^2}\right\}$.   \label{lem_grad_error}
\end{lemma}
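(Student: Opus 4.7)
The plan is to decompose the accumulated approximation error into two contributions, a finite-sample error from replacing $\b{E}_{Z_0\sim\ca{D}_0}$ by an empirical average over $\{Z_{0,i}\}_{i=1}^n$, and a parameter-estimation error from using $\h{\mathbf{A}}_t$ in place of $\mathbf{A}$. Concretely, I would add and subtract the hybrid quantity $\frac{1}{n}\sum_{i=1}^{n}\bigl[\nabla_{\bsm{\theta}}\ell(\bsm{\theta}_t; Z_{0,i}+\mathbf{A}\bsm{\theta}_t)+\mathbf{A}^{\top}\nabla_{Z}\ell(\bsm{\theta}_t; Z_{0,i}+\mathbf{A}\bsm{\theta}_t)\bigr]$ inside the norm and apply $\|a+b\|^2\le 2\|a\|^2+2\|b\|^2$. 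The sample-error term is handled immediately by Assumption \ref{assump_sample} together with independence of the $n$ samples, giving the $2T\sigma^2/n$ contribution after summing over $t$.

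For the parameter-error term I would bound $\|g_i(\bsm{\theta}_t,\h{\mathbf{A}}_t)-g_i(\bsm{\theta}_t,\mathbf{A})\|_2$ pointwise via a triangle-inequality split into three pieces: (i) $\nabla_{\bsm{\theta}}\ell$ shifted in its $Z$-argument, controlled by $\beta$-smoothness and $\|\bsm{\theta}_t\|_2\le R$; (ii) the explicit $(\h{\mathbf{A}}_t-\mathbf{A})^{\top}\nabla_Z\ell$ factor, controlled by $L_Z$-Lipschitz continuity of $\ell$ in $Z$; and (iii) $\mathbf{A}^{\top}$ times the $Z$-gradient difference, controlled by $\beta R$ and $\sigma_{\max}(\mathbf{A})$. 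Squaring and combining with suitable $2ab\le a^2+b^2$ applications yields a bound of the form $c_1 L_Z^2 + c_2\beta^2R^2(1+2\sigma_{\max}(\mathbf{A}))$ times $\|\h{\mathbf{A}}_t-\mathbf{A}\|_{\rm F}^2$, matching the constants in the statement.

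The core of the argument is a recursion for $\b{E}\|\h{\mathbf{A}}_t-\mathbf{A}\|_{\rm F}^2$. Writing $E_t:=\h{\mathbf{A}}_t-\mathbf{A}$ and noting that $Z_t'-Z_t=\xi_t+\mathbf{A}\mathbf{u}_t$ where $\xi_t:=Z_{0,t}'-Z_{0,t}$ is mean-zero with $\b{E}\|\xi_t\|_2^2=2\operatorname{tr}(\bsm{\Sigma})$ and independent of $(\mathbf{u}_t,\h{\mathbf{A}}_t)$, the update becomes $E_{t+1}=E_t(\mathbf{I}-\zeta_t\mathbf{u}_t\mathbf{u}_t^{\top})+\zeta_t\xi_t\mathbf{u}_t^{\top}$. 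Taking Frobenius norms, the cross term vanishes in expectation, and Assumption \ref{assump_noise} gives $\b{E}[\|E_t\mathbf{u}_t\|_2^2\mid E_t]\ge \kappa_1\|E_t\|_{\rm F}^2$ and $\b{E}[\|\mathbf{u}_t\|_2^2\|E_t\mathbf{u}_t\|_2^2\mid E_t]\le \kappa_3\b{E}[\|E_t\mathbf{u}_t\|_2^2\mid E_t]$. This yields
\[
\b{E}\|E_{t+1}\|_{\rm F}^2\le\bigl(1-\kappa_1(2\zeta_t-\kappa_3\zeta_t^2)\bigr)\b{E}\|E_t\|_{\rm F}^2+2\kappa_2\operatorname{tr}(\bsm{\Sigma})\zeta_t^2.
\]
With the stepsize $\zeta_t=\frac{2}{\kappa_1 t+2\kappa_3}$, a standard induction on the chosen $\o{\alpha}$ (designed exactly to absorb both the initial gap and the noise forcing term) gives the $O(1/t)$ bound $\b{E}\|E_t\|_{\rm F}^2\le\o{\alpha}/t$. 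Summing this against the coefficient from the previous paragraph and invoking $\sum_{t=1}^{T}1/t\le 2\ln T$ produces the claimed bound.

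The main obstacle is calibrating the induction hypothesis for the $O(1/t)$ decay in the right way: the stepsize $\zeta_t$ must simultaneously make the contraction factor $(1-\kappa_1(2\zeta_t-\kappa_3\zeta_t^2))$ strictly less than $(t-1)/t$ for large $t$ while keeping the noise residual $2\kappa_2\operatorname{tr}(\bsm{\Sigma})\zeta_t^2$ within $\o{\alpha}/t - \o{\alpha}(t-1)/t^2$. Verifying this for every $t\ge 1$, and ensuring the base case $t=1$ is absorbed by the $(1+2\kappa_3/\kappa_1)\|\h{\mathbf{A}}_1-\mathbf{A}\|_{\rm F}^2$ branch of $\o{\alpha}$, is where the algebra is delicate; everything else is a routine smoothness/Lipschitz bookkeeping exercise.
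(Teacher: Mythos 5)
Your route is the same as the paper's up to and including the recursion: the hybrid decomposition into a finite-sample term (bounded by $\sigma^2/n$ via Assumption \ref{assump_sample} and independence) plus a parameter-error term bounded by $\frac{4}{n}\left(2L_Z^2+\beta^2R^2(1+2\sigma_{\max}(\mathbf{A}))\right)\|\h{\mathbf{A}}_t-\mathbf{A}\|_{\rm F}^2$ through the same smoothness/Lipschitz bookkeeping, and then exactly the paper's one-step estimate $\b{E}\|E_{t+1}\|_{\rm F}^2\le\left(1-\kappa_1\zeta_t(2-\kappa_3\zeta_t)\right)\b{E}\|E_t\|_{\rm F}^2+2\kappa_2\operatorname{tr}(\bsm{\Sigma})\zeta_t^2$ with the same stepsize (your derivation via $E_{t+1}=E_t(\mathbf{I}-\zeta_t\mathbf{u}_t\mathbf{u}_t^{\top})+\zeta_t\xi_t\mathbf{u}_t^{\top}$ and the vanishing cross term is a clean equivalent of the paper's Lemma E1). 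All of that is correct.

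The gap is in how you close the recursion. With $\zeta_t=\frac{2}{\kappa_1 t+2\kappa_3}$ the contraction factor is $1-\frac{2}{t+t_0}$ with $t_0:=\frac{2\kappa_3}{\kappa_1}$, and the hypothesis $S_t\le\o{\alpha}/t$ does \emph{not} propagate when $t$ is small relative to $t_0$: passing from $\o{\alpha}/t$ to $\o{\alpha}/(t+1)$ needs shrinkage by at least $1-\frac{1}{t+1}$, but $1-\frac{2}{t+t_0}>1-\frac{1}{t+1}$ whenever $t<t_0-2$, and $t_0$ can be large (for $\mathbf{u}_t\sim\ca{N}(\mathbf{0},\mathbf{I})$ one has $\kappa_1=1$, $\kappa_3=3d$, so $t_0=6d$). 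Your proposed fix — absorbing the difficulty into the $(1+t_0)\|\h{\mathbf{A}}_1-\mathbf{A}\|_{\rm F}^2$ branch of $\o{\alpha}$ — only handles the base case $t=1$, not the inductive steps $2\le t\lesssim t_0$. The paper's sequence lemma (Lemma E2) sidesteps this by keeping the offset \emph{inside} the hypothesis, i.e.\ proving $S_{t+1}\le\frac{\o{\alpha}}{t+t_0}$, for which the step $\left(1-\frac{2}{t+t_0}\right)\frac{\o{\alpha}}{t+t_0}+\frac{\o{\alpha}}{2(t+t_0)^2}\le\frac{\o{\alpha}}{t+t_0}-\frac{3\o{\alpha}}{2(t+t_0)^2}\le\frac{\o{\alpha}}{t+t_0+1}$ holds uniformly in $t$. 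The offset also matters for the final constant: summing your weakened bound $\o{\alpha}/t$ against $\sum_{t\le T}1/t\le 2\ln T$ (which moreover fails for $T\le 2$) yields $\frac{8}{n}\left(2L_Z^2+\beta^2R^2(1+2\sigma_{\max}(\mathbf{A}))\right)\o{\alpha}\ln T$, twice the coefficient claimed in the lemma, whereas summing the offset bound gives $\sum_{t=1}^T\frac{\o{\alpha}}{t+t_0}\le\o{\alpha}\left(\ln(T+t_0)-\ln t_0\right)\le\o{\alpha}\ln T$ and recovers the stated constant. So the ideas are all present, but the induction must be run in the shifted form to be valid and to match the lemma as written.
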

\begin{remark}
  Lemma \ref{lem_grad_error} demonstrates that the gradient approximation error of the adaptive primal-dual algorithm is upper bounded by $\ca{O}(T/n + \ln(T))$. If we set the number of initial samples $n\geq \sqrt{T}$, we have $\sum_{t=1}^T \b{E}\left\|\nabla_{\bsm{\theta}}\h{\rm PR}(\bsm{\theta}_t) - \nabla_{\bsm{\theta}}\h{\rm PR}_t(\bsm{\theta}_t)\right\|_2^2  \leq \ca{O}(\sqrt{T})$. According to Lemma \ref{lem_framew}, this suffices to make the regret and constraint violation bounds to be $\ca{O}(\sqrt{T})$. 
\end{remark}
Proof of Lemma \ref{lem_grad_error} is presented in \cref{Append_grad_error} of the supplementary file.  Combining Lemma \ref{lem_framew} and Lemma \ref{lem_grad_error} yields the regret and constraint violations of Algorithm \ref{alg_ASPA}, which is elaborated in Theorem \ref{T_converg} below.
\begin{framed}
  \vspace{-0.25cm}
  \begin{theorem} \label{T_converg}  
    Set $\eta=\frac{1}{\sqrt{T}}$ and $\zeta_t = \frac{2}{\kappa_1t + 2\kappa_3}$, $\forall t\in[T]$. Then, there exists a $\delta \in\left[\frac{1-\sqrt{1 - 32\eta^2L_{\mathbf{g}}^2}}{4 \eta^{2}}, \frac{1+\sqrt{1- 32\eta^2L_{\mathbf{g}}^2}}{4 \eta^{2}}\right]$ such that, under Assumptions \ref{assump_f}-\ref{assump_noise}, for $T\geq 32L_{\mathbf{g}}^2$, the regret is upper bounded by:
    \begin{align*}
    \textstyle\sum_{t=1}^T \left(\b{E}{\rm PR}(\bsm{\theta}_t) - {\rm PR}\left(\bsm{\theta}_{\rm PO}\right) \right) 
     \leq& \textstyle\frac{\gamma \sqrt{T}}{\gamma-a} \left(2R^2 + C^2 + 2L^2 \right) + \frac{\gamma \sigma^2}{\gamma-a}\left(\frac{1}{a}+\frac{2}{\sqrt{T}}\right) \frac{T}{n}  \\
     & +  \textstyle\frac{\gamma \o{\alpha} \ln(T)}{n(\gamma-a)}\left(\frac{2}{a} + \frac{4}{\sqrt{T}}\right)\left(2L_Z^2 + \beta^2 R^2\left(1 + 2\sigma_{\max}(\mathbf{A}) \right) \right),
    \end{align*}
    Further, for any $i \in[m]$, the constraint violation is upper bounded by:
    \begin{align*}
      \textstyle\b{E}\left[\sum_{t=1}^{T} g_i\left(\bsm{\theta}_t\right)\right]  \leq& \textstyle\sqrt{1+\delta} \left[ \left(2R + \sqrt{2}C  + 2L\right)\sqrt{T} + \left(\frac{\sqrt{2}}{\sqrt{a}} + \frac{2}{T^{\frac{1}{4}}} \right) \frac{\sigma T^{\frac{3}{4}}}{\sqrt{n}} \right] \\
      & +  \textstyle \frac{2\sqrt{\o{\alpha}(1+\delta) \ln(T)} }{\sqrt{n}} \left(\frac{T^{\frac{1}{4}}}{\sqrt{a}} + \sqrt{2} \right)\left( 2L_Z^2 + \beta^2 R^2\left(1 + 2\sigma_{\max}(\mathbf{A}) \right) \right)^{\frac{1}{2}}.  
    \end{align*}
  \end{theorem}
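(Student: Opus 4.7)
The plan is to derive Theorem \ref{T_converg} by straightforward substitution: Lemma \ref{lem_framew} bounds both the regret and each constraint violation in terms of the accumulated gradient approximation error $E_T := \sum_{t=1}^T \b{E}\|\nabla_{\bsm{\theta}}\h{\rm PR}_t(\bsm{\theta}_t) - \nabla_{\bsm{\theta}}{\rm PR}(\bsm{\theta}_t)\|_2^2$, and Lemma \ref{lem_grad_error} supplies an explicit upper bound on $E_T$. First I would verify that the hypotheses of both lemmas hold under the theorem's assumptions, and then carry out two algebraic substitutions.

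Checking hypotheses is immediate: the theorem inherits Assumptions \ref{assump_f}--\ref{assump_noise}, and the choices $\eta=1/\sqrt{T}$, $\zeta_t = 2/(\kappa_1 t + 2\kappa_3)$, together with $T \geq 32 L_{\mathbf{g}}^2$, are exactly the prerequisites of Lemmas \ref{lem_framew} and \ref{lem_grad_error}. Hence both lemmas apply and the $\delta$ appearing in the theorem is simply the one delivered by Lemma \ref{lem_framew}.

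For the regret, I would substitute $E_T \leq \frac{2T\sigma^2}{n} + \frac{4\o{\alpha}\ln(T)}{n}\bigl(2L_Z^2 + \beta^2 R^2(1+2\sigma_{\max}(\mathbf{A}))\bigr)$ into the regret inequality of Lemma \ref{lem_framew} and distribute the prefactor $\frac{\gamma}{\gamma-a}\bigl(\frac{1}{2a}+\frac{1}{\sqrt{T}}\bigr)$ across the two summands of the error bound. Multiplying the first summand by the prefactor together with its coefficient $2$ yields $\frac{1}{a}+\frac{2}{\sqrt{T}}$ in front of $\sigma^2 T/n$, while multiplying the second summand by the prefactor and its coefficient $4$ yields $\frac{2}{a}+\frac{4}{\sqrt{T}}$ in front of $\o{\alpha}\ln(T)/n$. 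Combining these with the deterministic term $\frac{\gamma\sqrt{T}}{\gamma-a}(2R^2+C^2+2L^2)$ reproduces the stated regret bound exactly.

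For the constraint violation, the error $E_T$ enters inside a square root, so I would first apply $\sqrt{x+y}\leq \sqrt{x}+\sqrt{y}$ to the bound of Lemma \ref{lem_grad_error}, giving $\sqrt{E_T}\leq \sigma\sqrt{2T/n} + 2\sqrt{\o{\alpha}\ln(T)\bigl(2L_Z^2+\beta^2R^2(1+2\sigma_{\max}(\mathbf{A}))\bigr)/n}$. Multiplying by the outer factor $\sqrt{1+\delta}(T^{1/4}/\sqrt{a}+\sqrt{2})$, for the first piece I would combine $\sqrt{2T}$ with $(T^{1/4}/\sqrt{a}+\sqrt{2})$ and pull out $T^{3/4}$ to produce $\sqrt{1+\delta}\bigl(\frac{\sqrt{2}}{\sqrt{a}}+\frac{2}{T^{1/4}}\bigr)\sigma T^{3/4}/\sqrt{n}$; the logarithmic piece is read off directly as $\frac{2\sqrt{\o{\alpha}(1+\delta)\ln(T)}}{\sqrt{n}}\bigl(\frac{T^{1/4}}{\sqrt{a}}+\sqrt{2}\bigr)\bigl(2L_Z^2+\beta^2R^2(1+2\sigma_{\max}(\mathbf{A}))\bigr)^{1/2}$. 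Adding the deterministic $\sqrt{1+\delta}(2R+\sqrt{2}C+2L)\sqrt{T}$ from Lemma \ref{lem_framew} yields the claimed bound. The proof is essentially bookkeeping, with no substantive obstacle; the single care point is tracking the $T$-exponents in the constraint-violation calculation to verify that the interaction between $\sqrt{x+y}\leq \sqrt{x}+\sqrt{y}$ and the outer factor $T^{1/4}/\sqrt{a}+\sqrt{2}$ reproduces precisely the coefficient $\frac{\sqrt{2}}{\sqrt{a}}+\frac{2}{T^{1/4}}$ appearing in the theorem.
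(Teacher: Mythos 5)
Your proposal is correct and follows exactly the paper's own route: the paper proves Theorem \ref{T_converg} precisely by substituting the bound of Lemma \ref{lem_grad_error} into the regret and constraint-violation bounds of Lemma \ref{lem_framew}, with the $\delta$ inherited from Lemma \ref{lem_framew}. Your bookkeeping checks out in both places — distributing $\frac{\gamma}{\gamma-a}\bigl(\frac{1}{2a}+\frac{1}{\sqrt{T}}\bigr)$ over the two error terms gives the stated coefficients $\frac{1}{a}+\frac{2}{\sqrt{T}}$ and $\frac{2}{a}+\frac{4}{\sqrt{T}}$, and applying $\sqrt{x+y}\leq\sqrt{x}+\sqrt{y}$ before multiplying by $\sqrt{1+\delta}\bigl(\frac{T^{1/4}}{\sqrt{a}}+\sqrt{2}\bigr)$ and factoring out $T^{3/4}$ reproduces the constraint-violation bound exactly.
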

  \vspace{-0.4cm}
\end{framed}
\begin{remark}
  Theorem 1 demonstrates that Algorithm \ref{alg_ASPA} achieves $\ca{O}(\sqrt{T}+ T/n)$ regret and $\ca{O}(\sqrt{T} + T^{\frac{3}{4}}/\sqrt{n})$ constraint violations. By setting $n\geq \sqrt{T}$, we have $T/n \leq \sqrt{T} $ and $T^{\frac{3}{4}}/\sqrt{n} \leq \sqrt{T}$, and hence both the regret and constraint violations are upper bounded by $\ca{O}(\sqrt{T})$. This indicates that Algorithm \ref{alg_ASPA} attains the same order of performance as the stochastic primal-dual algorithm without performativity \citep{tan2018stochastic,yan2019stochastic}. 
\end{remark}
\begin{remark}
  Throughout the time horizon $T$, Algorithm \ref{alg_ASPA} requires a total of $\sqrt{T}+2T$ samples. Among these samples, $\sqrt{T}$ samples are dedicated to approximate the expectation over the base component $Z_{0}$. Furthermore, each iteration requires an additional $2$ samples to construct the online least-square objective, accumulating the remaining $2T$ samples.
\end{remark}

\section{ Numerical Experiments}
\label{Sec_simul}

This section verifies the efficacy of our algorithm and theoretical results by conducting numerical experiments on two examples, namely multi-task linear regression and multi-asset portfolio. 

We first consider a multi-task linear regression problem in an undirected graph $\ca{G}:=(\ca{V}, \ca{E})$, where $\ca{V}$ is the node set and $\ca{E}$ is the edge set. Each node $i$ handles a linear regression task ${\rm PR}_i(\bsm{\theta}_i) := \b{E}_{\left(\mathbf{x}_i, y_i\right)\sim \ca{D}_i(\bsm{\theta}_i)} \ell_i\left(\bsm{\theta}_i; (\mathbf{x}_i, y_i)\right)$, where $\bsm{\theta}_i$ is its parameter vector and $\left(\mathbf{x}_i, y_i\right)$ is its feature-label pair. The loss function of each task is $\ell_i\left(\bsm{\theta}_i; (\mathbf{x}_i, y_i)\right) = \frac{1}{2}(y_i - \bsm{\theta}_i^{\top}\mathbf{x}_i)^2$, $\forall i\in\ca{V}$. The parameters of each connected node pair are subject to a proximity constraint $\left\|\bsm{\theta}_i-\bsm{\theta}_j\right\|_2^{2} \leq b_{i j}^{2}$, $\forall (i, j)\in\ca{E}$. The entire network aims to solve the following problem: 
\begin{align*}
  \textstyle\min_{\bsm{\theta}_i, \forall i} \quad \frac{1}{2} \sum_{i\in{\ca{V}}} \b{E}_{(\mathbf{x}_i, y_i)\sim \ca{D}_i(\bsm{\theta}_i)} (y_i - \bsm{\theta}_i^{\top}\mathbf{x}_i)^2 \quad
  {\rm s.t.} \quad \frac{1}{2}\left\|\bsm{\theta}_i-\bsm{\theta}_j\right\|_2^{2} \leq b_{i j}^{2}, \forall (i, j)\in\ca{E}.  
\end{align*}
The second example considers the multi-asset portfolio described in Example \ref{examp}. The simulation details are provided in \cref{append_experm} of the supplementary file. 

We compare the proposed adaptive primal-dual algorithm (abbreviated as APDA) with two approaches. The first approach is ``PD-PS'', which stands for the primal-dual (PD) algorithm used to find the performative stable (PS) points. The algorithm PD-PS is similar to APDA, but it uses only the first term in Equation \eqref{Eq_grad_approx} as the approximate gradient. The second approach is ``baseline'', which runs the same procedures as APDA with perfect knowledge of $\mathbf{A}$, i.e., the performative effect is known. We consider four performance metrics: $1)$ relative time-average regret $\frac{{\rm Reg}(t)}{t\cdot{\rm Reg}(1)}$, $2)$ relative time-average constraint violation $\frac{{\rm Vio}_{m}(t)}{t\cdot|{\rm Vio}_{m}(1)|}$, $3)$ decision deviation $\|\bsm{\theta}_t - \bsm{\theta}_{\rm PO}\|_2^2$, and $4)$ parameter estimation error $\|\h{\mathbf{A}}_t - \mathbf{A} \|_{\rm F}^2$.

\begin{figure}[t]
  \centering 
  \subfigure{\includegraphics[scale = 0.5]{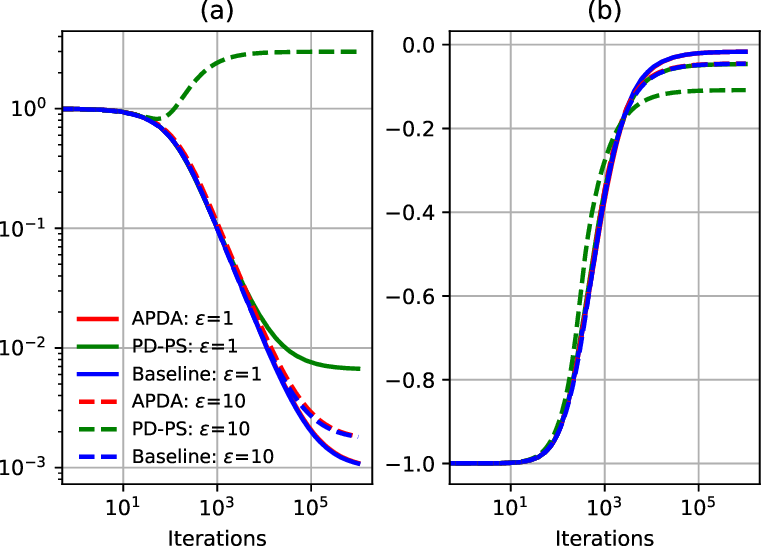}}\hspace{-0.05cm}
  \subfigure{\includegraphics[scale = 0.5]{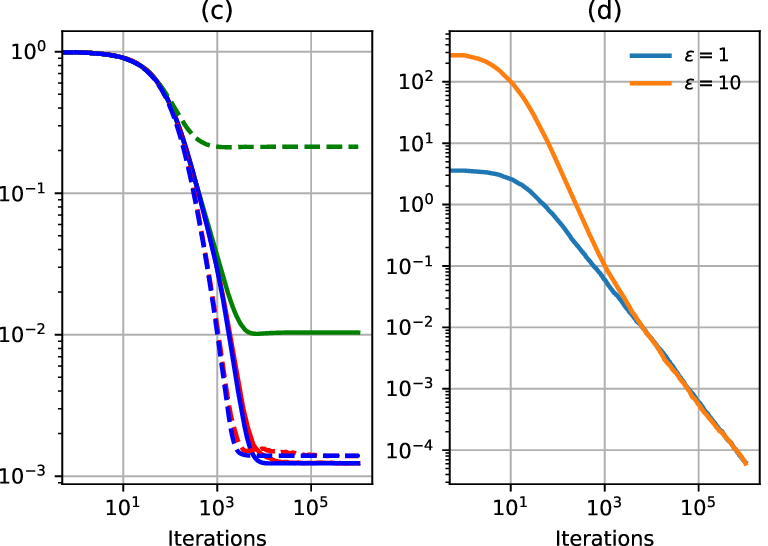}}
  \caption{Multi-task linear regression. (a) $\frac{{\rm Reg}(t)}{t\cdot{\rm Reg}(1)}$; (b) $\frac{{\rm Vio}_{m}(t)}{t\cdot|{\rm Vio}_{m}(1)|}$; (c) $\|\bsm{\theta}_t - \bsm{\theta}_{\rm PO}\|_2^2$; (d) $\|\h{\mathbf{A}}_t - \mathbf{A} \|_{\rm F}^2$. } \label{linearR}
\end{figure}
\begin{figure}[t]
  \centering 
  \subfigure{\includegraphics[scale = 0.5]{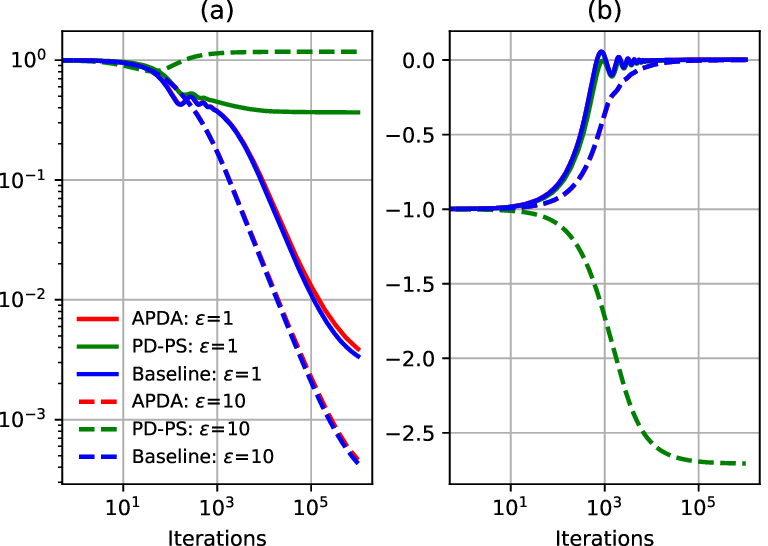}}\hspace{-0.05cm}
  \subfigure{\includegraphics[scale = 0.5]{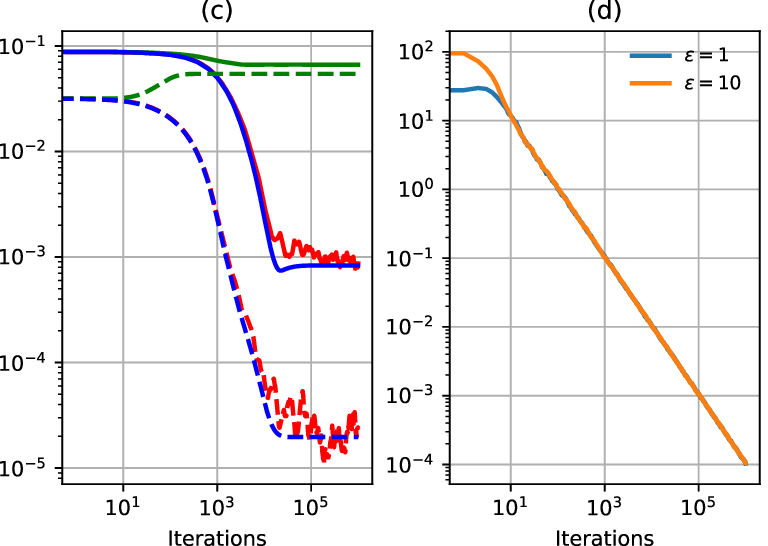}} %
  \caption{Multi-asset portfolio. (a) $\frac{{\rm Reg}(t)}{t\cdot{\rm Reg}(1)}$; (b) $\frac{{\rm Vio}_{m}(t)}{t\cdot|{\rm Vio}_{m}(1)|}$; (c) $\|\bsm{\theta}_t - \bsm{\theta}_{\rm PO}\|_2^2$; (d) $\|\h{\mathbf{A}}_t - \mathbf{A} \|_{\rm F}^2$. } \label{portfolio}
\end{figure} 

Fig~\ref{linearR} and Fig~\ref{portfolio} show the numerical results of the multi-task linear regression and the multi-asset portfolio, respectively. In both figures, we consider two settings for the sensitivity parameter of $\ca{D}(\bsm{\theta})$, namely $\varepsilon = 1$ and $\varepsilon = 10$. The results of these two figures are qualitatively analogous. First, we observe that APDA outperforms PD-PS significantly that both the relative time-average regret and the decision derivation of the former achieve an accuracy around or up to $10^{-3}$ for the setting of $T=10^6$, while these of the latter have worse performance for $\varepsilon = 1$ and converge to constants for $\varepsilon = 10$. The relative time-average constraint of all cases converges to zero or negative numbers. This corroborates the sublinearity of the regret and the constraint violations of APDA, as shown in Theorem \ref{alg_ASPA}. More importantly, this result implies that the larger the sensitivity parameter $\varepsilon$, the stronger the performative power is and consequently the worse PD-PS performs. In contrast, by estimating the performative gradient, APDA adapts to the unknown performative effect and performs well constantly. Moreover, both subfigures (d) show that the error of parameter estimates decreases exponentially with iterations, validating the effectiveness of the online parameter estimation. Last but not least, the performance of APDA is close to the performance of the baseline, which verifies the effectiveness of our proposed algorithm. 

\section{Conclusions} 
This paper has studied the performative prediction problem under inequality constraints, where the agnostic performative effect of decisions changes future data distributions. To find the performative optima for the problem, we have developed a robust primal-dual framework that admits inexact gradients up to an accuracy of $\ca{O}(\sqrt{T})$, yet delivers the same $\ca{O}(\sqrt{T})$ regret and constraint violations as the stochastic primal-dual algorithm without performativity. Then, based on this framework, we have proposed an adaptive primal-dual algorithm for location families with effective gradient approximation that meets the desired accuracy using only $\sqrt{T} + 2T$ samples. Numerical experiments have validated the effectiveness of our algorithm and theoretical results.

\clearpage
\medskip
{
\small
\bibliographystyle{ACM-Reference-Format}
\bibliography{nips_ref.bib}
}


\clearpage

\begin{appendices}

\section{Proof of Lemma \ref{lem_sens}}
\label{Append_sens}
\setcounter{lemma}{0}
\renewcommand{\thelemma}{A\arabic{lemma}}
\setcounter{defin}{0}
\renewcommand{\thedefin}{A\arabic{defin}}
\setcounter{page}{1}

The $\varepsilon$-sensitivity of distributions is defined below.
\begin{defin}[\textbf{$\varepsilon$-Sensitivity}]
  A distribution $\ca{D}(\cdot)$ is called $\varepsilon$-sensitive if for any $\bsm{\theta}, \bsm{\theta}^{\prime} \in \bsm{\Theta}$, there exists a constant  $\varepsilon>0$ such that
  \begin{align*}
    \ca{W}_{1}\left(\ca{D}(\bsm{\theta}), \ca{D}\left(\bsm{\theta}^{\prime}\right)\right) \leq \varepsilon\left\|\bsm{\theta} - \bsm{\theta}^{\prime}\right\|_2,
  \end{align*}
  where $ \ca{W}_{1}\left(\ca{D}, \ca{D}^{\prime}\right)$ denotes the Wasserstein-1 distance. \label{def_a1}
\end{defin}
Next, we provide the following lemma.  
\begin{lemma}  \label{append_lem_a1}
  Suppose that the distribution map $\ca{D}(\bsm{\theta})$ forms a location family \eqref{Eq_dis}. Then, we have
  \begin{align*}
    \ca{W}_1\left(\ca{D}(\bsm{\theta}), \ca{D}\left(\bsm{\theta}^{\prime}\right)\right) \leq \left\|\mathbf{A}\left(\bsm{\theta} - \bsm{\theta}^{\prime}\right)\right\|_{2}.
  \end{align*}
\end{lemma}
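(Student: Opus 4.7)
The plan is to use the coupling (primal) characterization of the Wasserstein-1 distance and exhibit an explicit coupling that saturates the bound up to a deterministic quantity. Recall that
\[
\ca{W}_1(\ca{D}(\bsm{\theta}), \ca{D}(\bsm{\theta}')) = \inf_{\pi \in \Pi(\ca{D}(\bsm{\theta}), \ca{D}(\bsm{\theta}'))} \b{E}_{(Z,Z')\sim \pi}\!\left[\|Z - Z'\|_2\right],
\]
where $\Pi(\cdot,\cdot)$ denotes the set of joint distributions with the prescribed marginals. Since any upper bound produced by a specific coupling upper-bounds the infimum, it suffices to exhibit a single convenient coupling.

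The natural choice is the \emph{synchronous} coupling induced by the location-family structure: draw a single $Z_0 \sim \ca{D}_0$ and set $Z := Z_0 + \mathbf{A}\bsm{\theta}$, $Z' := Z_0 + \mathbf{A}\bsm{\theta}'$. By \eqref{Eq_dis}, the marginal of $Z$ is exactly $\ca{D}(\bsm{\theta})$ and the marginal of $Z'$ is exactly $\ca{D}(\bsm{\theta}')$, so this defines a valid element of $\Pi(\ca{D}(\bsm{\theta}), \ca{D}(\bsm{\theta}'))$.

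Under this coupling the difference is deterministic: $Z - Z' = \mathbf{A}(\bsm{\theta} - \bsm{\theta}')$, independent of $Z_0$. Hence $\b{E}\|Z - Z'\|_2 = \|\mathbf{A}(\bsm{\theta} - \bsm{\theta}')\|_2$. Taking the infimum over all couplings on the left-hand side of the definition yields $\ca{W}_1(\ca{D}(\bsm{\theta}), \ca{D}(\bsm{\theta}')) \leq \|\mathbf{A}(\bsm{\theta}-\bsm{\theta}')\|_2$, which is the claim.

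\textbf{Main obstacle.} There is essentially no technical obstacle here; the only thing that needs care is correctly invoking the coupling definition of $\ca{W}_1$ and verifying that the synchronous construction has the right marginals, both of which follow directly from the location-family hypothesis \eqref{Eq_dis}. (If one preferred a dual-side proof, one could alternatively apply Kantorovich–Rubinstein duality and bound the integral against any $1$-Lipschitz test function by Cauchy–Schwarz, but the coupling argument is shorter and avoids invoking duality.)
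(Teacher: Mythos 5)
Your proof is correct and follows essentially the same route as the paper: both bound $\ca{W}_1$ by evaluating the coupling in which $Z$ and $Z'$ share the same base draw $Z_0$, making $Z - Z' = \mathbf{A}(\bsm{\theta}-\bsm{\theta}')$ deterministic. If anything, your write-up is slightly more careful than the paper's, since you explicitly verify that the synchronous coupling has the correct marginals, whereas the paper leaves that implicit.
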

\begin{proof}
  By definition, $\ca{W}_1\left(\ca{D}(\bsm{\theta}), \ca{D}\left(\bsm{\theta}^{\prime}\right)\right) := \inf _{\Gamma\left(\ca{D}(\bsm{\theta}), \ca{D}\left(\bsm{\theta}^{\prime}\right)\right)} \b{E}_{\left(Z_{\bsm{\theta}}, Z_{\bsm{\theta}^{\prime}}\right) \sim \left(\ca{D}(\bsm{\theta}), \ca{D}\left(\bsm{\theta}^{\prime}\right)\right)}\left\|Z_{\bsm{\theta}} - Z_{\bsm{\theta}^{\prime}}\right\|_{2}$, where $\Gamma\left(\ca{D}(\bsm{\theta}), \ca{D}\left(\bsm{\theta}^{\prime}\right)\right)$ is the set of all couplings of the distributions $\ca{D}(\bsm{\theta})$ and $\ca{D}\left(\bsm{\theta}^{\prime}\right)$. One way to couple $\ca{D}(\bsm{\theta})$ and $\ca{D}\left(\bsm{\theta}^{\prime}\right)$ is to set $Z_{\bsm{\theta}} \sim \ca{D}(\bsm{\theta})$ and $Z_{\bsm{\theta}^{\prime}} \sim \ca{D}\left(\bsm{\theta}^{\prime}\right)$. Under this setting, with the definition of $\ca{D}(\bsm{\theta})$ \eqref{Eq_dis}, we have $\b{E}_{\left(Z_{\bsm{\theta}}, Z_{\bsm{\theta}^{\prime}}\right) \sim \left(\ca{D}(\bsm{\theta}), \ca{D}\left(\bsm{\theta}^{\prime}\right)\right)} \left\|Z_{\bsm{\theta}} - Z_{\bsm{\theta}^{\prime}}\right\|_{2} = \left\|\mathbf{A}\left(\bsm{\theta} - \bsm{\theta}^{\prime}\right)\right\|_{2}$, and hence $\ca{W}_1\left(\ca{D}(\bsm{\theta}), \ca{D}\left(\bsm{\theta}^{\prime}\right)\right) \leq \left\|\mathbf{A}\left(\bsm{\theta} - \bsm{\theta}^{\prime}\right)\right\|_{2}$.
\end{proof}
Define $\sigma_{\max}(\mathbf{A}):=\max_{\|\bsm{\theta}\|_{2}=1}\|\mathbf{A} \bsm{\theta}\|_{2}$, we have $\left\|\mathbf{A}\left(\bsm{\theta} - \bsm{\theta}^{\prime}\right)\right\|_{2} \leq \sigma_{\max}(\mathbf{A})  \left\|\bsm{\theta} - \bsm{\theta}^{\prime}\right\|_{2}$. By Lemma \ref{append_lem_a1}, $\ca{W}_1\left(\ca{D}(\bsm{\theta}), \ca{D}\left(\bsm{\theta}^{\prime}\right)\right) \leq \sigma_{\max}(\mathbf{A})  \left\|\bsm{\theta} - \bsm{\theta}^{\prime}\right\|_{2}$. By Definition \ref{def_a1}, the sensitivity parameter $\varepsilon \leq \sigma_{\max}(\mathbf{A})$, which proves Lemma \ref{lem_sens}. 

\section{Proof of Lemma \ref{lem_loss}}
\label{Append_loss} 
\setcounter{equation}{0}
\renewcommand{\theequation}{b\arabic{equation}}

\begin{proof}[Proof of the $L$-Lipschitz continuity of ${\rm PR}(\bsm{\theta})$]
  To show the $L$-Lipschitz continuity of ${\rm PR}(\bsm{\theta})$, it suffices to show that there exists a positive constant $L$ such that, for any $\bsm{\theta},\bsm{\theta}^{\prime} \in \bsm{\Theta}$, $\|{\rm PR}(\bsm{\theta}) - {\rm PR}(\bsm{\theta}^{\prime})\|_2 \leq L\|\bsm{\theta} - \bsm{\theta}^{\prime}\|_2$. By Assumption \ref{assump_f}, the loss function $\ell(\bsm{\theta}; Z)$ is $L_{\bsm{\theta}}$-Lipschitz continous in $\bsm{\theta}$ and $L_Z$-Lipschitz continous in $Z$, i.e., 
  \begin{align*}
    &\left\|\ell\left(\bsm{\theta}; Z\right) - \ell\left(\bsm{\theta}^{\prime}; Z^{\prime}\right) \right\|_2 \leq L_{\bsm{\theta}}\left\|\bsm{\theta}-\bsm{\theta}^{\prime}\right\|_2 + L_Z\left\| Z - Z^{\prime} \right\|_2 , \forall \bsm{\theta},\bsm{\theta}^{\prime} \in \bsm{\Theta}.
  \end{align*}
  Then, we have
  \begin{align*}
    &\left\|\underset{Z_0\sim \ca{D}_0}{\b{E}} \ell\left(\bsm{\theta}; Z_0 + \mathbf{A}\bsm{\theta}\right) - \underset{Z_0\sim \ca{D}_0}{\b{E}} \ell\left(\bsm{\theta}^{\prime}; Z_0 + \mathbf{A}\bsm{\theta}^{\prime}\right) \right\|_2 \\
    & \leq \underset{Z_0\sim \ca{D}_0}{\b{E}} \left\| \ell\left(\bsm{\theta}; Z_0 + \mathbf{A}\bsm{\theta}\right) - \ell\left(\bsm{\theta}^{\prime}; Z_0 + \mathbf{A}\bsm{\theta}^{\prime}\right) \right\|_2  \\
    & \leq L_{\bsm{\theta}} \left\|\bsm{\theta}-\bsm{\theta}^{\prime}\right\|_2 + L_Z\left\| \mathbf{A} \left(\bsm{\theta}-\bsm{\theta}^{\prime}\right) \right\|_2 \\
    & \leq \left(L_{\bsm{\theta}} + L_Z\sigma_{\max}(\mathbf{A}) \right)\left\|\bsm{\theta}-\bsm{\theta}^{\prime}\right\|_2.
  \end{align*}
  This indicates that there exists a constant $L \leq L_{\bsm{\theta}} + L_Z\sigma_{\max}(\mathbf{A})$ such that $\left\|{\rm PR}(\bsm{\theta}) - {\rm PR}(\bsm{\theta}^{\prime})\right\|_2 \leq L \left\|\bsm{\theta}-\bsm{\theta}^{\prime}\right\|_2$, $\forall \bsm{\theta},\bsm{\theta}^{\prime} \in \bsm{\Theta}$, which proves the $L$-Lipschitz continuity of ${\rm PR}(\bsm{\theta})$. 
\end{proof}

\begin{proof}[Proof of the $\gamma$-strongly convexity of ${\rm PR}(\bsm{\theta})$]
  By Assumption \ref{assump_f}, $\ell\left(\bsm{\theta}; Z\right)$ is $\gamma_Z$-strongly convex in $Z$. Then, we have
  \begin{align}
    \underset{Z\sim \ca{D}(\bsm{\theta}) }{\b{E}}\ell\left(\bsm{\theta}; Z \right) 
    \geq&  \underset{Z\sim \ca{D}\left(\alpha\bsm{\theta} + (1-\alpha)\bsm{\theta}^{\prime}\right)}{\b{E}}\ell\left(\bsm{\theta}; Z\right) 
    + \frac{(1-\alpha)^2 \gamma_{Z}}{2} \left\|\mathbf{A}\left(\bsm{\theta} - \bsm{\theta}^{\prime} \right)\right\|_{2}^{2} \notag\\
    &+  (1-\alpha) \left(\nabla_Z \underset{Z\sim \ca{D}\left(\alpha\bsm{\theta} + (1-\alpha)\bsm{\theta}^{\prime}\right)}{\b{E}}\ell\left(\bsm{\theta}; Z\right) \right)^{\top}\mathbf{A}\left(\bsm{\theta} - \bsm{\theta}^{\prime} \right), \label{Eq_b_1} \\
    \underset{Z\sim \ca{D}(\bsm{\theta}^{\prime}) }{\b{E}}\ell\left(\bsm{\theta}; Z \right) 
    \geq&  \underset{Z\sim \ca{D}\left(\alpha\bsm{\theta} + (1-\alpha)\bsm{\theta}^{\prime}\right)}{\b{E}}\ell\left(\bsm{\theta}; Z\right) 
    + \frac{\alpha^2 \gamma_{Z}}{2} \left\|\mathbf{A}\left(\bsm{\theta} - \bsm{\theta}^{\prime} \right)\right\|_{2}^{2}  \notag\\
    & -\alpha \left(\nabla_Z \underset{Z\sim \ca{D}\left(\alpha\bsm{\theta} + (1-\alpha)\bsm{\theta}^{\prime}\right)}{\b{E}}\ell\left(\bsm{\theta}; Z\right) \right)^{\top}\mathbf{A}\left(\bsm{\theta} - \bsm{\theta}^{\prime} \right) .\label{Eq_b_2}
  \end{align}
  By $\alpha\eqref{Eq_b_1} + (1-\alpha)\eqref{Eq_b_2}$, we obtain
  \begin{align}
    &\alpha\underset{Z\sim \ca{D}(\bsm{\theta}) }{\b{E}}\ell\left(\bsm{\theta}; Z \right) + (1-\alpha)\underset{Z\sim \ca{D}(\bsm{\theta}^{\prime}) }{\b{E}}\ell\left(\bsm{\theta}; Z \right) \notag\\
    &\geq \underset{Z\sim \ca{D}\left(\alpha\bsm{\theta} + (1-\alpha)\bsm{\theta}^{\prime}\right)}{\b{E}}\ell\left(\bsm{\theta}; Z\right)
    + \frac{\alpha(1-\alpha)\gamma_{Z}}{2} \left\|\mathbf{A}\left(\bsm{\theta} - \bsm{\theta}^{\prime} \right)\right\|_{2}^{2}. \label{Eq_b_3}
  \end{align}
  In \eqref{Eq_b_3}, fixing the first augment of $\ell\left(\bsm{\theta}; Z\right)$ at $\bsm{\theta}_0$, for any $\bsm{\theta}_0 \in\bsm{\Theta}$, and substracting $\frac{\gamma_Z}{2} \left\|\mathbf{A}\left(\alpha\bsm{\theta} + (1-\alpha)\bsm{\theta}^{\prime}\right)\right\|_{2}^{2}$ on both sides, we obtain
  \begin{align}
    & \underset{Z\sim \ca{D}\left(\alpha\bsm{\theta} + (1-\alpha)\bsm{\theta}^{\prime}\right)}{\b{E}}\ell\left(\bsm{\theta}_0; Z\right)
    - \frac{\gamma_Z}{2} \left\|\mathbf{A}\left(\alpha\bsm{\theta} + (1-\alpha)\bsm{\theta}^{\prime}\right)\right\|_{2}^{2} \notag\\
    &\leq \alpha\underset{Z\sim \ca{D}(\bsm{\theta}) }{\b{E}}\ell\left(\bsm{\theta}_0; Z \right) + (1-\alpha)\underset{Z\sim \ca{D}(\bsm{\theta}^{\prime}) }{\b{E}}\ell\left(\bsm{\theta}_0; Z \right) - \frac{\alpha(1-\alpha)\gamma_{Z}}{2} \left\|\mathbf{A}\left(\bsm{\theta} - \bsm{\theta}^{\prime} \right)\right\|_{2}^{2} \notag\\
    &\quad - \frac{\gamma_Z}{2} \left\|\mathbf{A}\left(\alpha\bsm{\theta} + (1-\alpha)\bsm{\theta}^{\prime}\right)\right\|_{2}^{2} \notag\\
    & = \alpha\left( \underset{Z\sim \ca{D}(\bsm{\theta}) }{\b{E}}\ell\left(\bsm{\theta}_0; Z \right) - \frac{\gamma_Z}{2}\left\|\mathbf{A}\bsm{\theta}\right\|_{2}^{2} \right) + (1-\alpha)\left(\underset{Z\sim \ca{D}(\bsm{\theta}^{\prime}) }{\b{E}}\ell\left(\bsm{\theta}_0; Z \right) - \frac{\gamma_Z}{2}\left\|\mathbf{A}\bsm{\theta}^{\prime}\right\|_{2}^{2} \right). \label{Eq_b_4}
  \end{align}
  Eq.~\eqref{Eq_b_4} demonstrates that the function $\b{E}_{Z\sim \ca{D}(\bsm{\theta})}\ell\left(\bsm{\theta}_0; Z\right) - \frac{\gamma_Z}{2}\left\|\mathbf{A}\bsm{\theta}\right\|_{2}^{2}$ is convex in $\bsm{\theta}$ for any given $\bsm{\theta}_0 \in\bsm{\Theta}$. By the equivalent first-order characterization, we have
  \begin{align*}
    \underset{Z\sim\ca{D}(\bsm{\theta}^{\prime})}{\b{E}}\ell\left(\bsm{\theta}_0; Z \right) 
      \geq& \frac{\gamma_Z}{2}\left\|\mathbf{A}\bsm{\theta}^{\prime}\right\|_2^2  + \underset{Z\sim \ca{D}(\bsm{\theta})}{\b{E}}\ell\left(\bsm{\theta}_0; Z\right) - \frac{\gamma_Z}{2}\left\|\mathbf{A}\bsm{\theta}\right\|_2^2 \\
      & + \left(\underset{Z\sim \ca{D}(\bsm{\theta})}{\b{E}} \mathbf{A}^{\top}\nabla_{Z}\ell\left(\bsm{\theta}_0; Z \right)\right)^{\top} \left(\bsm{\theta}^{\prime} - \bsm{\theta}\right)
      - \gamma_Z \left(\mathbf{A}^{\top}\mathbf{A}\bsm{\theta}\right)^{\top}\left(\bsm{\theta}^{\prime} - \bsm{\theta}\right) \\
      =& \underset{Z\sim \ca{D}(\bsm{\theta})}{\b{E}}\ell\left(\bsm{\theta}_0; Z \right) 
      + \left(\underset{Z\sim \ca{D}(\bsm{\theta})}{\b{E}} \mathbf{A}^{\top}\nabla_{Z}\ell\left(\bsm{\theta}_0; Z\right)\!\right)^{\top}\!\!\left(\bsm{\theta}^{\prime} - \bsm{\theta}\right) 
       + \frac{\gamma_Z}{2} \left\| \mathbf{A} \left(\bsm{\theta} - \bsm{\theta}^{\prime}\right)\right\|_2^2.
  \end{align*}
  Setting $\bsm{\theta}_0=\bsm{\theta}$ gives
  \begin{align}
    &\left(\underset{Z\sim \ca{D}(\bsm{\theta})}{\b{E}} \mathbf{A}^{\top}\nabla_{Z}\ell\left(\bsm{\theta}; Z\right)\right)^{\top} \left(\bsm{\theta}^{\prime} - \bsm{\theta}\right) \notag\\
    &\leq \underset{Z \sim \ca{D}(\bsm{\theta}^{\prime})}{\b{E}}\ell\left(\bsm{\theta}; Z \right) - \underset{Z\sim \ca{D}(\bsm{\theta})}{\b{E}}\ell\left(\bsm{\theta}; Z \right) - \frac{\gamma_Z}{2} \left\| \mathbf{A} \left(\bsm{\theta} - \bsm{\theta}^{\prime}\right)\right\|_2^2.  \label{Eq_b_5}
  \end{align}
  Further, since $\ell(\bsm{\theta}; Z)$ is $\gamma_{\bsm{\theta}}$-strongly convex in $\bsm{\theta}$, we have 
  \begin{align}
    \underset{Z\sim \ca{D}(\bsm{\theta}^{\prime})}{\b{E}}\ell\left(\bsm{\theta}; Z\right)  \leq \underset{Z\sim \ca{D}(\bsm{\theta}^{\prime})}{\b{E}}\ell\left(\bsm{\theta}^{\prime}; Z\right) - \left(\underset{Z\sim \ca{D}(\bsm{\theta}^{\prime})}{\b{E}}\nabla_{\bsm{\theta}} \ell\left(\bsm{\theta}; Z \right)\right)^{\top}\left(\bsm{\theta}^{\prime} - \bsm{\theta}\right) - \frac{\gamma_{\bsm{\theta}}}{2} \left\|\bsm{\theta} - \bsm{\theta}^{\prime}\right\|_2^2.  \label{Eq_b_6}
  \end{align}
  Plugging \eqref{Eq_b_6} into \eqref{Eq_b_5} yields
  \begin{align*}
    &\left(\underset{Z\sim \ca{D}(\bsm{\theta})}{\b{E}} \mathbf{A}^{\top}\nabla_{Z}\ell\left(\bsm{\theta}; Z\right)\right)^{\top} \left(\bsm{\theta}^{\prime} - \bsm{\theta}\right) 
    + \left(\underset{Z\sim \ca{D}(\bsm{\theta}^{\prime})}{\b{E}}\nabla_{\bsm{\theta}} \ell\left(\bsm{\theta}; Z \right)\right)^{\top}\left(\bsm{\theta}^{\prime} - \bsm{\theta}\right) \\
    &\leq \underset{Z\sim \ca{D}(\bsm{\theta}^{\prime})}{\b{E}}\ell\left(\bsm{\theta}^{\prime}; Z\right)  - \underset{Z\sim \ca{D}(\bsm{\theta})}{\b{E}}\ell\left(\bsm{\theta}; Z \right) - \frac{\gamma_Z}{2} \left\| \mathbf{A} \left(\bsm{\theta} - \bsm{\theta}^{\prime}\right)\right\|_2^2
    - \frac{\gamma_{\bsm{\theta}}}{2} \left\|\bsm{\theta} - \bsm{\theta}^{\prime}\right\|_2^2.
  \end{align*}
  Rearranging the terms in the above inequality gives
  \begin{align}
    {\rm PR}(\bsm{\theta}^{\prime}) \geq& {\rm PR}(\bsm{\theta}) + \nabla_{\bsm{\theta}}{\rm PR}(\bsm{\theta})\left(\bsm{\theta}^{\prime} - \bsm{\theta}\right)
    + \frac{\gamma_Z}{2} \left\| \mathbf{A} \left(\bsm{\theta} - \bsm{\theta}^{\prime}\right)\right\|_{2}^{2}
    + \frac{\gamma_{\bsm{\theta}}}{2} \left\|\bsm{\theta} - \bsm{\theta}^{\prime}\right\|_2^2 \notag\\
    &+ \left(\underset{Z\sim \ca{D}(\bsm{\theta}^{\prime})}{\b{E}}\nabla_{\bsm{\theta}} \ell\left(\bsm{\theta}; Z \right) - \underset{Z\sim \ca{D}(\bsm{\theta})}{\b{E}}\nabla_{\bsm{\theta}} \ell\left(\bsm{\theta}; Z \right) \right)^{\top} \left(\bsm{\theta}^{\prime} - \bsm{\theta}\right).  \label{Eq_b_7}
  \end{align}
  By the $\beta$-smoothness of ${\rm PR}(\bsm{\theta})$, we have 
  \begin{align}
    &\left(\underset{Z\sim \ca{D}(\bsm{\theta}^{\prime})}{\b{E}}\nabla_{\bsm{\theta}} \ell\left(\bsm{\theta}; Z \right) - \underset{Z\sim \ca{D}(\bsm{\theta})}{\b{E}}\nabla_{\bsm{\theta}} \ell\left(\bsm{\theta}; Z \right) \right)^{\top} \left(\bsm{\theta}^{\prime} - \bsm{\theta}\right) \notag\\
    &\geq - \beta\left\| \mathbf{A} \left(\bsm{\theta} - \bsm{\theta}^{\prime}\right)\right\|_{2}\left\|\bsm{\theta} - \bsm{\theta}^{\prime}\right\|_2 \notag\\ 
    &\geq - \frac{\gamma_Z}{2}\left\| \mathbf{A} \left(\bsm{\theta} - \bsm{\theta}^{\prime}\right)\right\|_{2}^{2} - \frac{\beta^2}{2\gamma_Z}\left\|\bsm{\theta} - \bsm{\theta}^{\prime}\right\|_2^2. \label{Eq_b_8}
  \end{align}
  Plugging \eqref{Eq_b_8} into \eqref{Eq_b_7} yields
  \begin{align*}
    {\rm PR}(\bsm{\theta}^{\prime}) \geq& {\rm PR}(\bsm{\theta}) + \nabla_{\bsm{\theta}}{\rm PR}(\bsm{\theta})\left(\bsm{\theta}^{\prime} - \bsm{\theta}\right)
    + \frac{1}{2}\left(\gamma_{\bsm{\theta}} - \frac{\beta^2}{\gamma_Z}\right)\left\|\bsm{\theta} - \bsm{\theta}^{\prime}\right\|_2^2.
  \end{align*}
  Therefore, the convexity parameter of ${\rm PR}(\bsm{\theta})$ satisfies $\gamma \geq \gamma_{\bsm{\theta}} - \frac{\beta^2}{\gamma_Z}$. In addition, by the $\varepsilon$-sensitivity of ${\rm PR}(\bsm{\theta})$, we have
  \begin{align}
    & \left(\underset{Z\sim \ca{D}(\bsm{\theta}^{\prime})}{\b{E}} \nabla_{\bsm{\theta}} \ell\left(\bsm{\theta}; Z\right) - \underset{Z_0\sim \ca{D}(\bsm{\theta})}{\b{E}} \nabla_{\bsm{\theta}} \ell\left(\bsm{\theta}; Z\right)\right)^{\top}\left(\bsm{\theta}^{\prime} - \bsm{\theta}\right) \geq - \varepsilon \beta\left\|\bsm{\theta} - \bsm{\theta}^{\prime}\right\|_2^2.   \label{Eq_b_9}
  \end{align}
  Plugging \eqref{Eq_b_9} into \eqref{Eq_b_7} yields
  \begin{align*}
    {\rm PR}(\bsm{\theta}^{\prime}) \geq& {\rm PR}(\bsm{\theta}) + \nabla_{\bsm{\theta}}{\rm PR}(\bsm{\theta})\left(\bsm{\theta}^{\prime} - \bsm{\theta}\right)
    + \frac{1}{2}\left(\gamma_{\bsm{\theta}}-2 \varepsilon \beta+\gamma_{Z}\sigma_{\min }^{2}(\mathbf{A})\right)\left\|\bsm{\theta} - \bsm{\theta}^{\prime}\right\|_2^2,
  \end{align*}
  where $\sigma_{\min}(\mathbf{A}):=\min_{\|\bsm{\theta}\|_{2}=1}\|\mathbf{A} \bsm{\theta}\|_{2}$. Thus, we also have $\gamma \geq \gamma_{\bsm{\theta}}-2 \varepsilon \beta+\gamma_{Z}\sigma_{\min }^{2}(\mathbf{A})$. Combining the above results, we obtain $\gamma \geq \max \left\{\gamma_{\bsm{\theta}} -\beta^{2} / \gamma_{Z}, \gamma_{\bsm{\theta}}-2 \varepsilon \beta+\gamma_{Z}\sigma_{\min }^{2}(\mathbf{A})\right\}$, which proves the $\gamma$-strongly convexity of ${\rm PR}(\bsm{\theta})$.
\end{proof}

\section{Proof of Lemma \ref{lem_framew}}
\label{Append_framew}
\setcounter{lemma}{0}
\renewcommand{\thelemma}{C\arabic{lemma}}
\setcounter{equation}{0}
\renewcommand{\theequation}{c\arabic{equation}}
\setcounter{fact}{0}
\renewcommand{\thefact}{C\arabic{fact}}

The proof of Lemma \ref{lem_framew} utilizes the following two supporting lemmas.
\begin{lemma} \label{Append_lem_L-L}
  Consider the update steps \eqref{Eq_primal_grad} and \eqref{Eq_dual_grad}. Under Assumptions \ref{assump_f}-\ref{assump_g}, for any $\bsm{\theta} \in \bsm{\Theta}$, $\bsm{\lambda} \in \b{R}_+^m$, and $t\in[T]$, the Lagrangian \eqref{Equ_Lag} satisfies:
  \begin{align}
    \sum_{t=1}^T \left(\ca{L}\left(\bsm{\theta}_t,\bsm{\lambda}\right) - \ca{L}\left(\bsm{\theta},\bsm{\lambda}_t\right) \right) 
    \leq& \frac{2R^2}{\eta} + \frac{\left\|\bsm{\lambda}\right\|_2^{2}}{2 \eta} + \frac{\eta}{2}\sum_{t=1}^T\left\|\nabla_{\bsm{\lambda}} \ca{L}(\bsm{\theta}_t, \bsm{\lambda}_t)\right\|_2^{2} + \frac{\eta}{2}\sum_{t=1}^T\left\|\nabla_{\bsm{\theta}} \h{\ca{L}}_t(\bsm{\theta}_t, \bsm{\lambda}_t)\right\|_2^{2} \notag\\
    &+ \sum_{t=1}^T \left\langle \bsm{\theta}_t - \bsm{\theta}, \nabla_{\bsm{\theta}} \ca{L} \left(\bsm{\theta}_t, \bsm{\lambda}_t \right) - \nabla_{\bsm{\theta}} \h{\ca{L}}_{t}\left(\bsm{\theta}_t, \bsm{\lambda}_t\right)\right\rangle, \label{Eq_t_lem1}
  \end{align}
  where $\b{R}_+$ represents the set of non-negative real numbers.   
\end{lemma}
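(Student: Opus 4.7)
The plan is to run the standard ``three-point'' analysis for projected primal-dual updates, modified to accommodate the inexact primal gradient, and then telescope the resulting per-step inequalities.

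First, I would apply the non-expansiveness of the Euclidean projection onto $\bsm{\Theta}$ (Assumption \ref{assump_set}) and the non-expansiveness of $[\cdot]^+$ to the updates \eqref{Eq_primal_grad} and \eqref{Eq_dual_grad}. For any fixed comparators $\bsm{\theta}\in\bsm{\Theta}$ and $\bsm{\lambda}\in\b{R}_+^m$, expanding $\|\bsm{\theta}_{t+1}-\bsm{\theta}\|_2^2$ and $\|\bsm{\lambda}_{t+1}-\bsm{\lambda}\|_2^2$ and rearranging yields
\begin{align*}
\langle\nabla_{\bsm{\theta}}\h{\ca{L}}_t(\bsm{\theta}_t,\bsm{\lambda}_t),\bsm{\theta}_t-\bsm{\theta}\rangle &\leq \tfrac{1}{2\eta}\!\left(\|\bsm{\theta}_t-\bsm{\theta}\|_2^2-\|\bsm{\theta}_{t+1}-\bsm{\theta}\|_2^2\right)+\tfrac{\eta}{2}\|\nabla_{\bsm{\theta}}\h{\ca{L}}_t(\bsm{\theta}_t,\bsm{\lambda}_t)\|_2^2,\\
\langle\nabla_{\bsm{\lambda}}\ca{L}(\bsm{\theta}_t,\bsm{\lambda}_t),\bsm{\lambda}-\bsm{\lambda}_t\rangle &\leq \tfrac{1}{2\eta}\!\left(\|\bsm{\lambda}_t-\bsm{\lambda}\|_2^2-\|\bsm{\lambda}_{t+1}-\bsm{\lambda}\|_2^2\right)+\tfrac{\eta}{2}\|\nabla_{\bsm{\lambda}}\ca{L}(\bsm{\theta}_t,\bsm{\lambda}_t)\|_2^2.
\end{align*}

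Next, I would invoke the convex–concave structure of $\ca{L}(\bsm{\theta},\bsm{\lambda})$: the map $\bsm{\theta}\mapsto\ca{L}(\bsm{\theta},\bsm{\lambda}_t)$ is convex by Lemma \ref{lem_loss} together with Assumption \ref{assump_g} (a non-negative combination of convex functions), and $\bsm{\lambda}\mapsto\ca{L}(\bsm{\theta}_t,\bsm{\lambda})$ is concave because the regularizer $-\tfrac{\delta\eta}{2}\|\bsm{\lambda}\|_2^2$ is concave and the rest is linear in $\bsm{\lambda}$. Hence
\begin{align*}
\ca{L}(\bsm{\theta}_t,\bsm{\lambda}_t)-\ca{L}(\bsm{\theta},\bsm{\lambda}_t)&\leq\langle\nabla_{\bsm{\theta}}\ca{L}(\bsm{\theta}_t,\bsm{\lambda}_t),\bsm{\theta}_t-\bsm{\theta}\rangle,\\
\ca{L}(\bsm{\theta}_t,\bsm{\lambda})-\ca{L}(\bsm{\theta}_t,\bsm{\lambda}_t)&\leq\langle\nabla_{\bsm{\lambda}}\ca{L}(\bsm{\theta}_t,\bsm{\lambda}_t),\bsm{\lambda}-\bsm{\lambda}_t\rangle.
\end{align*}
To handle the inexact primal gradient, I would add and subtract $\nabla_{\bsm{\theta}}\ca{L}(\bsm{\theta}_t,\bsm{\lambda}_t)$ in the primal inequality, producing the stray inner product $\langle\nabla_{\bsm{\theta}}\ca{L}(\bsm{\theta}_t,\bsm{\lambda}_t)-\nabla_{\bsm{\theta}}\h{\ca{L}}_t(\bsm{\theta}_t,\bsm{\lambda}_t),\bsm{\theta}_t-\bsm{\theta}\rangle$, which will be kept explicit since we do not (yet) assume unbiasedness of $\nabla_{\bsm{\theta}}\h{\ca{L}}_t$.

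Finally, I would add the two per-step bounds, sum over $t=1,\dots,T$, and telescope the distance terms. Using $\bsm{\lambda}_1=\mathbf{0}$ gives $\|\bsm{\lambda}_1-\bsm{\lambda}\|_2^2=\|\bsm{\lambda}\|_2^2$, and Assumption \ref{assump_set} yields $\|\bsm{\theta}_1-\bsm{\theta}\|_2^2\leq(2R)^2=4R^2$, producing exactly the $\tfrac{2R^2}{\eta}+\tfrac{\|\bsm{\lambda}\|_2^2}{2\eta}$ prefactor in \eqref{Eq_t_lem1}. The remaining terms match the claimed bound term-by-term. I expect no serious obstacle; the only delicate point is that we must keep the gradient-error inner product unsimplified (no Cauchy–Schwarz or Young step here), since Lemma \ref{lem_framew}'s later analysis will convert it into a squared-norm accumulation only after pairing with the strong convexity of $\ca{L}$ in $\bsm{\theta}$.
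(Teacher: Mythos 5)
Your proposal is correct and follows essentially the same route as the paper's proof: projection non-expansiveness applied to both updates to obtain the per-step telescoping inequalities, convexity in $\bsm{\theta}$ (via Lemma \ref{lem_loss} and Assumption \ref{assump_g}) and concavity in $\bsm{\lambda}$ to relate the Lagrangian gap to the gradient inner products, an add-and-subtract of $\nabla_{\bsm{\theta}} \ca{L}(\bsm{\theta}_t,\bsm{\lambda}_t)$ to isolate the gradient-error term, and finally summation with $\bsm{\lambda}_1=\mathbf{0}$ and $\|\bsm{\theta}_1-\bsm{\theta}\|_2^2\leq 4R^2$. Your closing observation — that the error inner product must be left unsimplified here because the Young/strong-convexity pairing happens only in the proof of Lemma \ref{lem_framew} — is exactly how the paper organizes the argument.
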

Lemma \ref{Append_lem_L-L} establishes a relationship between the Lagrangian \eqref{Equ_Lag} and the primal and dual variables in the robust primal-dual framework. In particular, in \eqref{Eq_t_lem1}, the last term is introduced due to the gradient approximation. If the approximate gradient $\nabla_{\bsm{\theta}} \h{\ca{L}}_{t}\left(\bsm{\theta}_t, \bsm{\lambda}_t\right)$ is unbiased, we have $\b{E} \left[\nabla_{\bsm{\theta}} \ca{L} \left(\bsm{\theta}_t, \bsm{\lambda}_t \right) - \nabla_{\bsm{\theta}} \h{\ca{L}}_{t}\left(\bsm{\theta}_t, \bsm{\lambda}_t\right) \right] = \mathbf{0}$. Then, the last term in \eqref{Eq_t_lem1} is eliminated automatically by taking expectation. This is often the case in stochastic optimization without performativity \citep{tan2018stochastic,yan2019stochastic,cao2022distributed}. However, in performative prediction, it is difficult to construct an unbiased gradient approximation, because the unknown performative effect of decisions changes data distributions. Therefore, we must carry out the worst-case analysis on this term. In next lemma, we bound the $\ell_2$ norms of the gradients $\left\|\nabla_{\bsm{\lambda}} \ca{L}(\bsm{\theta}_t, \bsm{\lambda}_t) \right\|_2^2$ and $\left\|\nabla_{\bsm{\theta}} \h{\ca{L}}_t(\bsm{\theta}_t, \bsm{\lambda}_t)\right\|_2^2$ in \eqref{Eq_t_lem1}. 
\begin{lemma} \label{Append_lem_grad}
  For any $t\in[T]$, the gradients $\nabla_{\bsm{\lambda}} \ca{L}(\bsm{\theta}_t, \bsm{\lambda}_t)$ and $\nabla_{\bsm{\theta}} \h{\ca{L}}_t(\bsm{\theta}_t, \bsm{\lambda}_t)$ satisfy:
  \begin{enumerate} 
    \item $\left\|\nabla_{\bsm{\lambda}} \ca{L}(\bsm{\theta}_t, \bsm{\lambda}_t) \right\|_2^2 \leq 2C^2 + 2\delta^2 \eta^2 \left\|\bsm{\lambda}_t \right\|_2^2$;
    \item $\left\|\nabla_{\bsm{\theta}} \h{\ca{L}}_t(\bsm{\theta}_t, \bsm{\lambda}_t)\right\|_2^2 \leq 4L^2 + 4L_{\mathbf{g}}^2 \left\| \bsm{\lambda}_t \right\|_2^2 + 2\left\|\nabla_{\bsm{\theta}}\h{\rm PR}_t(\bsm{\theta}_t) - \nabla_{\bsm{\theta}}{\rm PR}(\bsm{\theta}_t)\right\|_2^2$.
  \end{enumerate}
\end{lemma}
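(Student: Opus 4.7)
The plan is to treat the two parts separately; both reduce to routine applications of the elementary inequality $\|a+b\|_2^2\leq 2\|a\|_2^2+2\|b\|_2^2$ combined with the Lipschitz/boundedness assumptions already in force.

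For part 1, I would simply substitute the closed form $\nabla_{\bsm{\lambda}}\ca{L}(\bsm{\theta}_t,\bsm{\lambda}_t)=\mathbf{g}(\bsm{\theta}_t)-\delta\eta\,\bsm{\lambda}_t$ from the definition of the Lagrangian and apply $\|a-b\|_2^2\leq 2\|a\|_2^2+2\|b\|_2^2$, then invoke Assumption~\ref{assump_g} which gives $\|\mathbf{g}(\bsm{\theta}_t)\|_2\leq C$. This yields the stated bound $2C^2+2\delta^2\eta^2\|\bsm{\lambda}_t\|_2^2$ directly, with no further work.

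For part 2, the trick is to introduce the \emph{true} Lagrangian gradient as a pivot. Concretely, I would write
\begin{align*}
\nabla_{\bsm{\theta}}\h{\ca{L}}_t(\bsm{\theta}_t,\bsm{\lambda}_t)
=\bigl[\nabla_{\bsm{\theta}}\h{\rm PR}_t(\bsm{\theta}_t)-\nabla_{\bsm{\theta}}{\rm PR}(\bsm{\theta}_t)\bigr]+\nabla_{\bsm{\theta}}\ca{L}(\bsm{\theta}_t,\bsm{\lambda}_t),
\end{align*}
apply the inequality $\|a+b\|_2^2\leq 2\|a\|_2^2+2\|b\|_2^2$ once, and the first term already matches the desired $2\|\nabla_{\bsm{\theta}}\h{\rm PR}_t(\bsm{\theta}_t)-\nabla_{\bsm{\theta}}{\rm PR}(\bsm{\theta}_t)\|_2^2$ with constant $2$. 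For the remaining $2\|\nabla_{\bsm{\theta}}\ca{L}(\bsm{\theta}_t,\bsm{\lambda}_t)\|_2^2$, I would split again into $\nabla_{\bsm{\theta}}{\rm PR}(\bsm{\theta}_t)+\bsm{\lambda}_t^\top\nabla_{\bsm{\theta}}\mathbf{g}(\bsm{\theta}_t)$ and use the same two–term inequality. The bound on $\|\nabla_{\bsm{\theta}}{\rm PR}(\bsm{\theta}_t)\|_2\leq L$ comes from Lemma~\ref{lem_loss} ($L$-Lipschitz continuity of ${\rm PR}$), and the bound $\|\bsm{\lambda}_t^\top\nabla_{\bsm{\theta}}\mathbf{g}(\bsm{\theta}_t)\|_2\leq L_{\mathbf{g}}\|\bsm{\lambda}_t\|_2$ comes from Assumption~\ref{assump_g} (each component of $\mathbf{g}$ is $L_{\mathbf{g}}$-Lipschitz, so the Jacobian has operator norm at most $L_{\mathbf{g}}$). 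Combining gives $2\|\nabla_{\bsm{\theta}}\ca{L}(\bsm{\theta}_t,\bsm{\lambda}_t)\|_2^2\leq 4L^2+4L_{\mathbf{g}}^2\|\bsm{\lambda}_t\|_2^2$, and assembling the pieces produces exactly the claimed bound.

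Neither step is really the hard part — the only subtlety is choosing the correct pivot in part 2 so that the approximation error appears with the clean constant $2$ (rather than e.g.\ $4$ or $8$). A naive split $\nabla_{\bsm{\theta}}\h{\ca{L}}_t=\nabla_{\bsm{\theta}}\h{\rm PR}_t+\bsm{\lambda}_t^\top\nabla_{\bsm{\theta}}\mathbf{g}$ followed by adding/subtracting $\nabla_{\bsm{\theta}}{\rm PR}$ inside the first summand would yield worse constants and, more importantly, would not isolate the error term in the form needed by Lemma~\ref{Append_lem_L-L}. Pivoting around the exact Lagrangian gradient first is therefore the right move, and with that observation the proof is essentially a two-line application of the triangle-type inequality.
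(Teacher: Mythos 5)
Your proposal is correct and follows essentially the same route as the paper's own proof: part 1 by direct substitution of $\nabla_{\bsm{\lambda}}\ca{L}(\bsm{\theta}_t,\bsm{\lambda}_t)=\mathbf{g}(\bsm{\theta}_t)-\delta\eta\bsm{\lambda}_t$ with the two-term inequality and Assumption~\ref{assump_g}, and part 2 by pivoting around the exact Lagrangian gradient $\nabla_{\bsm{\theta}}\ca{L}(\bsm{\theta}_t,\bsm{\lambda}_t)$, which is precisely the decomposition the paper uses before bounding $\left\|\nabla_{\bsm{\theta}}\ca{L}(\bsm{\theta}_t,\bsm{\lambda}_t)\right\|_2^2\leq 2L^2+2L_{\mathbf{g}}^2\left\|\bsm{\lambda}_t\right\|_2^2$ via the Lipschitz constants of ${\rm PR}$ and $\mathbf{g}$. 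The constants and the isolation of the approximation-error term match the paper exactly, so there is nothing to add.
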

Note that the bound of $\left\|\nabla_{\bsm{\theta}} \h{\ca{L}}_t(\bsm{\theta}_t, \bsm{\lambda}_t)\right\|_2^2$ involves the term $\left\|\nabla_{\bsm{\theta}}\h{\rm PR}_t(\bsm{\theta}_t) - \nabla_{\bsm{\theta}}{\rm PR}(\bsm{\theta}_t)\right\|_2^2$, which is the gradient approximation error at the $t$th iteration. Proofs of Lemma \ref{Append_lem_L-L} and Lemma \ref{Append_lem_grad} are respectively given in \cref{Sec_append_c1} and \cref{Sec_append_c1}. 
With these two Lemmas, we are ready to prove Lemma \ref{lem_framew}.

\begin{proof}[Proof of Lemma \ref{lem_framew}]
   By Lemma \ref{Append_lem_L-L}, we have 
   \begin{align}
    \sum_{t=1}^T \left(\ca{L}\left(\bsm{\theta}_t,\bsm{\lambda}\right) - \ca{L}\left(\bsm{\theta},\bsm{\lambda}_t\right) \right) 
    \leq& \frac{2R^2}{\eta} + \frac{\left\|\bsm{\lambda}\right\|_2^{2}}{2 \eta} + \frac{\eta}{2}\sum_{t=1}^T\left\|\nabla_{\bsm{\lambda}} \ca{L}(\bsm{\theta}_t, \bsm{\lambda}_t)\right\|_2^{2} + \frac{\eta}{2}\sum_{t=1}^T\left\|\nabla_{\bsm{\theta}} \h{\ca{L}}_t(\bsm{\theta}_t, \bsm{\lambda}_t)\right\|_2^{2} \notag\\
    &+ \frac{a}{2}\sum_{t=1}^T\left\|\bsm{\theta}_t - \bsm{\theta}\right\|_2^2 + \frac{1}{2a}\sum_{t=1}^T \left\|\nabla_{\bsm{\theta}}\h{\rm PR}_t(\bsm{\theta}_t) - \nabla_{\bsm{\theta}}{\rm PR}(\bsm{\theta}_t)\right\|_2^2, \label{Eq_t_1}
  \end{align}
  where $a>0$ is a constant. Note that in \eqref{Eq_t_1}, we utilize the follwing inequality:
  \begin{align*}
    \left\langle \bsm{\theta}_t - \bsm{\theta}, \nabla_{\bsm{\theta}} \ca{L}\left(\bsm{\theta}_t, \bsm{\lambda}_t \right) - \nabla_{\bsm{\theta}} \h{\ca{L}}_{t}\left(\bsm{\theta}_t, \bsm{\lambda}_t \right)\right\rangle 
    &=\left\langle \bsm{\theta}_t - \bsm{\theta}, \nabla_{\bsm{\theta}} {\rm PR}\left(\bsm{\theta}_t\right) - \nabla_{\bsm{\theta}} \h{\rm PR}_{t}(\bsm{\theta}_t)\right\rangle \notag\\
    & \leq \frac{a}{2} \left\|\bsm{\theta}_t - \bsm{\theta}\right\|_2^2 + \frac{1}{2a}\left\|\nabla_{\bsm{\theta}}{\rm PR}(\bsm{\theta}_t) - \nabla_{\bsm{\theta}}\h{\rm PR}_t(\bsm{\theta}_t)\right\|_2^2. 
  \end{align*}
   Taking expectation over \eqref{Eq_t_1} and plugging into the results in Lemma \ref{Append_lem_grad}, we have  
  \begin{align}
    &\sum_{t=1}^T \left(\b{E}{\rm PR}(\bsm{\theta}_t) - {\rm PR}\left(\bsm{\theta}_{\rm PO}\right) \right) + \sum_{t=1}^T \b{E}\left\langle \bsm{\lambda}, \mathbf{g}(\bsm{\theta}_t)\right\rangle  
    - \sum_{t=1}^T \b{E}\left\langle \bsm{\lambda}_t, \mathbf{g}(\bsm{\theta}_{\rm PO}) \right\rangle - \frac{\delta \eta T}{2}\|\bsm{\lambda}\|_2^{2} + \frac{\delta \eta}{2} \sum_{t=1}^T \b{E}\|\bsm{\lambda}_t\|_2^{2}  \notag\\
    &\leq \frac{2R^2}{\eta} + \frac{\left\|\bsm{\lambda}\right\|_2^{2}}{2 \eta} + \eta T\left(C^2  + 2L^2 \right) + \eta\left( \delta^2 \eta^2 + 2L_{\mathbf{g}}^2 \right) \sum_{t=1}^T \b{E}\left\| \bsm{\lambda}_t \right\|_2^2 \notag\\
    & \quad + \frac{a}{2} \sum_{t=1}^T \b{E}\left\|\bsm{\theta}_t - \bsm{\theta}\right\|_2^2 + \left(\frac{1}{2a} + \eta\right)\sum_{t=1}^T \b{E}\left\|\nabla_{\bsm{\theta}}\h{\rm PR}_t(\bsm{\theta}_t) - \nabla_{\bsm{\theta}}{\rm PR}(\bsm{\theta}_t)\right\|_2^2 , \label{Eq_t_2}
  \end{align}
  where we set $\bsm{\theta}$ to $\bsm{\theta}_{\rm PO}$ since any $\bsm{\theta} \in \bsm{\Theta}$ satisfies \eqref{Eq_t_1}. In \eqref{Eq_t_2}, the term $\sum_{t=1}^T \bsm{\lambda}_t^{\top} \mathbf{g}(\bsm{\theta}_{\rm PO})$ on the left side is non-positive and can be omitted, because we always have $\bsm{\lambda}_t \geq \mathbf{0}$ and $\mathbf{g}(\bsm{\theta}_{\rm PO}) \leq \mathbf{0}$, $\forall t\in[T]$.
  Then, rearranging the term in \eqref{Eq_t_2} gives
  \begin{align}
    &\sum_{t=1}^T \left(\b{E}{\rm PR}(\bsm{\theta}_t) - {\rm PR}\left(\bsm{\theta}_{\rm PO}\right) \right) + \sum_{t=1}^T \b{E}\left\langle\bsm{\lambda}, \mathbf{g}(\bsm{\theta}_t)\right\rangle 
      - \frac{1}{2}\left(\frac{1}{\eta} + \delta \eta T \right)\|\bsm{\lambda}\|_2^{2} \notag\\
    &\leq \frac{\eta}{2} \left(2\delta^{2} \eta^{2} - \delta + 4L_{\mathbf{g}}^2\right) \sum_{t=1}^T \b{E}\left\| \bsm{\lambda}_t \right\|_2^2  + \frac{2R^2}{\eta} + \eta T\left(C^2  + 2L^2 \right) \notag\\
    & \quad + \frac{a}{2} \sum_{t=1}^T \b{E}\left\|\bsm{\theta}_t - \bsm{\theta}\right\|_2^2 + \left(\frac{1}{2a} + \eta\right)\sum_{t=1}^T \b{E}\left\|\nabla_{\bsm{\theta}}\h{\rm PR}_t(\bsm{\theta}_t) - \nabla_{\bsm{\theta}}{\rm PR}(\bsm{\theta}_t)\right\|_2^2. \label{Eq_t_3}
  \end{align}
  In \eqref{Eq_t_3}, the first term can be removed by properly choosing the stepsize $\eta$ and the parameter $\delta$, so that the coefficient $\frac{\eta}{2} \left(2\delta^{2} \eta^{2} - \delta + 4L_{\mathbf{g}}^2\right)\leq 0$. Since $2\delta^{2} \eta^{2} - \delta + 4L_{\mathbf{g}}^2$ is quadratic in $\eta$ and $\eta>0$, the following range of $\delta$ meets the desired inequality: 
  \begin{align*} 
    &\delta \in\left[\frac{1-\sqrt{1 - 32\eta^2L_{\mathbf{g}}^2}}{4 \eta^{2}}, \frac{1+\sqrt{1- 32\eta^2L_{\mathbf{g}}^2}}{4 \eta^{2}}\right].
  \end{align*}
  We set $\eta=\frac{1}{\sqrt{T}}$. To guarantee that the value of $\delta$ within the above interval is a real number, we require $1-32\eta^2L_{\mathbf{g}}^2\geq 0$, i.e., the time horizon $T\geq 32L_{\mathbf{g}}^2$.
  
  Next, we deal with the term $\frac{a}{2} \b{E}\sum_{t=1}^T\left\|\bsm{\theta}_t - \bsm{\theta}\right\|_2^2$ in \eqref{Eq_t_3}. By the $\gamma$-convexity of the performative risk ${\rm PR}(\bsm{\theta})$ give in Lemma \ref{lem_loss}, for any $\bsm{\theta}_t\in\bsm{\Theta}$, we have 
  \begin{align*}
    {\rm PR}(\bsm{\theta}_t) \geq {\rm PR}\left(\bsm{\theta}_{\rm PO}\right) + \left\langle \nabla_{\bsm{\theta}} {\rm PR}\left(\bsm{\theta}_{\rm PO}\right), \bsm{\theta}_t - \bsm{\theta}_{\rm PO} \right\rangle + \frac{\gamma}{2}\left\|\bsm{\theta}_t - \bsm{\theta}_{\rm PO}\right\|_2^2. 
  \end{align*}
  From the optimality conditions, $\left\langle \nabla_{\bsm{\theta}} {\rm PR}\left(\bsm{\theta}_{\rm PO}\right), \bsm{\theta}_t - \bsm{\theta}_{\rm PO} \right\rangle \geq 0$, $\forall t\in[T]$. Then, we have 
  \begin{align*}
    \frac{a}{2}\sum_{t=1}^T \b{E}\left\|\bsm{\theta}_t - \bsm{\theta}_{\rm PO}\right\|_2^2 \leq \sum_{t=1}^T \frac{a}{\gamma} \left(\b{E}{\rm PR}(\bsm{\theta}_t) - {\rm PR}\left(\bsm{\theta}_{\rm PO}\right) \right).
  \end{align*}

  Further, since any $\bsm{\lambda} \in \b{R}_+^m$ satisfies Eq.~\eqref{Eq_t_3}, we set $\bsm{\lambda}=\frac{\left[\b{E}\left[\sum_{t=1}^{T} \mathbf{g}\left(\bsm{\theta}_t\right)\right]\right]^+}{\frac{1}{\eta} + \delta\eta T }$. With the above results, we obtain 
  \begin{align}
    & \left(1-\frac{a}{\gamma}\right)\sum_{t=1}^T \left(\b{E}{\rm PR}(\bsm{\theta}_t) - {\rm PR}\left(\bsm{\theta}_{\rm PO}\right) \right) + \frac{\left\|\left[\b{E}\left[\sum_{t=1}^{T} \mathbf{g}\left(\bsm{\theta}_t\right)\right]\right]^+\right\|_2^2}{2(1 + \delta)\sqrt{T} }  \notag\\
    &\leq \sqrt{T}\left(2R^2 + C^2 + 2L^2\right)  + \left(\frac{1}{2a} + \frac{1}{\sqrt{T}}\right)\sum_{t=1}^T \b{E}\left\|\nabla_{\bsm{\theta}}\h{\rm PR}_t(\bsm{\theta}_t) - \nabla_{\bsm{\theta}}{\rm PR}(\bsm{\theta}_t)\right\|_2^2. \label{Eq_t_4}
  \end{align}
  Choosing $a \in (0,\gamma)$ and omitting the second term (non-negative) on the left side of \eqref{Eq_t_4}, we obtain 
  \begin{align*}
    \sum_{t=1}^T \left({\rm PR}(\bsm{\theta}_t) - {\rm PR}\left(\bsm{\theta}_{\rm PO}\right) \right) 
   \leq& \frac{\gamma \sqrt{T}}{\gamma-a} \left(2R^2 + C^2 + 2L^2 \right) \\
   & +  \frac{\gamma}{\gamma-a}\left(\frac{1}{2a} + \frac{1}{\sqrt{T}}\right)\sum_{t=1}^T \b{E}\left\|\nabla_{\bsm{\theta}}\h{\rm PR}_t(\bsm{\theta}_t) - \nabla_{\bsm{\theta}}{\rm PR}(\bsm{\theta}_t)\right\|_2^2,
  \end{align*}
  which proves the regret result in Lemma \ref{lem_framew}. Similarly, with $a \in (0,\gamma)$, the first term on the left side of \eqref{Eq_t_4} is also non-negative. Omitting it gives
  \begin{align}
    \frac{\sum_{i=1}^m \left(\left[\b{E}\left[\sum_{t=1}^{T} g_i\left(\bsm{\theta}_t\right)\right]\right]^+\right)^2}{ 2(1 + \delta)\sqrt{T} } 
    &\leq \sqrt{T}\left(2R^2 + C^2  + 2L^2\right) \notag\\
    &+ \left(\frac{1}{2a} + \frac{1}{\sqrt{T}}\right)\sum_{t=1}^T \b{E}\left\|\nabla_{\bsm{\theta}}\h{\rm PR}_t(\bsm{\theta}_t) - \nabla_{\bsm{\theta}}{\rm PR}(\bsm{\theta}_t)\right\|_2^2. \label{Eq_t_5}
  \end{align} 
  For each $\left(\left[\b{E}\left[\sum_{t=1}^{T} g_i\left(\bsm{\theta}_t\right)\right]\right]^+\right)^2$, $i\in[m]$, the above inequality also holds. Then, taking the square root on both sides of \eqref{Eq_t_5} and using the inequality $\sqrt{a+b+c} \leq \sqrt{a} + \sqrt{b} + \sqrt{c}$, $\forall a,b,c\geq 0$, we obtain
  \begin{align*}
    \left[\b{E}\left[\sum_{t=1}^{T} g_i\left(\bsm{\theta}_t\right)\right]\right]^+  \leq& \sqrt{1+\delta} \left(2R + \sqrt{2}C  + 2L\right)\sqrt{T}  \\
    &+  \sqrt{1+\delta}\left(\frac{T^{\frac{1}{4}}}{\sqrt{a}} + \sqrt{2}\right)\left(\sum_{t=1}^T \b{E}\left\|\nabla_{\bsm{\theta}}\h{\rm PR}_t(\bsm{\theta}_t) - \nabla_{\bsm{\theta}}{\rm PR}(\bsm{\theta}_t)\right\|_2^2 \right)^{\frac{1}{2}}.  
  \end{align*} 
  As $\b{E}\left[\sum_{t=1}^{T} g_i\left(\bsm{\theta}_t\right)\right] \leq \left[\b{E}\left[\sum_{t=1}^{T} g_i\left(\bsm{\theta}_t\right)\right]\right]^+$, the constraint violation result in Lemma \ref{lem_framew} is derived.
\end{proof}

\subsection{Proof of Lemma \ref{Append_lem_L-L}}
\label{Sec_append_c1}

The proof of Lemma \ref{Append_lem_L-L} utilizes the following fact about a property of the projection operator.
\begin{fact} \label{Fact_projection}
  Suppose that set $\ca{A} \subset \b{R}^{d}$ is closed and convex. Then, for any $\mathbf{y} \in \b{R}^{d}$ and $\mathbf{x} \in \ca{A}$, we have
  \begin{align*} 
    \left\|\mathbf{x}-\Pi_{\ca{A}}(\mathbf{y})\right\|_{2} \leq\|\mathbf{x}-\mathbf{y}\|_{2},
  \end{align*}
  where $\Pi_{\ca{A}}(\mathbf{y})$ denotes the projection of $\mathbf{y}$ onto the set $\ca{A}$. 
\end{fact}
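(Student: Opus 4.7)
The plan is to use the variational characterization of projection onto a closed convex set. Because $\ca{A}$ is closed and convex, $\Pi_{\ca{A}}(\mathbf{y})$ exists and is the unique minimizer of $\mathbf{z} \mapsto \tfrac{1}{2}\|\mathbf{z} - \mathbf{y}\|_2^2$ over $\ca{A}$. The first-order optimality condition for this convex program then yields the variational inequality
\begin{align*}
\langle \mathbf{z} - \Pi_{\ca{A}}(\mathbf{y}),\, \Pi_{\ca{A}}(\mathbf{y}) - \mathbf{y}\rangle \geq 0, \quad \forall\, \mathbf{z} \in \ca{A},
\end{align*}
which is the single geometric ingredient driving the entire argument. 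Specializing this inequality to $\mathbf{z} = \mathbf{x}$ (admissible by hypothesis, since $\mathbf{x} \in \ca{A}$) is what makes the desired contraction fall out.

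From there, I would expand $\|\mathbf{x} - \mathbf{y}\|_2^2$ by inserting $\pm \Pi_{\ca{A}}(\mathbf{y})$, obtaining
\begin{align*}
\|\mathbf{x} - \mathbf{y}\|_2^2 = \|\mathbf{x} - \Pi_{\ca{A}}(\mathbf{y})\|_2^2 + 2\langle \mathbf{x} - \Pi_{\ca{A}}(\mathbf{y}),\, \Pi_{\ca{A}}(\mathbf{y}) - \mathbf{y}\rangle + \|\Pi_{\ca{A}}(\mathbf{y}) - \mathbf{y}\|_2^2,
\end{align*}
then drop the cross term (nonnegative by the specialized variational inequality) together with the last summand (nonnegative as a squared norm). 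This yields $\|\mathbf{x} - \mathbf{y}\|_2^2 \geq \|\mathbf{x} - \Pi_{\ca{A}}(\mathbf{y})\|_2^2$, and taking square roots completes the proof.

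There is no genuine obstacle here: the whole argument is essentially one invocation of the optimality condition followed by algebraic expansion. The only care needed is orienting the inner product correctly in the variational inequality (a common point of sign confusion) and verifying that $\mathbf{x}$ is a valid test vector, which is immediate from the hypothesis $\mathbf{x} \in \ca{A}$. Closedness of $\ca{A}$ is what guarantees the projection exists, and convexity guarantees both its uniqueness and the validity of the first-order characterization — both ingredients are already baked into the hypotheses.
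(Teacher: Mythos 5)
Your proof is correct. Note that the paper itself states this Fact without proof, treating it as a standard property of Euclidean projection onto a closed convex set; your argument — deriving the variational inequality $\langle \mathbf{z} - \Pi_{\ca{A}}(\mathbf{y}),\, \Pi_{\ca{A}}(\mathbf{y}) - \mathbf{y}\rangle \geq 0$ for all $\mathbf{z} \in \ca{A}$ from first-order optimality, specializing to $\mathbf{z} = \mathbf{x}$, and expanding $\|\mathbf{x}-\mathbf{y}\|_2^2$ around $\Pi_{\ca{A}}(\mathbf{y})$ — is precisely the canonical proof of this fact, with the sign of the inner product oriented correctly, so it cleanly fills in what the paper leaves implicit.
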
 
With Fact \ref{Fact_projection}, the proof of Lemma \ref{Append_lem_L-L} is given below.
 
From Lemma \ref{lem_loss} and Assumption \ref{assump_g}, we know that $\ca{L} \left(\bsm{\theta},\bsm{\lambda}_t\right)$ is convex in $\bsm{\theta}$. Then, we have 
\begin{align*} 
  &\ca{L} \left(\bsm{\theta},\bsm{\lambda}_t\right) \geq \ca{L} \left(\bsm{\theta}_t,\bsm{\lambda}_t\right) + \left\langle \nabla_{\bsm{\theta}}\ca{L}\left(\bsm{\theta}_t,\bsm{\lambda}_t\right), \bsm{\theta}-\bsm{\theta}_t \right\rangle.
\end{align*}
Similarly, since $\ca{L}\left(\mathbf{x}, \bsm{\lambda}\right)$ is concave in $\bsm{\lambda}$, we have
\begin{align*} 
  &\ca{L} \left(\bsm{\theta}_t, \bsm{\lambda}\right) \leq \ca{L} \left(\bsm{\theta}_t, \bsm{\lambda}_t\right) + \left\langle \nabla_{\bsm{\lambda}} \ca{L}\left(\bsm{\theta}_t,  \bsm{\lambda}_t\right), \bsm{\lambda} - \bsm{\lambda}_t \right\rangle.
\end{align*}
Combining the above two inequalities yields
\begin{align}
  &\ca{L} \left(\bsm{\theta}_t, \bsm{\lambda}\right) - \ca{L} \left(\bsm{\theta},\bsm{\lambda}_t\right) 
  \leq  \left\langle \nabla_{\bsm{\lambda}} \ca{L} \left(\bsm{\theta}_t,  \bsm{\lambda}_t\right), \bsm{\lambda} - \bsm{\lambda}_t \right\rangle - \left\langle \nabla_{\bsm{\theta}}\ca{L}\left(\bsm{\theta}_t,\bsm{\lambda}_t\right), \bsm{\theta}-\bsm{\theta}_t \right\rangle.  \label{Eq_L-L}
\end{align}
From the update rule of $\bsm{\lambda}$ given in \eqref{Eq_dual_grad}, we have 
\begin{align} \label{Eq_lam_r}
  \left\|\bsm{\lambda}-\bsm{\lambda}_{t+1}\right\|_2^{2} &= \left\|\bsm{\lambda}-\left[\bsm{\lambda}_t + \eta \nabla_{\bsm{\lambda}} \ca{L}(\bsm{\theta}_t, \bsm{\lambda}_t) \right]^{+}\right\|_2^{2} \notag\\
  & \overset{(a)}{\leq}  \left\|\bsm{\lambda}-\bsm{\lambda}_t\right\|_2^{2} + \eta^{2} \left\|\nabla_{\bsm{\lambda}} \ca{L}(\bsm{\theta}_t, \bsm{\lambda}_t)\right\|_2^2
  -2 \eta \left\langle \bsm{\lambda}-\bsm{\lambda}_t, \nabla_{\bsm{\lambda}} \ca{L}(\bsm{\theta}_t, \bsm{\lambda}_t)\right\rangle, 
\end{align}
where $(a)$ is based on Fact \ref{Fact_projection}. Rearranging the terms in \eqref{Eq_lam_r} gives
\begin{align}  \label{Eq_Lambda_update}
  &\left\langle \bsm{\lambda} - \bsm{\lambda}_t, \nabla_{\bsm{\lambda}} \ca{L}(\bsm{\theta}_t, \bsm{\lambda}_t) \right\rangle 
  \leq \frac{1}{2 \eta}\left(\left\|\bsm{\lambda} - \bsm{\lambda}_t\right\|_2^{2} - \left\|\bsm{\lambda} - \bsm{\lambda}_{t+1}\right\|_2^{2}\right) + \frac{\eta}{2}\left\|\nabla_{\bsm{\lambda}} \ca{L}(\bsm{\theta}_t, \bsm{\lambda}_t)\right\|_2^{2}.
\end{align}
Similarly, from the update rule of $\bsm{\theta}$ given in \eqref{Eq_primal_grad}, we have 
\begin{align} \label{Eq_lam_x}
  \left\|\bsm{\theta}-\bsm{\theta}_{t+1}\right\|_2^{2}  &= \left\|\bsm{\theta} - \Pi_{\bsm{\Theta}}\left(\bsm{\theta}_{t}-\eta \nabla_{\bsm{\theta}} \h{\ca{L}}_{t}(\bsm{\theta}_t, \bsm{\lambda}_t) \right)\right\|_2^{2} \notag\\
  &\leq \left\|\bsm{\theta} -\bsm{\theta}_{t}\right\|_2^{2} + \eta^{2}\left\|\nabla_{\bsm{\theta}} \h{\ca{L}}_{t}(\bsm{\theta}_t, \bsm{\lambda}_t)\right\|_2^{2} + 2\eta \left\langle \bsm{\theta} -\bsm{\theta}_{t}, \nabla_{\bsm{\theta}} \h{\ca{L}}_{t}(\bsm{\theta}_t, \bsm{\lambda}_t)\right\rangle.
\end{align}
Rearranging the terms in \eqref{Eq_lam_x} gives
\begin{align*} 
  &\left\langle \bsm{\theta}_{t} - \bsm{\theta}, \nabla_{\bsm{\theta}} \h{\ca{L}}_{t}(\bsm{\theta}_t, \bsm{\lambda}_t)\right\rangle 
  \leq \frac{1}{2 \eta}\left(\left\|\bsm{\theta} - \bsm{\theta}_{t}\right\|_2^{2} - \left\|\bsm{\theta} - \bsm{\theta}_{t+1}\right\|_2^{2}\right) + \frac{\eta}{2}\left\|\nabla_{\bsm{\theta}} \h{\ca{L}}_{t}(\bsm{\theta}_t, \bsm{\lambda}_t)\right\|_2^{2}.
\end{align*}
Then, we have 
\begin{align}  \label{Eq_Theta_update}
  \left\langle \bsm{\theta}_{t} - \bsm{\theta}, \nabla_{\bsm{\theta}} \ca{L}\left(\bsm{\theta}_{t}, \bsm{\lambda}_{t}\right)\right\rangle 
  \leq& \frac{1}{2 \eta}\left(\left\|\bsm{\theta} - \bsm{\theta}_{t}\right\|_2^{2} - \left\|\bsm{\theta} - \bsm{\theta}_{t+1}\right\|_2^{2}\right) + \frac{\eta}{2}\left\|\nabla_{\bsm{\theta}} \h{\ca{L}}_{t}(\bsm{\theta}_t, \bsm{\lambda}_t)\right\|_2^{2} \notag\\
  &+ \left\langle \bsm{\theta}_{t} - \bsm{\theta}, \nabla_{\bsm{\theta}} \ca{L}\left(\bsm{\theta}_{t}, \bsm{\lambda}_{t}\right) - \nabla_{\bsm{\theta}} \h{\ca{L}}_{t}(\bsm{\theta}_t, \bsm{\lambda}_t)\right\rangle.
\end{align}
Plugging \eqref{Eq_Lambda_update} and \eqref{Eq_Theta_update} into \eqref{Eq_L-L} yields
\begin{align}  \label{Eq_L-L_2}
  \ca{L}\left(\bsm{\theta}_t,\bsm{\lambda}\right) - \ca{L}(\bsm{\theta},\bsm{\lambda}_t) 
  \leq& \frac{1}{2 \eta}\left(\left\|\bsm{\lambda} - \bsm{\lambda}_t \right\|_2^{2} - \left\|\bsm{\lambda} - \bsm{\lambda}_{t+1}\right\|_2^{2}\right) + \frac{\eta}{2}\left\|\nabla_{\bsm{\lambda}} \ca{L}(\bsm{\theta}_t, \bsm{\lambda}_t)\right\|_2^{2} \notag\\
  & + \frac{1}{2 \eta}\left(\left\|\bsm{\theta} - \bsm{\theta}_t\right\|_2^{2} - \left\|\bsm{\theta} - \bsm{\theta}_{t+1}\right\|_2^{2}\right) + \frac{\eta}{2}\left\|\nabla_{\bsm{\theta}} \h{\ca{L}}_{t}(\bsm{\theta}_t, \bsm{\lambda}_t)\right\|_2^{2} \notag \\
  & + \left\langle \bsm{\theta}_t - \bsm{\theta}, \nabla_{\bsm{\theta}} \ca{L} \left(\bsm{\theta}_t, \bsm{\lambda}_t \right) - \nabla_{\bsm{\theta}} \h{\ca{L}}_{t}(\bsm{\theta}_t, \bsm{\lambda}_t)\right\rangle.
\end{align}
Summing \eqref{Eq_L-L_2} over $t\in[T]$ yields 
\begin{align}
  \sum_{t=1}^T \left(\ca{L}\left(\bsm{\theta}_t,\bsm{\lambda}\right) - \ca{L}\left(\bsm{\theta},\bsm{\lambda}_t\right) \right)
  \leq& \frac{1}{2 \eta}\left(\left\|\bsm{\lambda} - \bsm{\lambda}_1 \right\|_2^{2} - \left\|\bsm{\lambda} - \bsm{\lambda}_{T+1}\right\|_2^{2}\right) + \frac{\eta}{2}\sum_{t=1}^T\left\|\nabla_{\bsm{\lambda}} \ca{L}(\bsm{\theta}_t, \bsm{\lambda}_t)\right\|_2^{2} \notag\\
  &+ \frac{1}{2 \eta}\left(\left\|\bsm{\theta} - \bsm{\theta}_1\right\|_2^{2} - \left\|\bsm{\theta} - \bsm{\theta}_{T+1}\right\|_2^{2}\right) + \frac{\eta}{2}\sum_{t=1}^T\left\|\nabla_{\bsm{\theta}} \h{\ca{L}}_t(\bsm{\theta}_t, \bsm{\lambda}_t)\right\|_2^{2} \notag \\
  &+ \sum_{t=1}^T\left\langle \bsm{\theta}_t - \bsm{\theta}, \nabla_{\bsm{\theta}} \ca{L} \left(\bsm{\theta}_t, \bsm{\lambda}_t \right) - \nabla_{\bsm{\theta}} \h{\ca{L}}_{t}(\bsm{\theta}_t, \bsm{\lambda}_t)\right\rangle.  \label{Eq_L-sum}
\end{align}
Since $\bsm{\lambda}_1 = \mathbf{0}$, $\left\|\bsm{\theta} - \bsm{\theta}_1\right\|_2^{2} \leq 4R^2$, Lemma \ref{Append_lem_L-L} is proved by omitting the non-positive terms in \eqref{Eq_L-sum}.

\subsection{Proof of Lemma \ref{Append_lem_grad}}
\label{Sec_append_c2}

From the definition of $\nabla_{\bsm{\lambda}} \ca{L}(\bsm{\theta}, \bsm{\lambda})$, for any $t\in[T]$, we have  
\begin{align*} 
  \left\|\nabla_{\bsm{\lambda}} \ca{L}(\bsm{\theta}_t, \bsm{\lambda}_t) \right\|_2^2 = \left\|\mathbf{g}(\bsm{\theta}_t) - \delta \eta \bsm{\lambda}_t \right\|_2^2
  &\leq 2 \left\|\mathbf{g}(\bsm{\theta}_t)\right\|_2^2 + 2\delta^2 \eta^2 \left\|\bsm{\lambda}_t \right\|_2^2
  \overset{(a)}{\leq} 2C^2 + 2\delta^2 \eta^2 \left\|\bsm{\lambda}_t \right\|_2^2,
\end{align*}
where $(a)$ is based on the boundedness of the constraint $\mathbf{g}(\bsm{\theta})$ given in Assumption \ref{assump_g}. Similarly, from the definition of $\nabla_{\bsm{\theta}} \ca{L}(\bsm{\theta}_t, \bsm{\lambda}_t)$, for any $t\in[T]$, we have 
\begin{align*} 
  \left\|\nabla_{\bsm{\theta}} \ca{L}(\bsm{\theta}_t, \bsm{\lambda}_t)\right\|_2^2 & = \left\|\nabla_{\bsm{\theta}} {\rm PR}(\bsm{\theta}_t) + \bsm{\lambda}_t^{\top}\nabla_{\bsm{\theta}}\mathbf{g}(\bsm{\theta}_t) \right\|_2^2  \\
  & \leq 2\left\|\nabla_{\bsm{\theta}} {\rm PR}(\bsm{\theta}_t)\right\|_2^2 + 2\left\|\bsm{\lambda}_t^{\top}\nabla_{\bsm{\theta}}\mathbf{g}(\bsm{\theta}_t) \right\|_2^2 \\
  & \overset{(a)}{\leq} 2L^2 + 2L_{\mathbf{g}}^2 \left\| \bsm{\lambda}_t \right\|_2^2,
\end{align*}
where $(a)$ is based on the Lipschitz continuity of both the performative risk and the constraint. Then, we have
\begin{align*} 
  \left\|\nabla_{\bsm{\theta}} \h{\ca{L}}_t(\bsm{\theta}_t, \bsm{\lambda}_t)\right\|_2^2 
   &= \left\|\nabla_{\bsm{\theta}} \ca{L}(\bsm{\theta}_t, \bsm{\lambda}_t) + \nabla_{\bsm{\theta}}\h{\rm PR}_t(\bsm{\theta}_t) - \nabla_{\bsm{\theta}}{\rm PR}(\bsm{\theta}_t)\right\|_2^2 \\
   & \leq 2\left\|\nabla_{\bsm{\theta}} \ca{L}(\bsm{\theta}_t, \bsm{\lambda}_t)\right\|_2^2 +2\left\|\nabla_{\bsm{\theta}}\h{\rm PR}_t(\bsm{\theta}_t) - \nabla_{\bsm{\theta}}{\rm PR}(\bsm{\theta}_t)\right\|_2^2 \\
   & \leq 4L^2 + 4L_{\mathbf{g}}^2 \left\| \bsm{\lambda}_t \right\|_2^2 + 2\left\|\nabla_{\bsm{\theta}}\h{\rm PR}_t(\bsm{\theta}_t) - \nabla_{\bsm{\theta}}{\rm PR}(\bsm{\theta}_t)\right\|_2^2. 
\end{align*}
By now, Lemma \ref{Append_lem_grad} is proved.

\section{Proof of Lemma \ref{lem_grad_error}}
\setcounter{lemma}{0}
\renewcommand{\thelemma}{D\arabic{lemma}}
\setcounter{equation}{0}
\renewcommand{\theequation}{d\arabic{equation}}
\label{Append_grad_error}

The proof of Lemma \ref{lem_grad_error} will involve the accumulated parameter estimation error $\sum_{t=1}^T\left\|\h{\mathbf{A}}_t - \mathbf{A}\right\|_{\rm F}^2$, which is bounded by the follwing lemma.
\begin{lemma}[\textbf{Parameter Estimation Error}] \label{Append_lem_error}
  Let $\zeta_t = \frac{2}{\kappa_1 t + 2\kappa_3}$, $\forall t\in[T]$. Under Assumption \ref{assump_noise}, the accumulated parameter estimation error is upper bounded by:
  \begin{align*}
   \sum_{t=1}^T \b{E}\left\|\h{\mathbf{A}}_{t+1} - \mathbf{A} \right\|_{\rm F}^{2} \leq \o{\alpha} \ln(T),
 \end{align*}
 where $\o{\alpha} := \max\left\{\left(1+\frac{2\kappa_3}{\kappa_1}\right)\left\|\h{\mathbf{A}}_1- \mathbf{A} \right\|_{\rm F}^{2}, \frac{8\kappa_{2}\operatorname{tr}(\bsm{\Sigma}) }{\kappa_1^2} \right\}$. 
\end{lemma}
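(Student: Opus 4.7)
The plan is to set up a recursion for the Frobenius-norm error $\mathbf{E}_t := \h{\mathbf{A}}_t - \mathbf{A}$ and then solve it by induction. First I would substitute the location-family representations $Z_t \overset{d}{=} Z_{0,t} + \mathbf{A}\bsm{\theta}_t$ and $Z'_t \overset{d}{=} Z'_{0,t} + \mathbf{A}(\bsm{\theta}_t + \mathbf{u}_t)$, with $Z_{0,t}, Z'_{0,t}$ i.i.d.\ draws from $\ca{D}_0$, into the update of Step~7. Setting $\bsm{\xi}_t := Z'_{0,t} - Z_{0,t}$, this rewrites the update cleanly as
\begin{equation*}
  \mathbf{E}_{t+1} = \mathbf{E}_t\bigl(\mathbf{I} - \zeta_t \mathbf{u}_t \mathbf{u}_t^{\top}\bigr) + \zeta_t\, \bsm{\xi}_t \mathbf{u}_t^{\top}.
\end{equation*}
The noise $\bsm{\xi}_t$ has zero mean, is independent of $(\mathbf{E}_t,\mathbf{u}_t)$, and satisfies $\b{E}\|\bsm{\xi}_t\|_2^2 = 2\operatorname{tr}(\bsm{\Sigma})$, so the cross term between the two summands vanishes in expectation.

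Next, I would take squared Frobenius norms of the recursion. Expanding the contractive piece gives
\begin{equation*}
  \bigl\|\mathbf{E}_t(\mathbf{I} - \zeta_t \mathbf{u}_t \mathbf{u}_t^{\top})\bigr\|_{\rm F}^2 = \|\mathbf{E}_t\|_{\rm F}^2 - \zeta_t\bigl(2 - \zeta_t \|\mathbf{u}_t\|_2^2\bigr)\operatorname{tr}\!\bigl(\mathbf{E}_t^{\top}\mathbf{E}_t\, \mathbf{u}_t \mathbf{u}_t^{\top}\bigr),
\end{equation*}
and I would take conditional expectation in $\mathbf{u}_t$. Applying the three parts of Assumption~\ref{assump_noise} in sequence, the lower bound $\b{E}[\mathbf{u}_t\mathbf{u}_t^{\top}] \succeq \kappa_1 \mathbf{I}$ handles the linear term, while the third bound $\b{E}[\|\mathbf{u}_t\|_2^2 \mathbf{u}_t \mathbf{u}_t^{\top}] \preceq \kappa_3 \b{E}[\mathbf{u}_t \mathbf{u}_t^{\top}]$ controls the quadratic correction. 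Since the chosen $\zeta_t$ obeys $\zeta_t \kappa_3 \leq 1$ for all $t \geq 1$, this gives the clean contraction $\b{E}[\|\mathbf{E}_t(\mathbf{I} - \zeta_t \mathbf{u}_t \mathbf{u}_t^{\top})\|_{\rm F}^2 \mid \mathbf{E}_t] \leq (1-\zeta_t \kappa_1)\|\mathbf{E}_t\|_{\rm F}^2$. Coupling with the noise term $\zeta_t^2 \b{E}\|\bsm{\xi}_t \mathbf{u}_t^{\top}\|_{\rm F}^2 \leq 2\kappa_2\operatorname{tr}(\bsm{\Sigma})\zeta_t^2$ (by independence of $\bsm{\xi}_t$ and $\mathbf{u}_t$) yields the scalar recursion
\begin{equation*}
  \alpha_{t+1} \leq (1 - \zeta_t \kappa_1)\alpha_t + 2\kappa_2 \operatorname{tr}(\bsm{\Sigma})\zeta_t^2, \qquad \alpha_t := \b{E}\|\mathbf{E}_t\|_{\rm F}^2.
\end{equation*}

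With $\zeta_t = \tfrac{2}{\kappa_1 t + 2\kappa_3}$, I would then show by induction that $\alpha_t \leq \frac{\kappa_1 \o{\alpha}}{\kappa_1 t + 2\kappa_3}$. The inductive step amounts to verifying the algebraic inequality
\begin{equation*}
  \frac{\kappa_1\o{\alpha}(\kappa_1 t + 2\kappa_3 - 2\kappa_1)}{(\kappa_1 t + 2\kappa_3)^2} + \frac{8\kappa_2\operatorname{tr}(\bsm{\Sigma})}{(\kappa_1 t + 2\kappa_3)^2} \leq \frac{\kappa_1\o{\alpha}}{\kappa_1(t+1) + 2\kappa_3},
\end{equation*}
which holds precisely because $\o{\alpha} \geq \tfrac{8\kappa_2\operatorname{tr}(\bsm{\Sigma})}{\kappa_1^2}$, while the base case at $t=1$ is absorbed by $\o{\alpha} \geq (1 + \tfrac{2\kappa_3}{\kappa_1})\|\h{\mathbf{A}}_1 - \mathbf{A}\|_{\rm F}^2$. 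Summing the resulting bound and using $\sum_{t=1}^T \tfrac{1}{\kappa_1 t + 2\kappa_3} \leq \tfrac{1}{\kappa_1}\sum_{t=1}^T \tfrac{1}{t} \leq \tfrac{\ln(T)+1}{\kappa_1}$ (absorbing the $+1$ into the constant for $T \geq e$) delivers $\sum_{t=1}^T \alpha_{t+1} \leq \o{\alpha}\ln(T)$, as claimed.

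The main obstacle will be justifying the clean contraction factor $(1-\zeta_t\kappa_1)$: naively one gets a coefficient $\zeta_t(2 - \zeta_t\kappa_3)\kappa_1$, and one must simultaneously exploit all three parts of Assumption~\ref{assump_noise} and verify that the chosen stepsize keeps $\zeta_t\kappa_3 \leq 1$, so that the leftover slack can be absorbed without losing the $\tfrac{1}{t}$ decay rate. Once this contraction is in hand, the induction and harmonic summation are standard.
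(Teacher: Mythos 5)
Your argument follows essentially the same route as the paper's proof, which splits the work into a one-step improvement lemma (Lemma \ref{Append_lem_onestep}) and a sequence-induction lemma (Lemma \ref{Append_lem_seq}): your matrix recursion $\mathbf{E}_{t+1}=\mathbf{E}_t(\mathbf{I}-\zeta_t\mathbf{u}_t\mathbf{u}_t^\top)+\zeta_t\bsm{\xi}_t\mathbf{u}_t^\top$ is a cleaner packaging of the paper's expansion of $\|\h{\mathbf{A}}_{t+1}-\mathbf{A}\|_{\rm F}^2$ via $\mathbf{b}_t=Z_t^{\prime}-Z_t$; the three parts of Assumption \ref{assump_noise} are used identically (the PSD inner-product step for the $\kappa_3$ bound, the $\kappa_1$ lower bound, and independence of the fresh noise for the $\kappa_2\operatorname{tr}(\bsm{\Sigma})$ term); the observation $2-\zeta_t\kappa_3\ge 1$ appears in both; and your induction hypothesis $\alpha_t\le\kappa_1\o{\alpha}/(\kappa_1 t+2\kappa_3)$, together with its algebraic verification, is exactly Lemma \ref{Append_lem_seq} specialized to $t_0=2\kappa_3/\kappa_1$ and $\alpha=8\kappa_2\operatorname{tr}(\bsm{\Sigma})/\kappa_1^2$. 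All of this is correct.

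The one step that does not deliver the stated bound is the final summation. By bounding $\frac{1}{\kappa_1 t+2\kappa_3}\le\frac{1}{\kappa_1 t}$ you discard the shift $2\kappa_3$, and what you actually obtain is $\sum_{t=1}^T\alpha_{t+1}\le\o{\alpha}(\ln T+1)$. The ``$+1$'' cannot be ``absorbed into the constant'': $\o{\alpha}$ is a specific quantity fixed in the lemma statement, and $\o{\alpha}(\ln T+1)\le\o{\alpha}\ln T$ is false for every $T$. The remedy is to keep the shift you dropped, which is precisely what makes the paper's constant-free bound work: with $t_0:=2\kappa_3/\kappa_1$, an integral comparison gives $\sum_{t=1}^T\frac{1}{t+t_0}\le\ln(T+t_0)-\ln(t_0)=\ln(1+T/t_0)$, and this is at most $\ln T$ whenever $t_0\ge T/(T-1)$. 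That condition holds for all $T\ge 2$ because Assumption \ref{assump_noise} forces $t_0\ge 2$: taking traces in $\b{E}\left[\|\mathbf{u}_t\|_2^2\mathbf{u}_t\mathbf{u}_t^\top\right]\preceq\kappa_3\b{E}\left[\mathbf{u}_t\mathbf{u}_t^\top\right]$ and applying Jensen's inequality yields $\kappa_3\ge\b{E}\|\mathbf{u}_t\|_2^2$, while $\b{E}\left[\mathbf{u}_t\mathbf{u}_t^\top\right]\succeq\kappa_1\mathbf{I}$ yields $\b{E}\|\mathbf{u}_t\|_2^2\ge d\kappa_1$, so $t_0=2\kappa_3/\kappa_1\ge 2d\ge 2$. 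Replacing your last display with this comparison closes the gap; everything upstream of it stands as written.
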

See \cref{Sec_lem_error} for the proof. Next, we proceed to prove Lemma \ref{lem_grad_error}.  
\begin{proof}[Proof of Lemma \ref{lem_grad_error}]
  To facilitate our analysis, we introduce a finite-sample approximation for the gradient $\nabla_{\bsm{\theta}}{\rm PR}(\bsm{\theta})$, defined as 
  \begin{align*}
    \nabla_{\bsm{\theta}}\h{\rm PR}(\bsm{\theta}) := \frac{1}{n}\sum_{i=1}^n \left[\nabla_{\bsm{\theta}}\ell\left(\bsm{\theta}; Z_{0,i} + \mathbf{A}\bsm{\theta}\right) + \mathbf{A}^{\top}\nabla_{Z}\ell\left(\bsm{\theta}, Z_{0,i} + \mathbf{A}\bsm{\theta}\right) \right].
  \end{align*}
  Then, we have the following inequality:
  \begin{align}
    &\left\|\nabla_{\bsm{\theta}}\h{\rm PR}_t(\bsm{\theta}_t) - \nabla_{\bsm{\theta}}{\rm PR}(\bsm{\theta}_t) \right\|_2^2 \notag\\
    & \leq  2\left\|\nabla_{\bsm{\theta}}\h{\rm PR}_t(\bsm{\theta}_t) - \nabla_{\bsm{\theta}}\h{\rm PR}(\bsm{\theta}_t)\right\|_2^2 + 2\left\|\nabla_{\bsm{\theta}}\h{\rm PR}(\bsm{\theta}_t) - \nabla_{\bsm{\theta}}{\rm PR}(\bsm{\theta}_t) \right\|_2^2 . \label{Eq_d_1}
  \end{align}
  From Assumption \ref{assump_sample}, we have 
  \begin{align*}
    &\b{E}\left\|\nabla_{\bsm{\theta}}\h{\rm PR}(\bsm{\theta}_t) - \nabla_{\bsm{\theta}}{\rm PR}(\bsm{\theta}_t) \right\|_2^2 \\
    &\leq \frac{1}{n^2}\sum_{i=1}^n \b{E}_{Z_{0,i}\sim \ca{D}_0}\left\|\nabla_{\bsm{\theta}}\ell\left(\bsm{\theta}_t; Z_{0,i} + \mathbf{A}\bsm{\theta}_t\right)
    + \mathbf{A}^{\top}\nabla_{Z}\ell\left(\bsm{\theta}_t; Z_{0,i} + \mathbf{A}\bsm{\theta}_t\right) - \nabla_{\bsm{\theta}} {\rm PR}(\bsm{\theta}_t) \right\|_2^2
    \leq \frac{\sigma^2}{n}.
  \end{align*}
  The first term in \eqref{Eq_d_1} is handled as follows. Plugging into the expression of $\nabla_{\bsm{\theta}}\h{\rm PR}_t(\bsm{\theta}_t)$ and $\nabla_{\bsm{\theta}}\h{\rm PR}(\bsm{\theta}_t)$, we have
  \begin{align} 
    &\left\|\nabla_{\bsm{\theta}}\h{\rm PR}_t(\bsm{\theta}_t) - \nabla_{\bsm{\theta}}\h{\rm PR}(\bsm{\theta}_t) \right\|_2^2  \notag\\
    &\leq \frac{2}{n^2}\sum_{i=1}^n \left\|\nabla_{\bsm{\theta}}\ell\left(\bsm{\theta}_t; Z_{0,i} + \h{\mathbf{A}}_t\bsm{\theta}_t\right) - \nabla_{\bsm{\theta}}\ell\left(\bsm{\theta}_t; Z_{0,i} + \mathbf{A}\bsm{\theta}_t\right) \right\|_2^2  \notag \\
    &\quad + \frac{2}{n^2}\sum_{i=1}^n  \left\|\h{\mathbf{A}}_t^{\top}\nabla_{Z}\ell\left(\bsm{\theta}_t; Z_{0,i} + \h{\mathbf{A}}_t\bsm{\theta}_t\right) - \mathbf{A}^{\top} \nabla_{Z}\ell\left(\bsm{\theta}_t; Z_{0,i} + \mathbf{A}\bsm{\theta}_t\right) \right\|_2^2 .  \label{Eq_d_2}
  \end{align}
  With the $\beta$-smoothness of the loss function given in Assumption \ref{assump_f}, we have
  \begin{align*} 
      \frac{2}{n^2}\sum_{i=1}^n \left\| \nabla_{\bsm{\theta}}\ell\left(\bsm{\theta}_t; Z_{0,i} + \h{\mathbf{A}}_t\bsm{\theta}_t\right) - \nabla_{\bsm{\theta}}\ell\left(\bsm{\theta}_t; Z_{0,i} + \mathbf{A}\bsm{\theta}_t\right) \right\|_2^2  
      \leq \frac{2 \beta^2}{n} \left\|\h{\mathbf{A}}_t - \mathbf{A}\right\|_{\rm F}^2 \left\|\bsm{\theta}_t\right\|_2^2.
  \end{align*}
  Moreover, the last term in \eqref{Eq_d_2} is bounded by
  \begin{align*} 
    &\frac{2}{n^2}\sum_{i=1}^n \left\|\h{\mathbf{A}}_t^{\top}\nabla_{Z}\ell\left(\bsm{\theta}_t; Z_{0,i} + \h{\mathbf{A}}_t\bsm{\theta}_t\right) - \mathbf{A}^{\top} \nabla_{Z}\ell\left(\bsm{\theta}_t; Z_{0,i} + \mathbf{A}\bsm{\theta}_t\right) \right\|_2^2  \\
    & \leq \frac{4}{n^2}\sum_{i=1}^n \left\|\h{\mathbf{A}}_t  - \mathbf{A}\right\|_{\rm F}^2 \left\|\nabla_{Z}\ell\left(\bsm{\theta}_t; Z_{0,i} + \h{\mathbf{A}}_t\bsm{\theta}_t\right) \right\|_2^2 \\
    & \quad + \frac{4\sigma_{\max}(\mathbf{A}) }{n^2}\sum_{i=1}^n \left\|\nabla_{Z}\ell\left(\bsm{\theta}_t; Z_{0,i} + \h{\mathbf{A}}_t\bsm{\theta}_t\right) - \nabla_{Z}\ell\left(\bsm{\theta}_t; Z_{0,i} + \mathbf{A}\bsm{\theta}_t\right) \right\|_2^2 \\
    & \overset{(a)}{\leq}  \frac{4 L_Z^2}{n} \left\|\h{\mathbf{A}}_t - \mathbf{A}\right\|_{\rm F}^2 + \frac{4\beta^2\sigma_{\max}(\mathbf{A})}{n} \left\|\h{\mathbf{A}}_t - \mathbf{A} \right\|_{\rm F}^2 \left\|\bsm{\theta}_t\right\|_2^2 ,  
  \end{align*}
  where $(a)$ is because the loss function is $\beta$-smooth and $L_Z$ Lipachitz continuous in $Z$. Plugging the above results into \eqref{Eq_d_1} and taking expectation yields
  \begin{align*} 
    &\b{E}\left\|\nabla_{\bsm{\theta}}\h{\rm PR}_t(\bsm{\theta}_t) - \nabla_{\bsm{\theta}}{\rm PR}(\bsm{\theta}_t) \right\|_2^2  
    \leq \frac{2\sigma^2}{n} + \frac{4}{n} \left( 2L_Z^2 + \beta^2 R^2\left(1 + 2\sigma_{\max}(\mathbf{A}) \right) \right) \b{E}\left\|\h{\mathbf{A}}_t - \mathbf{A}\right\|_{\rm F}^2 , 
  \end{align*}
  where we utilize the boundedness of the available set that $\|\bsm{\theta}\|_2 \leq R$, $\forall \bsm{\theta} \in \bsm{\Theta}$. Summing the above inequality over $T$ iterations yields 
  \begin{align*} 
    &\sum_{t=1}^T \b{E}\left\|\nabla_{\bsm{\theta}}\h{\rm PR}_t(\bsm{\theta}_t) - \nabla_{\bsm{\theta}}{\rm PR}(\bsm{\theta}_t) \right\|_2^2  \\
    &\leq \frac{2T\sigma^2}{n} + \frac{4}{n} \left(2L_Z^2 + \beta^2 R^2\left(1 + 2\sigma_{\max}(\mathbf{A}) \right) \right)\sum_{t=1}^T \b{E}\left\|\h{\mathbf{A}}_t - \mathbf{A} \right\|_{\rm F}^2.
  \end{align*}
  Plugging into the result in Lemma \ref{Append_lem_error} proves Lemma \ref{lem_grad_error}.
\end{proof}

\section{Proof of Lemma \ref{Append_lem_error}}
\label{Sec_lem_error}
\setcounter{lemma}{0}
\renewcommand{\thelemma}{E\arabic{lemma}}
\setcounter{equation}{0}
\renewcommand{\theequation}{e\arabic{equation}}

The proof of Lemma \ref{Append_lem_error} utilizes the following two supporting lemmas.
\begin{lemma}[\textbf{One-Step Improvement}] \label{Append_lem_onestep}
  Suppose that Assumption \ref{assump_noise} holds. For any $t\in[T]$, choose stepsize $\zeta_t \in \left(0, \frac{2}{\kappa_3}\right)$. Then, the parameter estimates satisfy:
  \begin{align*}
    \b{E}\left[\left.\left\|\h{\mathbf{A}}_{t+1} - \mathbf{A} \right\|_{\rm F}^{2} \right| \h{\mathbf{A}}_{t}\right] &\leq \left(1- \kappa_{1}\zeta_t \left(2-\zeta_t \kappa_3\right)\right)
    \left\|\h{\mathbf{A}}_t- \mathbf{A} \right\|_{\rm F}^{2} + 2\zeta_t^{2}\kappa_{2} \operatorname{tr}(\bsm{\Sigma}), \forall t\in[T].
  \end{align*}
\end{lemma}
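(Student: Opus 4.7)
The plan is to turn the update rule into a recursion for the estimation error $\mathbf{E}_t := \hat{\mathbf{A}}_t - \mathbf{A}$, so that the Frobenius norm can be expanded and the independence structure of the injected noise $\mathbf{u}_t$ against the base samples can be exploited. First I would use the location family structure \eqref{Eq_dis} to write $Z_t = Z_{0,t} + \mathbf{A}\boldsymbol{\theta}_t$ and $Z'_t = Z'_{0,t} + \mathbf{A}(\boldsymbol{\theta}_t+\mathbf{u}_t)$ with independent base draws $Z_{0,t}, Z'_{0,t}\sim\mathcal{D}_0$, so that $Z'_t - Z_t = \mathbf{A}\mathbf{u}_t + \boldsymbol{\xi}_t$ where $\boldsymbol{\xi}_t := Z'_{0,t}-Z_{0,t}$ is mean-zero, independent of $\mathbf{u}_t$, and satisfies $\mathbb{E}\|\boldsymbol{\xi}_t\|_2^2 = 2\operatorname{tr}(\boldsymbol{\Sigma})$. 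Plugging this into the estimator update in Step 7 yields the key identity
\begin{align*}
\mathbf{E}_{t+1} = \mathbf{E}_t\bigl(\mathbf{I} - \zeta_t\mathbf{u}_t\mathbf{u}_t^\top\bigr) + \zeta_t\boldsymbol{\xi}_t\mathbf{u}_t^\top.
\end{align*}

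Next I would take the squared Frobenius norm, which produces three terms: a contractive term $\|\mathbf{E}_t(\mathbf{I}-\zeta_t\mathbf{u}_t\mathbf{u}_t^\top)\|_{\rm F}^2$, a cross term, and a noise term $\zeta_t^2\|\boldsymbol{\xi}_t\mathbf{u}_t^\top\|_{\rm F}^2$. Conditioning on $\hat{\mathbf{A}}_t$ and using that $\boldsymbol{\xi}_t$ is mean-zero and independent of $\mathbf{u}_t$, the cross term vanishes in expectation. For the noise term I would use $\|\boldsymbol{\xi}_t\mathbf{u}_t^\top\|_{\rm F}^2 = \|\boldsymbol{\xi}_t\|_2^2\|\mathbf{u}_t\|_2^2$ together with independence to obtain $\zeta_t^2\mathbb{E}\|\boldsymbol{\xi}_t\mathbf{u}_t^\top\|_{\rm F}^2 \le 2\zeta_t^2\kappa_2\operatorname{tr}(\boldsymbol{\Sigma})$, using the second bound of Assumption \ref{assump_noise}.

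The main obstacle is the contractive term, where one must get a multiplicative factor strictly less than one while respecting the quartic coupling between $\mathbf{E}_t$ and $\mathbf{u}_t$. Using symmetry of $\mathbf{M} := \mathbf{I}-\zeta_t\mathbf{u}_t\mathbf{u}_t^\top$ and the cyclic property of the trace, I would compute
\begin{align*}
\|\mathbf{E}_t\mathbf{M}\|_{\rm F}^2 = \|\mathbf{E}_t\|_{\rm F}^2 - \zeta_t\bigl(2-\zeta_t\|\mathbf{u}_t\|_2^2\bigr)\|\mathbf{E}_t\mathbf{u}_t\|_2^2,
\end{align*}
and then take expectation over $\mathbf{u}_t$. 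The cross moment $\mathbb{E}\bigl[\|\mathbf{u}_t\|_2^2\|\mathbf{E}_t\mathbf{u}_t\|_2^2\bigr]$ I would rewrite as $\operatorname{tr}\bigl(\mathbf{E}_t^\top\mathbf{E}_t\,\mathbb{E}[\|\mathbf{u}_t\|_2^2\mathbf{u}_t\mathbf{u}_t^\top]\bigr)$, and then invoke the third part of Assumption \ref{assump_noise}, $\mathbb{E}[\|\mathbf{u}_t\|_2^2\mathbf{u}_t\mathbf{u}_t^\top] \preceq \kappa_3\mathbb{E}[\mathbf{u}_t\mathbf{u}_t^\top]$, to upper bound it by $\kappa_3\mathbb{E}\|\mathbf{E}_t\mathbf{u}_t\|_2^2$. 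Provided $\zeta_t < 2/\kappa_3$, the coefficient $2-\zeta_t\kappa_3$ is positive, so the first part of the assumption, $\mathbb{E}[\mathbf{u}_t\mathbf{u}_t^\top]\succeq\kappa_1\mathbf{I}$, gives $\mathbb{E}\|\mathbf{E}_t\mathbf{u}_t\|_2^2 \ge \kappa_1\|\mathbf{E}_t\|_{\rm F}^2$. Combining these yields the contraction factor $1-\kappa_1\zeta_t(2-\zeta_t\kappa_3)$, and adding back the noise term completes the bound.
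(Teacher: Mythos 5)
Your proposal is correct and follows essentially the same route as the paper: both expand the squared Frobenius norm of the estimation error after the least-squares update, use the mean-zero base-noise difference (your $\bsm{\xi}_t$, the paper's $\mathbf{A}\mathbf{u}_t - \mathbf{b}_t$) to kill the cross term and produce the $2\operatorname{tr}(\bsm{\Sigma})$ variance contribution, and then invoke the three moment conditions of Assumption \ref{assump_noise} exactly as you do --- $\kappa_3$ to control the quartic term $\b{E}[\|\mathbf{u}_t\|_2^2\,\mathbf{u}_t\mathbf{u}_t^{\top}]$, positivity of $2-\zeta_t\kappa_3$ from the stepsize restriction, and $\kappa_1$ for the contraction. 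Your packaging of the recursion as $\mathbf{E}_{t+1} = \mathbf{E}_t(\mathbf{I}-\zeta_t\mathbf{u}_t\mathbf{u}_t^{\top}) + \zeta_t\bsm{\xi}_t\mathbf{u}_t^{\top}$ is only a cosmetic regrouping of the same algebra.
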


\begin{lemma}[\textbf{Sequence Result}]  \label{Append_lem_seq} 
  Consider a sequence $\{S_t\}_{t=1}^T$ satisfying
  \begin{align*}
    S_{t+1} & \leq\left(1-\frac{2}{t + t_0}\right) S_{t}+\frac{\alpha}{(t + t_0)^{2}}, \forall t\in[T],
  \end{align*}
where $t_0\geq 0$ and $\alpha>0$ are two constants.  Then, we have 
\begin{align*}
  S_t \leq \frac{\max\{(1+t_0)S_1, \alpha\}}{t+t_0}, \forall t\in[T].
\end{align*}
\end{lemma}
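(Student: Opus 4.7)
The plan is a direct induction on $t$ using the candidate bound $S_t \leq M/(t+t_0)$ with $M := \max\{(1+t_0)S_1, \alpha\}$. The base case is tautological: $M \geq (1+t_0)S_1$ gives $S_1 \leq M/(1+t_0)$ immediately.

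For the inductive step, suppose $S_t \leq M/(t+t_0)$ and first treat the regime $t+t_0 \geq 2$, where the coefficient $1 - 2/(t+t_0)$ is non-negative and substitution of the inductive hypothesis preserves the inequality. The recursion then yields
\begin{align*}
S_{t+1} \leq \left(1-\frac{2}{t+t_0}\right)\frac{M}{t+t_0} + \frac{\alpha}{(t+t_0)^2} = \frac{M}{t+t_0} + \frac{\alpha - 2M}{(t+t_0)^2}.
\end{align*}
Since $M \geq \alpha$ by construction, $\alpha - 2M \leq -M$, hence $S_{t+1} \leq M(t+t_0-1)/(t+t_0)^2$. The target $M/(t+1+t_0)$ now follows from the elementary identity $(t+t_0-1)(t+1+t_0) = (t+t_0)^2 - 1 \leq (t+t_0)^2$, which rearranges to $(t+t_0-1)/(t+t_0)^2 \leq 1/(t+1+t_0)$.

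The main subtlety, and the step I expect to need the most care, is the potential sign flip when $t+t_0 < 2$: a negative multiplier reverses the monotonicity of the substitution. In the application of interest $S_t$ is a squared Frobenius norm and hence non-negative, so in that regime $(1 - 2/(t+t_0))S_t \leq 0$ and the recursion collapses to $S_{t+1} \leq \alpha/(t+t_0)^2 \leq M/(t+t_0)^2$, which still gives the desired $M/(t+1+t_0)$ whenever $(t+t_0)^2 \geq t+1+t_0$, i.e., as soon as $t+t_0 \geq 2$. Moreover, when the lemma is invoked in \cref{Sec_lem_error} the relevant $t_0$ is effectively $2\kappa_3/\kappa_1$ (derived from the stepsize $\zeta_t = 2/(\kappa_1 t + 2\kappa_3)$ in Lemma \ref{Append_lem_onestep}), so the induction runs from $t=1$ entirely within the comfortable regime $t+t_0 \geq 2$. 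Apart from cleanly absorbing this sign consideration, the argument reduces to the one-line algebraic inequality $(t+t_0)^2 - 1 \leq (t+t_0)^2$.
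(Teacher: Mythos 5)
Your proof is correct and takes essentially the same route as the paper's: induction on $t$ with the same substitution of the hypothesis, the bound $\alpha \le M$, and the elementary inequality $(t+t_0-1)(t+t_0+1) = (t+t_0)^2 - 1 \le (t+t_0)^2$. You are in fact slightly more careful than the paper, whose proof substitutes the inductive hypothesis without checking that the coefficient $1-2/(t+t_0)$ is non-negative (needed for the substitution to preserve the inequality); as you observe, this is harmless where the lemma is actually invoked, since there $t_0 = 2\kappa_3/\kappa_1$ puts the whole induction in the regime $t+t_0 \ge 2$.
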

Proofs of Lemma \ref{Append_lem_onestep} and Lemma \ref{Append_lem_seq} are respectively given in \cref{Sec_lem_onestep} and \cref{Sec_lem_seq}. 
With these two Lemmas, the proof of Lemma \ref{Append_lem_error} is given below.
\begin{proof}[Proof of Lemma \ref{Append_lem_error}]
  For any $t\in[T]$, set $\zeta_t = \frac{2}{\kappa_1\left(t+\frac{2\kappa_3}{\kappa_1}\right)}$. Then, we have $2-\zeta_t \kappa_3 = 2-\frac{2\kappa_3}{\kappa_1 t + 2\kappa_3}  \geq 1$. Plugging this inequality into Lemma \ref{Append_lem_onestep}, we have
  \begin{align*}
    \b{E}\left[\left.\left\|\h{\mathbf{A}}_{t+1} - \mathbf{A} \right\|_{\rm F}^{2} \right| \h{\mathbf{A}}_{t}\right] & \leq \left(1- \frac{2}{t+\frac{2\kappa_3}{\kappa_1}} \right) \left\|\h{\mathbf{A}}_t- \mathbf{A} \right\|_{\rm F}^{2} + \frac{8\kappa_{2}\operatorname{tr}(\bsm{\Sigma})}{\kappa_1^2\left(t+\frac{2\kappa_3}{\kappa_1}\right)^2}.
  \end{align*}
  Define $\o{\alpha} := \max\left\{\left(1+\frac{2\kappa_3}{\kappa_1}\right)\left\|\h{\mathbf{A}}_1- \mathbf{A} \right\|_{\rm F}^{2}, \frac{8\kappa_{2}\operatorname{tr}(\bsm{\Sigma}) }{\kappa_1^2} \right\}$. By Lemma \ref{Append_lem_seq}, we have
  \begin{align*}
    \b{E}\left\|\h{\mathbf{A}}_{t+1} - \mathbf{A} \right\|_{\rm F}^{2}  &\leq \frac{\o{\alpha}}{t+\frac{2\kappa_3}{\kappa_1}}.
  \end{align*}
  Summing the above inequality yields
  \begin{align*}
    \sum_{t=1}^T \b{E}\left\|\h{\mathbf{A}}_{t+1} - \mathbf{A} \right\|_{\rm F}^{2}  &\leq \sum_{t=1}^T \frac{\o{\alpha}}{t+\frac{2\kappa_3}{\kappa_1}}
    \leq \o{\alpha} \left( \ln\left( T+\frac{2\kappa_3}{\kappa_1}\right) - \ln\left(\frac{2\kappa_3}{\kappa_1}\right)\right) \leq \o{\alpha} \ln(T),
  \end{align*}
  which proves Lemma \ref{Append_lem_error}.
\end{proof}

\subsection{Proof of Lemma \ref{Append_lem_onestep}}
\label{Sec_lem_onestep}
  Denote by $\mathbf{b}_t := Z^{\prime}_t - Z_t$. We have $\b{E}[\mathbf{b}_t|\mathbf{u}_t] =  \mathbf{A}\mathbf{u}_t$. Then, 
  \begin{align}
    \b{E}\left[\left.\left\| \mathbf{A}\mathbf{u}_t - \mathbf{b}_t\right\|_2^{2} \right|\mathbf{u}_t\right] = \operatorname{tr}\left(\b{E}\left.(\mathbf{A}\mathbf{u}_t - \mathbf{b}_t)(\mathbf{A}\mathbf{u}_t - \mathbf{b}_t)^{\top} \right|\mathbf{u}_t\right) = 2\operatorname{tr}(\bsm{\Sigma}). \label{Eq_ApB_1}
  \end{align}
  Recall that $\bsm{\Sigma}$ is the variance of the base distribution $\ca{D}_0$. In Algorithm \ref{alg_ASPA}, the update rule of the parameter estimate is $\h{\mathbf{A}}_{t+1} = \h{\mathbf{A}}_{t} - \zeta_t\left(\h{\mathbf{A}}_t\mathbf{u}_t - \mathbf{b}_t\right)\mathbf{u}_t^{\top}$. Thus, we have
  \begin{align*}
    \left\|\h{\mathbf{A}}_{t+1}- \mathbf{A}\right\|_{\rm F}^{2}=&\left\|\h{\mathbf{A}}_{t} - \h{\mathbf{A}} - \zeta_t\left( \h{\mathbf{A}}_t^{\top}\mathbf{u}_t - \mathbf{b}_t\right) \mathbf{u}_t^{\top}\right\|_{\rm F}^{2}\\
    = & \left\|\h{\mathbf{A}}_{t} - \mathbf{A}\right\|_{\rm F}^{2} - 2\zeta_t\left\langle \h{\mathbf{A}}_{t} - \mathbf{A}, \left( \h{\mathbf{A}}_t\mathbf{u}_t - \mathbf{b}_t\right) \mathbf{u}_t^{\top}\right\rangle 
    + \zeta_t^{2}\left\|\left( \h{\mathbf{A}}_t\mathbf{u}_t - \mathbf{b}_t\right) \mathbf{u}_t^{\top}\right\|_{\rm F}^{2} .
  \end{align*}
  Given $\h{\mathbf{A}}_{t}$ and $\mathbf{u}_t$, taking conditional expectation on the above equation gives
  \begin{align}
    &\b{E}\left[\left.\left\|\h{\mathbf{A}}_{t+1}- \mathbf{A}\right\|_{\rm F}^{2}\right|\h{\mathbf{A}}_{t}, \mathbf{u}_t\right]\notag\\
    &= \left\|\h{\mathbf{A}}_{t} - \mathbf{A}\right\|_{\rm F}^{2} - 2\zeta_t\left\langle \h{\mathbf{A}}_{t} - \mathbf{A}, \left( \h{\mathbf{A}}_t \mathbf{u}_t - \b{E}[\mathbf{b}_t|\mathbf{u}_t]\right) \mathbf{u}_t^{\top}\right\rangle
    + \zeta_t^{2}\b{E}\left[\left.\left\|\left( \h{\mathbf{A}}_t\mathbf{u}_t - \mathbf{b}_t\right) \mathbf{u}_t^{\top}\right\|_{\rm F}^{2} \right|\h{\mathbf{A}}_{t}, \mathbf{u}_t\right] \notag\\
    & = \left\|\h{\mathbf{A}}_{t}- \mathbf{A}\right\|_{\rm F}^{2} - 2\zeta_t \left\|\left(\h{\mathbf{A}}_{t} - \mathbf{A} \right)\mathbf{u}_t \right\|_2^{2} + \zeta_t^{2} \left\|\mathbf{u}_t\right\|_2^{2}  \b{E}\left[\left.\left\| \h{\mathbf{A}}_t\mathbf{u}_t - \mathbf{b}_t\right\|_2^{2} \right|\h{\mathbf{A}}_{t}, \mathbf{u}_t\right]. \label{Eq_ApB_2}
  \end{align}
  The term $\b{E}\left[\left.\left\| \h{\mathbf{A}}_t\mathbf{u}_t - \mathbf{b}_t\right\|_2^{2} \right|\h{\mathbf{A}}_{t}, \mathbf{u}_t\right]$ in \eqref{Eq_ApB_2} satisfies
  \begin{align}
    &\b{E}\left[\left.\left\| \h{\mathbf{A}}_t\mathbf{u}_t - \mathbf{b}_t\right\|_2^{2} \right|\h{\mathbf{A}}_{t}, \mathbf{u}_t\right] \notag\\
    &= \left\|\h{\mathbf{A}}_t\mathbf{u}_t - \mathbf{A}\mathbf{u}_t \right\|_2^{2} + \b{E}\left[\left.\left\|\mathbf{A} \mathbf{u}_t - \mathbf{b}_t\right\|_2^{2} \right| \mathbf{u}_t\right]
    + 2\left\langle \h{\mathbf{A}}_t\mathbf{u}_t - \mathbf{A}\mathbf{u}_t, \mathbf{A} \mathbf{u}_t - \b{E}[\mathbf{b}_t|\mathbf{u}_t]\right\rangle \notag\\
    & \overset{(a)}{=} \left\|\left(\h{\mathbf{A}}_t - \mathbf{A} \right) \mathbf{u}_t \right\|_2^{2} + 2\operatorname{tr}(\bsm{\Sigma}). \label{Eq_ApB_3}
  \end{align}
  where $(a)$ is from \eqref{Eq_ApB_1}. Plugging \eqref{Eq_ApB_3} into \eqref{Eq_ApB_2} gives
  \begin{align}
    \b{E}\left[\left.\left\|\h{\mathbf{A}}_{t+1}- \mathbf{A} \right\|_{\rm F}^{2}\right|\h{\mathbf{A}}_{t}, \mathbf{u}_t\right] =& \left\|\h{\mathbf{A}}_{t} - \mathbf{A}\right\|_{\rm F}^{2} - 2\zeta_t \left\|\left(\h{\mathbf{A}}_{t} - \mathbf{A} \right)\mathbf{u}_t \right\|_2^{2}\notag\\
    &+ \zeta_t^{2} \left\|\mathbf{u}_t\right\|_2^{2} \left\|\left(\h{\mathbf{A}}_t - \mathbf{A}  \right) \mathbf{u}_t \right\|_2^{2} + 2\zeta_t^{2} \left\|\mathbf{u}_t\right\|_2^{2}  \operatorname{tr}(\bsm{\Sigma}).  \label{Eq_ApB_4}
  \end{align}
  Taking conditional expectation over the random noise $\mathbf{u}_t$ gives
  \begin{align}
    \b{E}\left[\left.\left\|\mathbf{u}_t\right\|_2^{2} \left\|\left(\h{\mathbf{A}}_t - \mathbf{A} \right) \mathbf{u}_t \right\|_2^{2} \right| \h{\mathbf{A}}_{t}\right]
    &=\left\langle\left(\h{\mathbf{A}}_t - \mathbf{A} \right)\left(\h{\mathbf{A}}_t - \mathbf{A} \right)^{\top}, \b{E}\left[\left.\|\mathbf{u}_t\|_2^{2} \mathbf{u}_t \mathbf{u}_t^{\top} \right|\h{\mathbf{A}}_t\right]\right\rangle \notag\\
    &\leq \kappa_3 \mathbb{E}\left[\left.\left\|\left(\h{\mathbf{A}}_t- \mathbf{A} \right) \mathbf{u}_t\right\|_2^{2} \right| \h{\mathbf{A}}_t\right].  \label{Eq_ApB_5}
  \end{align}
  Plugging \eqref{Eq_ApB_5} into \eqref{Eq_ApB_4} yields
  \begin{align*}
    &\b{E}\left[\left.\left\|\h{\mathbf{A}}_{t+1}- \mathbf{A} \right\|_{\rm F}^{2} \right| \h{\mathbf{A}}_{t}\right] \\
    &\leq \left\|\h{\mathbf{A}}_t - \mathbf{A}\right\|_{\rm F}^{2}-\left(2\zeta_t-\zeta_t^{2} \kappa_3\right) \mathbb{E}\left[\left.\left\|\left(\h{\mathbf{A}}_t - \mathbf{A} \right) \mathbf{u}_t \right\|_2^{2} \right|\h{\mathbf{A}}_{t}\right] + 2\zeta_t^{2} \kappa_{2} \operatorname{tr}(\bsm{\Sigma}).
  \end{align*}
  Further, we have
  \begin{align*}
    &\b{E}\left[\left.\left\|\left(\h{\mathbf{A}}_t - \h{\mathbf{A}} \right)\mathbf{u}_t\right\|_2^{2} \right| \h{\mathbf{A}}_t\right] 
    =\operatorname{tr}\left(\left( \h{\mathbf{A}}_t- \mathbf{A} \right)^{\top}\left(\h{\mathbf{A}}_t - \mathbf{A} \right) \mathbb{E}\left[\left.\mathbf{u}_t \mathbf{u}_t^{\top} \right| \h{\mathbf{A}}_t\right]\right) \geq \kappa_{1}\left\|\h{\mathbf{A}}_t - \mathbf{A} \right\|_{\rm F}^{2} .
  \end{align*}
  Then, choosing $\zeta_t \in\left(0, \frac{2}{\kappa_3}\right) $, we obtain
  \begin{align*}
    \b{E}\left[\left.\left\|\h{\mathbf{A}}_{t+1} - \mathbf{A} \right\|_{\rm F}^{2} \right| \h{\mathbf{A}}_{t}\right] &\leq \left(1- \kappa_{1}\zeta_t \left(2-\zeta_t \kappa_3\right)\right)
    \left\|\h{\mathbf{A}}_t- \mathbf{A} \right\|_{\rm F}^{2} + 2\zeta_t^{2}\kappa_{2} \operatorname{tr}(\bsm{\Sigma}),
  \end{align*}
   which proves Lemma \ref{Append_lem_onestep}.  

\subsection{Proof of Lemma \ref{Append_lem_seq}}
\label{Sec_lem_seq}
We prove Lemma \ref{Append_lem_seq} by induction. First, for $t=1$, $S_1 \leq \frac{\max\{(1+t_0)S_1, \alpha\}}{1+t_0}$ automatically holds. Define $\o{\alpha}:=\max\{(1+t_0)S_1, \alpha\}$. Suppose that $S_t \leq \frac{\o{\alpha}}{t+t_0}$holds. Then, we have
\begin{align*}
  S_{t+1} & \leq \left(1-\frac{2}{t + t_0}\right) S_{t}+\frac{\alpha}{(t + t_0)^{2}}\\
  &\leq  \left(1-\frac{2}{t + t_0}\right) \frac{\o{\alpha}}{t+t_0} +\frac{\alpha}{(t + t_0)^{2}} \\
  &\leq  \frac{\o{\alpha}}{t+t_0} - \frac{2\o{\alpha}}{(t+t_0)^2} + \frac{\o{\alpha}}{(t+t_0)^2} \\
  &\leq \frac{\o{\alpha}}{t+t_0} - \frac{\o{\alpha}}{(t+t_0)^2}  \leq  \frac{\o{\alpha}}{t+t_0+1},
\end{align*}
which proves Lemma \ref{Append_lem_seq}.

\section{Experiment Details}
\label{append_experm}

In this section, we elaborate on the simulation details of the numerical experiments in Section \ref{Sec_simul}.

\subsection{Multi-Task Linear Regression}

The multi-task linear regression is conducted over a randomly generated Erdos-Renyi graph with $n=10$ nodes. The probability of an edge between any pair of nodes of the Erdos-Renyi graph is $0.5$. The decision dimension of each task is set to be $3$. The decision of task $i$ is initialized as $\bsm{\theta}_i=\mathbf{0}, \forall i\in \ca{V}$. The injected noises $\{\mathbf{u}_t\}_{t=1}^T$ are independently drawn from $\ca{N}\left(\mathbf{0}, \mathbf{I}\right)$. The number of iterations is $T=10^6$. The number of initial samples is $n=10^3$. The stepsize of the alternating gradient update is $\eta=5\times10^{-3}$. The control parameter is $\delta=1$. The stepsize of the online parameter estimation at the $t$th iteration is $\zeta_t=\frac{1}{t + 10}, \forall t\in[T]$.

\textbf{Data Generation Process:} For any $i\in\ca{V}$, given a parameter vector $\bsm{\theta}_i\in\b{R}^{d}$, the feature-label pair $(\mathbf{x}_i, y_i)$ is generated as follows:
\begin{enumerate}
  \item $\mathbf{x}_i \sim \ca{N}\left(\mathbf{0}, \bsm{\Sigma}_{\mathbf{x}_i}\right) $, where $\bsm{\Sigma}_{\mathbf{x}_i}$ is a random symmetric positive-definite matrix with nuclear norm $d$.
  \item $y_i = \bsm{\beta}_i^{\top}\mathbf{x}_i + \bsm{\mu}_i^{\top} \bsm{\theta}_i + w_i$, where $\bsm{\beta}_i \sim \ca{N}\left(\mathbf{0}, \mathbf{I}\right)$ and $w_i \sim \mathcal{N}\left(0, \sigma_{i}^{2}\right)$ with $\sigma_{i}^{2}=1$.   
\end{enumerate}
This distribution map is a location family with sensitivity parameter $\varepsilon = \sum_{i\in \ca{V}}\|\bsm{\mu}_i\|_2$. To generate all the $\{\bsm{\mu}_i\}_{i\in \ca{V}}$, we first independently draw $|\ca{V}|$ samples from $\ca{N}\left(\mathbf{0}, \mathbf{I}\right)$ and then projected their concatenation onto the sphere of radius $\varepsilon$.

In constraint-free case, given the squared-loss $\ell_i\left(\bsm{\theta}_i; (\mathbf{x}_i, y_i)\right) = \frac{1}{2}(y_i - \bsm{\theta}_i^{\top}\mathbf{x}_i)^2$ and the linearity of the performative effect, the performative optimum of each task $i$, denoted by $\bsm{\theta}_{i,{\rm PO}}$, can be computed in closed-form as
\begin{align*}
  \bsm{\theta}_{i,{\rm PO}} = \ca{C}_{x_ix_i}^{-1} \ca{C}_{x_iy_i} ,\forall i\in \ca{V},
\end{align*}
where $\ca{C}_{x_ix_i} := \bsm{\Sigma}_{\mathbf{x}_i} + \bsm{\mu}_i\bsm{\mu}_i^{\top}$ and $\ca{C}_{x_iy_i} := \bsm{\Sigma}_{\mathbf{x}_i}\bsm{\beta}_i$, $\forall i\in \ca{V}$. Correspondingly, the minimum performative risk is given by
\begin{align*}
  {\rm PR}(\bsm{\theta}_{{\rm PO}}) = \sum_{i\in\ca{V}} \ca{C}_{y_iy_i} - \ca{C}_{y_ix_i}\ca{C}_{x_ix_i}^{-1} \ca{C}_{x_iy_i},
\end{align*}
where $\bsm{\theta}_{{\rm PO}}$ is the concatenation of $\bsm{\theta}_{i,{\rm PO}}$ for all $i\in \ca{V}$, $\ca{C}_{y_iy_i} := \bsm{\beta}_i^{\top}\bsm{\Sigma}_{\mathbf{x}_i}\bsm{\beta}_i + \sigma_{i}^{2}$,  $\ca{C}_{y_ix_i} = \ca{C}_{x_iy_i}^{\top}$.

The constraint associated with each neighboring node pair $(i, j)\in\ca{E}$ is set to be
\begin{align*}
  \left\|\bsm{\theta}_i-\bsm{\theta}_j\right\|_2^{2} \leq \left\|\bsm{\theta}_{i,{\rm PO}} - \bsm{\theta}_{j,{\rm PO}} \right\|_2^{2} +\left(b^{\prime}_{i j}\right)^{2},
\end{align*}
where $\{b^{\prime}_{i j}\}_{(i, j)\in\ca{E}}$ are uniformly drawn from the region $[0,0.02]$ in a symmetry manner, i.e., $b^{\prime}_{i j} = b^{\prime}_{ji}$, $\forall (i, j)\in\ca{E}$. 

Let $\bsm{\theta}$ be the concatenation of $\bsm{\theta}_i$ for all $i\in \ca{V}$. The approximate gradient $\nabla_{\bsm{\theta}}\h{\text{PR}}_t(\bsm{\theta}_t)$ of APDA,  is computed by
\begin{align*}
  \nabla_{\bsm{\theta}}\h{\text{PR}}_t(\bsm{\theta}_t)  = \frac{1}{n}\sum_{i\in \ca{V}}\sum_{j=1}^n \left[
     \left(y_{i,j} - \bsm{\theta}_{i,t}^{\top}\mathbf{x}_{i,j}\right)\left( \h{\bsm{\mu}}_{i,t} - \mathbf{x}_{i,j}\right) \right], \forall t\in[T].
\end{align*}
The approximate gradient $\nabla_{\bsm{\theta}}\h{\text{PR}}_t(\bsm{\theta}_t)$ of PD-PS is computed by
\begin{align*}
  \nabla_{\bsm{\theta}}\h{\text{PR}}_t(\bsm{\theta}_t)  = - \frac{1}{n}\sum_{i\in \ca{V}}\sum_{j=1}^n \left[
     \left(y_{i,j} - \bsm{\theta}_{i,t}^{\top}\mathbf{x}_{i,j}\right) \mathbf{x}_{i,j} \right], \forall t\in[T].
\end{align*}
The approximate gradient $\nabla_{\bsm{\theta}}\h{\text{PR}}_t(\bsm{\theta}_t)$ of the ``baseline'' is computed by
\begin{align*}
  \nabla_{\bsm{\theta}}\h{\text{PR}}_t(\bsm{\theta}_t)  = \frac{1}{n}\sum_{i\in \ca{V}}\sum_{j=1}^n \left[
     \left(y_{i,j} - \bsm{\theta}_{i,t}^{\top}\mathbf{x}_{i,j}\right)\left( \bsm{\mu}_i - \mathbf{x}_{i,j}\right) \right], \forall t\in[T].
\end{align*}
The performative risk is computed by
\begin{align*}
  {\rm PR}(\bsm{\theta}_{t}) = \ca{C}_{y_iy_i} - \ca{C}_{y_ix_i}\bsm{\theta}_t - \bsm{\theta}_t^{\top}\ca{C}_{x_iy_i} +  \bsm{\theta}_t^{\top} \ca{C}_{x_ix_i}^{-1} \bsm{\theta}_t, \forall t\in[T].
\end{align*}
All results are averaged over $100$ realizations. 

\subsection{Multi-Asset Portfolio}

In the implementation of the multi-asset portfolio, we add a regularizer $\xi\|\bsm{\theta}\|_2^2$ to the original loss function to make it strongly convex. This gives the optimization problem: 
\begin{align*}
  \min_{\bsm{\theta}} \quad& - \underset{\mathbf{z}\sim\ca{D}(\bsm{\theta})}{\b{E}} \mathbf{z}^{\top}\bsm{\theta} + \xi\|\bsm{\theta}\|_2^2 \notag\\
  {\rm s.t.} \quad  &\sum_{i=1}^l \theta_i \leq 1,\notag\\
     &\mathbf{0} \preceq  \bsm{\theta} \preceq \epsilon \cdot \mathbf{1}, \notag\\ &\mathbf{s}^{\top} \bsm{\theta} \leq S, \notag\\ &\bsm{\theta}^{\rm T}\bsm{\Psi}\bsm{\theta} \leq \rho.
\end{align*}

In the simulation, we set the number of assets $l=10$. The initial investment decision $\bsm{\theta}_1$ is randomly chosen within the feasible set. The injected noises $\{\mathbf{u}_t\}_{t=1}^T$ are independently drawn from $\ca{N}\left(\mathbf{0}, \mathbf{I}\right)$. The parameter $\xi$ in the regularizer is set to be $\varepsilon $. The maximum amount of investment to one asset is $\epsilon=0.3$. The entries of the bid-ask spread vector $\mathbf{s}$ are independently and uniformly drawn from the region $[2,4]$. The maximum allowable bid-ask spread is $S=2$. The risk tolerance threshold is $\rho=0.01$. The number of iterations is $T=10^6$. The number of initial samples is $n=10^3$. The stepsize of the alternating gradient update is $\eta=5\times10^{-3}$. The control parameter is $\delta=1$. The stepsize of the online parameter estimation at the $t$th iteration is $\zeta_t=\frac{1}{t + 10}, \forall t\in[T]$.

\textbf{Data Generation Process:} The rate of reture follows $\mathbf{z} = \o{\mathbf{z}} + \mathbf{A}\bsm{\theta} + \mathbf{u}_{\mathbf{z}}$, where $\o{\mathbf{z}}$ is a constant vector, $\mathbf{u}_{\mathbf{z}} \sim \ca{N}\left(\mathbf{0}, \bsm{\Sigma}_{\mathbf{z}}\right)$, and $\bsm{\Sigma}_{\mathbf{z}}$ is a random symmetric positive-definite matrix with nuclear norm $1/l$. To generate $\o{\mathbf{z}}$, we first uniformly draw a sample within the rergion $[10\varepsilon, 1 + 10\varepsilon]$ and then projected it onto the sphere of radius $2$.  

This distribution map is a location family with sensitivity parameter $\varepsilon = \sigma_{\max}(\mathbf{A})$. Optimization of the multi-asset portfolio problem requires the covariance matrix of $\mathbf{z}$ which is unknown. Note that the randomness of $\mathbf{z}$ lies in the term $\mathbf{u}_{\mathbf{z}}$. Then, we have $\bsm{\Psi} = \bsm{\Sigma}_{\mathbf{z}}$. The covariance matrix $\bsm{\Sigma}_{\mathbf{z}}$ can be approximated based on the initial samples drawn from $\ca{D}(\mathbf{0})$. The optimal investment is computed by CVX tools \citep{grant2014cvx}.

 The approximate gradient $\nabla_{\bsm{\theta}_i}\h{\text{PR}}_t(\bsm{\theta}_t)$ of APDA is given by
 \begin{align*}
   \nabla_{\bsm{\theta}}\h{\text{PR}}_t(\bsm{\theta}_t)  = -\frac{1}{n}\sum_{i=1}^{l}\sum_{j=1}^n \mathbf{z}_j + \left(2\xi\cdot\mathbf{I} - \h{\mathbf{A}}_t \right)\bsm{\theta}, \forall t\in[T].
 \end{align*}
 The approximate gradient $\nabla_{\bsm{\theta}_i}\h{\text{PR}}_t(\bsm{\theta}_t)$ of PD-PS is given by
 \begin{align*}
  \nabla_{\bsm{\theta}}\h{\text{PR}}_t(\bsm{\theta}_t)  = -\frac{1}{n}\sum_{i=1}^{l}\sum_{j=1}^n \mathbf{z}_j + 2\xi \bsm{\theta}, \forall t\in[T].
 \end{align*}
 The approximate gradient $\nabla_{\bsm{\theta}_i}\h{\text{PR}}_t(\bsm{\theta}_t)$ of the ``baseline'' is given by
 \begin{align*}
  \nabla_{\bsm{\theta}}\h{\text{PR}}_t(\bsm{\theta}_t)  = -\frac{1}{n}\sum_{i=1}^{l}\sum_{j=1}^n \mathbf{z}_j + \left(2\xi\cdot\mathbf{I} - \mathbf{A} \right)\bsm{\theta}, \forall t\in[T].
 \end{align*}
 The performative risk is given by 
 \begin{align*}
   {\rm PR}(\bsm{\theta}_{t}) = \o{\mathbf{z}}^{\top} \bsm{\theta}_t + \bsm{\theta}_t^{\top}\mathbf{A}\bsm{\theta}_t + \xi\|\bsm{\theta}_t\|_2^2.
 \end{align*}
 All results are averaged over $100$ realizations.

\end{appendices}


\end{document}